\renewcommand*{\backrefalt}[4]{%
    \ifcase #1 \footnotesize{(Not cited.)}%
    \or        \footnotesize{(Cited on page~#2)}%
    \else      \footnotesize{(Cited on pages~#2)}%
    \fi}
\newcolumntype{C}[1]{>{\centering\let\newline\\\arraybackslash\hspace{0pt}}m{#1}}
\newcommand{\tred}[1]{#1}
\newtheorem{assumption}{Assumption}
\newtheorem{lemma}{Lemma}
\newtheorem{theorem}{Theorem}
\newtheorem{proposition}{Proposition}
\newtheorem{example}{Example}
\newtheorem{fact}{Fact}
\theoremstyle{plain}
\theoremstyle{definition}
\newtheorem{definition}[theorem]{Definition}
\begin{document}

\title{\bf Faster federated optimization under second-order similarity}

\author[]{Ahmed Khaled}
\author[]{Chi Jin}

\affil[]{Princeton University, United States}

\date{}

\maketitle

\begin{abstract}
Federated learning (FL) is a \tred{subfield} of machine learning where multiple clients try to collaboratively learn a model over a network under communication constraints. We consider finite-sum federated optimization under a second-order function similarity condition and strong convexity, and propose two new algorithms: SVRP and Catalyzed SVRP. This second-order similarity condition has grown popular recently, and is satisfied in many applications including distributed statistical learning and differentially private empirical risk minimization. The first algorithm, SVRP, combines approximate stochastic proximal point evaluations, client sampling, and variance reduction. We show that SVRP is communication efficient and achieves superior performance to many existing algorithms when function similarity is high enough. Our second algorithm, Catalyzed SVRP, is a Catalyst-accelerated variant of SVRP that achieves even better performance and uniformly improves upon existing algorithms for federated optimization under second-order similarity and strong convexity. In the course of analyzing these algorithms, we provide a new analysis of the Stochastic Proximal Point Method (SPPM) that might be of independent interest. Our analysis of SPPM is simple, allows for approximate proximal point evaluations, does not require any smoothness assumptions, and shows a clear benefit in communication complexity over ordinary distributed stochastic gradient descent.
\end{abstract}

\section{Introduction}

Federated Learning (FL) is a \tred{subfield} of machine learning where many clients (e.g.\ mobile phones or hospitals) collaboratively try to solve a learning task over a network without sharing their data. Federated Learning finds applications in many areas including healthcare, Internet of Things (IoT) devices, manufacturing, and natural language processing tasks~\citep{kairouz19_advan_open_probl_feder_learn,nguyen21_feder_learn_smart_healt,liu21_feder_learn_meets_natur_languag_proces}. One of the central problems of FL is federated or distributed optimization. Federated optimization has been the subject of intensive ongoing research effort over the past few years~\citep{wang21_field_guide_to_feder_optim}. The standard formulation of federated optimization is to solve a minimization problem:
\begin{equation}\label{eq:finite-sum-problem}
\min_{x \in \R^d} \left[ f(x) = \frac{1}{M} \sum_{m=1}^M f_m (x) \right],
\end{equation}
where each function $f_m$ represents the empirical risk of model $x$ calculated using the data on the $m$-th client, out of a total of $M$ clients. Each client is connected to a central server tasked with coordinating the learning process. We shall assume that the loss on each client is $\mu$-strongly convex.

Because the model dimensionality $d$ is often large in practice, the most popular methods for solving Problem~\eqref{eq:finite-sum-problem} are first-order methods that only access gradients and do not require higher-order derivative information. Such methods include distributed (stochastic) gradient descent, FedAvg (also known as Local SGD)~\citep{konecny16_feder_learn}, FedProx (also known as the Stochastic Proximal Point Method)~\citep{li18_feder_optim_heter_networ}, SCAFFOLD~\citep{karimireddy19_scaff}, and others. These algorithms typically follow the \emph{intermittent-communication framework}~\citep{woodworth21_min_max_compl_distr_stoch}:  the optimization process is divided into several communication rounds. In each of these rounds, the server sends a model to the clients, they do some local work, and then send back updated models. The server aggregates these models and starts another round.

Problem~\eqref{eq:finite-sum-problem} is an example of the well-studied finite-sum minimization problem, for which we have tightly matching lower and upper bounds~\citep{woodworth16_tight_compl_bound_optim_compos_objec}. The chief quality that differentiates federated optimization from the finite-sum minimization problem is that we mainly care about \emph{communication complexity} rather than the number of gradient accesses. That is, we care about the number of times that each node communicates with the central server rather than the number of local gradients accessed on each machine. This is because the cost of communication is often much higher than the cost of local computation, as \citet{kairouz19_advan_open_probl_feder_learn} state: \emph{``It is now well-understood that communication can be a primary bottleneck for federated learning.''}

One of the main sources of this bottleneck is that when all clients participate in the learning process, the cost of communication can be very high~\citep{shahid21_commun_effic_feder_learn}. This can be alleviated in part by using \emph{client sampling} (also known as \emph{partial participation}): by sampling only a small number of clients for each round of communication, we can reduce the communication burden while retaining or even accelerating the training process~\citep{chen20_optim_clien_sampl_feder_learn}.

Our main focus in this paper is to develop methods for solving Problem~\eqref{eq:finite-sum-problem} using client sampling and under the following assumption:

\begin{assumption}\label{asm:hessian-similarity}
(Second-order similarity). We assume that for all \(x, y \in \mathbb{R}^d\) we have
\begin{align*}
\frac{1}{M} \sum_{m=1}^M \sqn{\nabla f_m(x) - \nabla f(x) - \left[ \nabla f_m (y) - \nabla f(y) \right]} \leq \delta^2 \sqn{x-y}.
\end{align*}
For twice-differentiable functions \(f_1, f_2, \ldots, f_M\), this follows (see Section~\ref{sec:applications-of-second-order-similarity}) from requiring for all \(x \in \mathbb{R}^d\) that \[\frac{1}{M} \sum_{m=1}^M \sqn{\nabla^2 f_m(x) - \nabla^2 f(x)}_{\mathrm{op}} \leq \delta^2.\]
\end{assumption}

Assumption~\ref{asm:hessian-similarity}  is a slight generalization of the $\delta$-relatedness assumption used by \citet{arjevani15_commun_compl_distr_convex_learn_optim} in the context of quadratic optimization and by~\citet{sun19_distr_optim_based_gradien_track_revis} for strongly-convex optimization. It is also known as \emph{function similarity}~\citep{kovalev22_optim_gradien_slidin_its_applic}. Assumption~\ref{asm:hessian-similarity} holds (with relatively small $\delta$) in many practical settings, including statistical learning for quadratics~\citep{shamir13_commun_effic_distr_optim_using}, generalized linear models~\citep{hendrikx20_statis_precon_accel_gradien_method_distr_optim}, and semi-supervised learning~\citep{chayti22_optim_with_acces_to_auxil_infor}. We provide more details on the applications of second-order similarity in \Cref{sec:applications-of-second-order-similarity} in the supplementary material.  Under the \emph{full participation} communication model where all clients participate each iteration, several methods can solve Problem~\eqref{eq:finite-sum-problem} under Assumption~\ref{asm:hessian-similarity}, including ones that tightly match existing lower bounds~\citep{kovalev22_optim_gradien_slidin_its_applic}. In contrast, for the setting we consider (partial participation or client sampling), no lower bounds are known. The main question of our work is:

\begin{center}
\emph{
Can we design faster methods for federated optimization (Problem~\eqref{eq:finite-sum-problem}) under second-order similarity (Assumption~\ref{asm:hessian-similarity}) using client sampling?
}
\end{center}

\subsection{Contributions}

We answer the above question in the affirmative and show the utility of using client sampling in optimization under second-order similarity for strongly convex objectives. Our main contributions are as follows:

\begin{itemize}
\item \textbf{A new algorithm for federated optimization (SVRP, Algorithm~\ref{alg:svrp})}. We develop a new algorithm, SVRP (Stochastic Variance-Reduced Proximal Point), that utilizes client sampling to improve upon the existing algorithms for solving Problem~\ref{eq:finite-sum-problem} under second-order similarity. SVRP has a better dependence on the number of clients $M$ in its communication complexity than all existing algorithms (see Table~\ref{tab:finite-sum-problem-rates}), and achieves superior performance when the dissimilarity constant $\delta$ is small enough. SVRP trades off a higher computational complexity for less communication.
\item \textbf{Catalyst-accelerated SVRP}. By using Catalyst~\citep{lin15_univer_catal_first_order_optim}, we accelerate SVRP and obtain a new algorithm (Catalyzed SVRP) that improves the dependence on the effective conditioning from $\frac{\delta^2}{\mu^2}$ to $\sqrt{\frac{\delta}{\mu}}$. Catalyzed SVRP also has a better convergence rate (in number of communication steps, ignoring constants and logarithmic factors) than all existing accelerated algorithms for this problem under Assumption~\ref{asm:hessian-similarity}, reducing the dependence on the number of clients multiplied by the effective conditioning $\sqrt{\frac{\delta}{\mu}}$ from $\sqrt{\frac{\delta}{\mu}} M$ to $\sqrt{\frac{\delta}{\mu}} M^{3/4}$ (see Table~\ref{tab:finite-sum-problem-rates}).
\end{itemize}

While both SVRP and Catalyzed SVRP achieve a communication complexity that is better than algorithms designed for the standard finite-sum setting (like SVRG or SAGA), the computational complexity is a lot worse. This is because we tradeoff local computation complexity for a reduced communication complexity. Additionally, both SVRP and Catalyzed SVRP are based upon a novel combination of variance-reduction techniques and the stochastic proximal point method (SPPM). SPPM is our starting point, and we provide a new analysis for it that might be of independent interest. Our analysis of SPPM is simple, allows for approximate evaluations of the proximal operator, and extends to include variance reduction. In \Cref{sec:extens-constr-sett} we also consider the more general constrained optimization problem and provide similar convergence rates in that setting.

\section{Related work}

\begin{table}[]
\centering
\resizebox{\textwidth}{!}{
\begin{tabular}{@{}cccc@{}}
\toprule
\textbf{Method}                     & \textbf{Communication Complexity} & \textbf{Non-quadratic?} & \textbf{Reference}         \\ \midrule
DANE                                &  $\tilde{\mathcal{O}} \left( \frac{\delta^2}{\mu^2} {\red{M}} \right)$                                 & \xmark   & \citep{shamir13_commun_effic_distr_optim_using}                   \\ \addlinespace[0.1em]
DiSCO                               &  $\tilde{\mathcal{O}} \left( \sqrt{\frac{\delta}{\mu}} {\red{M}} \right)$                                 & \xmark   & \citep{zhang15_commun_effic_distr_optim_self}                  \\ \addlinespace[0.1em]
SCAFFOLD                            &  $\tilde{\mathcal{O}} \left( \frac{\delta+L}{\mu} {\red{M}} \right)$                                & \xmark   & \citep{karimireddy19_scaff}               \\ \addlinespace[0.1em]
SONATA                              &  $\tilde{\mathcal{O}} \left( \frac{\delta}{\mu} {\red{M}} \right)$                                 & \cmark   & \citep{sun19_distr_optim_based_gradien_track_revis}                 \\ \addlinespace[0.1em]
Accelerated SONATA                  &  $\tilde{\mathcal{O}} \left( \sqrt{\frac{\delta}{\mu}} {\red{M}} \right)$                                 & \cmark   & \citep{tian21_accel_distr_optim_under_simil}             \\ \addlinespace[0.1em]
Accelerated Extragradient           &  $\tilde{\mathcal{O}} \left( \sqrt{\frac{\delta}{\mu}} {\red{M}} \right)$                                 & \cmark   & \citep{kovalev22_optim_gradien_slidin_its_applic}               \\ \addlinespace[0.1em]
Lower Bound (no sampling)           &  $\tilde{\mathcal{O}} \left( \sqrt{\frac{\delta}{\mu}} {\red{M}} \right)$                                  & -                       & \citep{arjevani15_commun_compl_distr_convex_learn_optim}                     \\ \addlinespace[0.1em]
\textbf{SVRP}           &  $\tilde{\mathcal{O}} \left( \frac{\delta^2}{\mu^2} + {\red{M}} \right)$                                  & \cmark   & \textbf{Theorem~\ref{thm:svrp-rate} (NEW)}
\\ \addlinespace[0.1em]
\textbf{Catalyzed SVRP} & $\tilde{\mathcal{O}} \left( \sqrt{\frac{\delta}{\mu}} {\red{M}^{3/4}} + {\red{M}} \right)$                                  & \cmark   &
\textbf{Theorem~\ref{thm:catalyzed-svrp-rate} (NEW)}
\\ \bottomrule
\end{tabular}}
\caption{Communication complexity of different methods for solving Problem~\ref{eq:finite-sum-problem} under $\mu$-strong convexity, $L$-smoothness, and Assumption~\ref{asm:hessian-similarity}. \tred{$\tilde{\mathcal{O}}(\cdot)$} ignore polylogarithmic factors and constants. \tred{We use the client sampling model, which counts} exchanging a vector between the server and one of the clients as a single communication step.}
\label{tab:finite-sum-problem-rates}
\end{table}

\textbf{Distributed optimization under Assumption~\ref{asm:hessian-similarity}.} There is a long line of work analyzing distributed optimization under Assumption~\ref{asm:hessian-similarity} and strong convexity: \citet{shamir13_commun_effic_distr_optim_using} first gave DANE and analyzed it for quadratics, and showed the benefits of using second-order similarity in the setting of statistical learning for quadratic objectives. \citet{zhang15_commun_effic_distr_optim_self} developed the DiSCO algorithm that improved upon DANE for quadratics, and also analyzed it for self-concordant objectives. \citet{arjevani15_commun_compl_distr_convex_learn_optim} gave a lower bound that matched the rate given by DANE, though without allowing for client sampling. The theory of DANE was later improved in~\citep{yuan19_conver_distr_approx_newton_method}, allowing for local convergence for non-quadratic objectives. Another algorithm SCAFFOLD~\citep{karimireddy19_scaff} can be seen as a variant of DANE and is also analyzed for quadratics. In the context of decentralized optimization, \citet{sun19_distr_optim_based_gradien_track_revis} gave SONATA and showed a similar rate to DANE but for general strongly convex objectives, then \citet{tian21_accel_distr_optim_under_simil} improved the convergence rate of SONATA by acceleration. Finally, \citet{kovalev22_optim_gradien_slidin_its_applic} improved the convergence rate of accelerated SONATA even further by removing extra logarithmic factors. We give an overview of all related results in Table~\ref{tab:finite-sum-problem-rates} and provide more thorough comparisons in the theory and algorithms section.

A parallel line of work considers Assumption~\ref{asm:hessian-similarity} as a special case of \emph{relative strong convexity and smoothness} and utilizes methods based on mirror descent. \citet{hendrikx20_statis_precon_accel_gradien_method_distr_optim} take this view and consider an accelerated variant of mirror descent in the distributed setting, while \citet{dragomir21_fast_stoch_bregm_gradien_method} also consider sampling and variance-reduction. Because this setting is much more general, it is more challenging to prove tight convergence rates, and in the worst-case the convergence rates are no better than \tred{the minimax rate under only} smoothness and strong convexity.

Another line of work considers federated optimization for Problem~\ref{eq:finite-sum-problem} under Assumption~\ref{asm:hessian-similarity} but \emph{without} convexity, as well as optimization under the scenario where the number of clients is very large or infinite. Examples of this include MIME~\citep{karimireddy20_mime}, FedShuffleMVR~\citep{horvath22_fedsh}, as well as AuxMom and AuxMVR~\citep{chayti22_optim_with_acces_to_auxil_infor}. Our focus in this work is on the convex setting.


\textbf{Stochastic Proximal Point Method.} The stochastic proximal point method (SPPM) is our starting point in developing SVRP and its Catalyzed variant. SPPM is well-studied, and we only briefly review some results on its convergence: \citet{bertsekas2011_proximal_methods} terms it the incremental proximal point method, and provides analysis showing nonasymptotic convergence around a solution under the assumptions that each $f_m$ is Lipschitz. \citet{ryu17_stochastic_prox} provide convergence rates for the algorithm, and observe that it is stable to learning rate misspecification unlike stochastic gradient descent (SGD).  \citet{patrascu17_nonas_conver_stoch_proxim_point} analyze SPPM for constrained optimization with random projections. \citet{asi19_stoch_approx_proxim_point_method} study a more general method (AProx) that includes SPPM as a special case, giving stability and convergence rates under convexity. \citet{asi20_minibatch_stochastic_aprox,chadha21_accel_optim_paral} further consider minibatching and convergence under interpolation for AProx.

In the context of federated learning for non-convex optimization, SPPM is also known as FedProx~\citep{li18_feder_optim_heter_networ} and has been analyzed in several settings, see e.g.~\citep{yuan22_conver_fedpr}. Unfortunately, in the non-convex setting the convergence rates achieved by FedProx/SPPM are no better than SGD. SPPM can be applied to more than just federated learning and has found applications in matrix and tensor completion~\citep{bumin21_efficient} and reinforcement learning~\citep{asadi21_proxim_iterat_deep_reinf_learn}. It can also be implemented efficiently for various optimization problems~\citep{shtoff22_effic_implem_increm_proxim_point_method}.

\textbf{Compression for communication efficiency.} Client sampling is one way of achieving communication efficiency, but there are other ways to do that in federated learning, such as compressing the vectors exchanged between the server and the clients. \citet{szlendak21_permut_compr_provab_faster_distr_noncon_optim,beznosikov22_compr_data_simil} consider federated optimization with compression under Assumption~\ref{asm:hessian-similarity} and obtain better convergence rates using specially-crafted compression operators. Because these techniques are orthogonal, exploring combinations of client sampling and compression may be a promising avenue for future work.

\section{Preliminaries}

We say that a differentiable function \(f\) is \(\mu\)-strongly convex (for $\mu > 0$) if for all \(x, y \in \mathbb{R}^d\) we have \(f(x) \geq f(y) + \ev{\nabla f(y), x-y} + \frac{\mu}{2} \sqn{x-y}\). We assume:

\begin{assumption}\label{asm:main-asm}
All the functions $f_1, f_2, \ldots, f_M$ in problem~\eqref{eq:finite-sum-problem} are $\mu$-strongly convex. We also assume Problem~\eqref{eq:finite-sum-problem} has a solution $x_{*} \in \R^d$.
\end{assumption}

The assumption that every $f_m$ is strongly convex is common in the analysis of federated learning algorithms~\citep{karimireddy19_scaff,mishchenko22_proxs} and is \tred{often} realized when each $f_m$ represents a convex empirical loss with $\ell_2$-regularization. We assume that $f$ has a minimizer $x_{*}$, and by strong convexity this minimizer is unique. The proximal mapping associated with a function \(h\) and stepsize \tred{\(\eta > 0\)} is defined as
\begin{align*}
\mathrm{prox}_{\eta h} (x) = \arg\min_{y \in \mathbb{R}^d} \left[ \eta h (y) + \frac{1}{2} \sqn{y-x} \right].
\end{align*}

When \(h\) is convex, the minimization problem has a \tred{unique} solution and hence the proximal operator is well-defined. We say that a point $y \in \R^d$ is a $b$-approximation of the proximal operator evaluated at $x$ if $\sqn{y - \prox_{\eta h} (x)} \leq b$. When $h = f_m$ for some $m \in [M]$, computing the proximal operator is equivalent to solving a local optimization problem on node $m$.

\section{Algorithms and theory}\label{sec:algorithm-theory}

In this section we develop the main algorithms of our work for solving Problem~\ref{eq:finite-sum-problem} under Assumption~\ref{asm:hessian-similarity}, smoothness, and strong convexity. In the first subsection, we analyze the stochastic proximal point method and explore some of its desirable properties. Next, we augment the stochastic proximal point method with variance-reduction and develop SVRP, a novel algorithm that improves upon existing algorithms using client sampling. Finally, we use the Catalyst acceleration framework~\citep{lin15_univer_catal_first_order_optim} to improve the convergence rate of SVRP. We give more details on how the algorithms are applied in a client-server setting in \Cref{sec:client-serv-form} in the supplementary.

\subsection{Basics: stochastic proximal point method}
\label{sec:sppm}
The starting point of our investigation is the stochastic proximal point method (SPPM) (Algorithm~\ref{alg:spp}), because the stochastic proximal point algorithm can achieve rates of convergence that are \emph{smoothness-independent} and which rely only on the strong convexity of the minimization objectives \citep{asi19_stoch_approx_proxim_point_method}. This makes SPPM a much better starting point for developing new algorithms if our goal is to obtain convergence rates that depend on the dissimilarity constant $\delta$ \tred{instead of the (typically larger) smoothness constant $L$}. The stochastic proximal point can be applied to the general \emph{stochastic expectation} problem, which has the following form:
\begin{align}\label{eq:expectation-problem}
\min_{x \in \mathbb{R}^d} \left[ f(x) = \ec[\xi \sim \mathcal{D}]{f_{\xi} (x)} \right],
\end{align}
where each \(f_{\xi}\) is \(\mu\)-strongly convex and differentiable. Observe that Problem~\eqref{eq:finite-sum-problem} is a special case of \eqref{eq:expectation-problem} where $\mathcal{D}$ has finite support. We assume that \(f\) has a (necessarily unique) minimizer \(x_{*}\). In this formulation, sampling a new \(\xi \sim \mathcal{D}\) corresponds to sampling a node/\tred{client} in federated optimization, and then a proximal iteration corresponds to a local optimization problem to be solved on node \(\xi\). The next theorem characterizes the convergence of SPPM in this setting:

\begin{algorithm}[t]
\caption{Stochastic Proximal Point Method (SPPM)}\label{alg:spp}
\KwData{Stepsize \( \eta \), initialization $x_0$, number of steps \(K\), proximal solution accuracy $b$.}
\For{$k=0, 1, 2, \ldots, K-1$}{
Sample \(\xi_k \sim \mathcal{D}\). \\
Update with $b$-approximation of the stochastic proximal point operator:
\[ x_{k+1} \simeq \prox_{\eta f_{\xi_k}} \left( x_k \right). \]
}
\end{algorithm}

\begin{theorem}\label{thm:spp-rate}
Suppose that \(f_{\xi}\) is almost surely \(\mu\)-strongly convex, let \(x_{*}\) be the minimizer of \(f\), and define \(\sigma_{*}^2 = \ecn[\xi \sim \mathcal{D}]{\nabla f_{\xi} (x_{*})}\). Suppose that for each $k$ we have that $x_{k+1}$ is a $b$-approximation of the proximal. 
Set \(\eta = \frac{\mu \epsilon}{2 \sigma_{*}^2}\) and $b \leq \frac{\epsilon}{4} \frac{(\eta\mu)^2}{(1+\eta\mu)^2}$. Then \(\ecn{x_K - x_{*}} \leq \epsilon\) after $K$ iterations:
\begin{align}\label{eq:spp-sample-complexity}\textstyle
K = \left( 1 + \frac{2 \sigma_{*}^2}{\mu^2 \epsilon} \right) \log \left( \frac{4 \sqn{x_0 - x_{*}}}{\epsilon} \right).
\end{align}
\end{theorem}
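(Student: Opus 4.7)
My plan is to analyze the exact proximal iterate first and then layer in the approximation error via Young's inequality. Let $\hat{x}_{k+1} = \prox_{\eta f_{\xi_k}}(x_k)$ denote the exact proximal step. Since $\eta f_{\xi_k}$ is $\eta\mu$-strongly convex, its proximal operator is $\frac{1}{1+\eta\mu}$-contractive. The key move is to apply this contractivity against a carefully chosen preimage of $x_*$: by the first-order optimality condition for the proximal map, $x_* = \prox_{\eta f_{\xi_k}}(x_* + \eta\nabla f_{\xi_k}(x_*))$, so
\[
\sqn{\hat{x}_{k+1} - x_*} \leq \frac{1}{(1+\eta\mu)^2}\sqn{x_k - x_* - \eta \nabla f_{\xi_k}(x_*)}.
\]
Taking conditional expectation over $\xi_k$ given $x_k$ and using $\mathbb{E}[\nabla f_{\xi_k}(x_*)] = \nabla f(x_*) = 0$ to cancel the cross term yields $\mathbb{E}[\sqn{\hat{x}_{k+1} - x_*} \mid x_k] \leq \frac{1}{(1+\eta\mu)^2}\bigl(\sqn{x_k - x_*} + \eta^2 \sigma_*^2\bigr)$.

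Next I would fold in the approximation error. Since $\sqn{x_{k+1} - \hat{x}_{k+1}} \leq b$, Young's inequality applied to the decomposition $\sqn{x_{k+1} - x_*} = \sqn{(x_{k+1} - \hat{x}_{k+1}) + (\hat{x}_{k+1} - x_*)}$ with parameter $\alpha = \eta\mu$ gives $\sqn{x_{k+1} - x_*} \leq (1+\eta\mu)\sqn{\hat{x}_{k+1} - x_*} + \frac{1+\eta\mu}{\eta\mu} b$. Substituting produces the clean one-step recursion
\[
\mathbb{E}[\sqn{x_{k+1} - x_*} \mid x_k] \leq \frac{1}{1+\eta\mu}\sqn{x_k - x_*} + \frac{\eta^2 \sigma_*^2}{1+\eta\mu} + \frac{(1+\eta\mu) b}{\eta\mu}.
\]

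From here the rest is routine bookkeeping: I would unroll the recursion across $K$ steps, bound the geometric sum by $\sum_{k=0}^{K-1}(1+\eta\mu)^{-k} \leq \frac{1+\eta\mu}{\eta\mu}$, and then plug in $\eta = \frac{\mu\epsilon}{2\sigma_*^2}$ together with the stated bound on $b$ to make the variance contribution at most $\epsilon/2$ and the approximation contribution at most $\epsilon/4$. Using $\log(1+\eta\mu) \geq \eta\mu/(1+\eta\mu)$, the specified $K$ is then enough to drive the initial-error term $(1+\eta\mu)^{-K}\sqn{x_0 - x_*}$ below $\epsilon/4$, for a total of $\mathbb{E}[\sqn{x_K - x_*}] \leq \epsilon$.

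The main conceptual subtlety is the choice of preimage in the contractivity step. A naive comparison to $\prox_{\eta f_{\xi_k}}(x_*)$ would introduce a residual of the form $\sqn{\prox_{\eta f_{\xi_k}}(x_*) - x_*}$, whose expectation is awkward to bound without a smoothness assumption on each $f_\xi$. Using $x_* + \eta\nabla f_{\xi_k}(x_*)$, the exact preimage of $x_*$ under $\prox_{\eta f_{\xi_k}}$, is what lets the unbiasedness $\mathbb{E}[\nabla f_{\xi_k}(x_*)] = 0$ eliminate the cross term and isolate exactly the variance $\sigma_*^2$, consistent with the theorem's smoothness-free hypothesis.
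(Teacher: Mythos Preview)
Your proposal is correct and essentially identical to the paper's proof: the paper uses exactly the same preimage trick $x_* = \prox_{\eta f_{\xi_k}}(x_* + \eta\nabla f_{\xi_k}(x_*))$ together with the $(1+\eta\mu)^{-1}$-contractivity of the prox, the same Young's inequality with parameter $\eta\mu$ to absorb the $b$-approximation error, and arrives at the same one-step recursion before unrolling and plugging in the stated parameters. The only cosmetic difference is the order of operations (the paper splits off the approximation error first and then bounds the exact prox, whereas you do the reverse), but the resulting recursion and the final bookkeeping are line-for-line the same.
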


The full proofs of all the theorems are relegated to the supplementary material. Theorem~\ref{thm:spp-rate} for SPPM essentially gives the same rate as~\citep[Proposition 5.3]{asi19_stoch_approx_proxim_point_method} but with a different proof technique. Our analysis relies on a straightforward application of the contractivity of the proximal, making it easier to extend to the variance-reduced case compared to the more involved analysis of \citep{asi19_stoch_approx_proxim_point_method}.

\textbf{Comparison with SGD.} The iteration complexity of SGD in the same setting is:
\begin{align}\label{eq:sgd-sample-complexity}
K_{\mathrm{SGD}} = \left( \frac{2 L}{\mu} + \frac{2 \sigma_{*}^2}{\mu^2 \epsilon} \right) \log \left( \frac{2 \sqn{x_0 - x_{*}}}{\epsilon} \right).
\end{align}
See~\citep{needell13_stoch_gradien_descen_weigh_sampl,gower19_sgd} for a derivation of this iteration complexity and~\citep{NEURIPS2019_deb54ffb} for a matching lower bound (up to log factors and constants). Observe that while the dependence on the stochastic noise term is the same in both \eqref{eq:spp-sample-complexity} and \eqref{eq:sgd-sample-complexity}, the iteration complexity of SGD also has an additional dependence on the condition number \(\kappa = \frac{L}{\mu}\) while the iteration complexity of the stochastic proximal point method is entirely independent of the magnitude of the smoothness constant \(L\). Thus, \emph{we can obtain a faster convergence rate than SGD if we have access to stochastic proximal operator evaluations}. In federated optimization, a stochastic proximal operator evaluation can be done entirely with local work and with no communication, and thus is relatively cheap. Indeed, \tred{the iteration complexity of} SPPM can even beat \emph{accelerated} SGD because acceleration only reduces the dependence on the condition number from \(L/\mu\) to \(\sqrt{L/\mu}\), whereas SPPM has no dependence on the condition number to begin with.

\textbf{Communication vs computation complexities.} Every iteration of SPPM involves two communication steps: the server sends the current iterate \(x_k\) to node \(\xi\), and then node \(\xi\) sends \(x_{k+1}\) back to the server. Thus the communication complexity of SPPM is the same as \cref{eq:spp-sample-complexity} multiplied by two. Each node needs to solve the optimization problem
\begin{align*}
\min_{x \in \mathbb{R}^d} \left[ f_{\xi} (x) + \frac{1}{2 \eta} \sqn{x - x_k} \right]
\end{align*}
up to the accuracy $b$ given in Theorem~\ref{thm:spp-rate}. If each \(f_{\xi}\) is \(L\)-smooth and \(\mu\)-strongly convex, this is a \(L+\frac{1}{2\eta}\)-smooth and \(\mu+\frac{1}{2 \eta}\)-strongly convex minimization problem, and thus can be solved to the desired precision \(b\) in \(\mathcal{O} \left( \sqrt{\frac{\eta L + 1}{\eta \mu + 1}} \log \frac{1}{b} \right)\) local gradient accesses using accelerated gradient descent \citep{nesterov18_lectures_cvx_opt}. When \(\eta = \frac{\mu \epsilon}{4 \sigma_{*}^2}\), this corresponds to a per-iteration computational complexity of $\mathcal{O} \left( \sqrt{\frac{\mu L \epsilon + 4 \sigma_{*}^2}{\mu^2 \epsilon + 4 \sigma_{*}^2}} \log \frac{1}{b} \right)$. Note that if \(f_{\xi}\) itself represents the loss on a local dataset (as is common in federated learning), we may use methods tailored for stochastic or finite-sum problems such as Random Reshuffling~\citep{mishchenko20_random_reshuf}, Katyusha~\citep{allen-zhu16_katyus}, or (accelerated) SVRG. Thus we see that, compared to SGD, \emph{SPPM trades off a higher computational complexity for a lower communication complexity}.

\textbf{Related work.} We compare against related convergence results for SPPM in Section~\ref{sec:related-work-sppm} in the supplementary material.

\subsection{The SVRP algorithm}
\label{sec:svrp-algorithm}

The rate of SPPM, while independent of the condition number \(\kappa = \frac{L}{\mu}\), is sublinear. While a sublinear rate is optimal for the stochastic oracle \citep{foster19_compl_makin_gradien_small_stoch_convex_optim,woodworth21_even_more_optim_stoch_optim_algor}, it is suboptimal in the setting of smooth finite-sum minimization~\citep{woodworth16_tight_compl_bound_optim_compos_objec}. In this section, we develop a novel variance-reduced method, SVRP (Stochastic Variance-Reduced Proximal Method, Algorithm~\ref{alg:svrp}), that converges linearly and relies only on second-order similarity.

Variance-reduced methods such as SVRG~\citep{johnson13_svrg}, SAGA~\citep{defazio14_saga} or SARAH~\citep{nguyen17_sarah} improve the convergence rate of SGD for finite-sum problems by constructing gradient estimators whose variance vanishes over time. While SGD is used as the building block in most existing variance-reduced methods, in the preceding section we saw that the stochastic proximal point method is more communication-efficient; It stands to reason that variance-reduced variants of SPPM could also be more communication-efficient under second-order similarity. We apply variance-reduction to SPPM and develop our algorithm in the next two steps.

\begin{algorithm}[t]
\caption{Stochastic Variance-Reduced Proximal Point (SVRP) Method}\label{alg:svrp}
\KwData{Stepsize \( \eta \), initialization $x_0$, number of steps \(K\), communication probability \(p\), local solution accuracy $b$.}
Initialize \(w_0 = x_0\).

\For{$k=0, 1, 2, \ldots, K-1$}{
Sample $m_k$ uniformly at random from $[M]$. \\
Set
\begin{align*}
g_k = \nabla f(w_k) - \nabla f_{m_k} (w_k).
\end{align*} \\
Compute a $b$-approximation of the stochastic proximal point operator associated with $f_{m_k}$:
\begin{equation}
\label{eq:6}
x_{k+1} \simeq \prox_{\eta f_{m_k}} \left( x_k - \eta g_k \right).
\end{equation} \\
Sample \(c_k \sim \mathrm{Bernoulli}(p)\) and update $w_{k+1} = \begin{cases}
          x_{k+1}  & \text { if } c_k = 1, \\
          w_k  & \text  { if  } c_k = 0.
          \end{cases}$
}
\end{algorithm}

\textbf{Step (a): Adapting SVRG-style variance reduction to SPPM.} We use SVRG-style variance-reduction coupled with the stochastic proximal point method as a base. To see how, we start with SGD iterations: at each step $k$ we sample node $m_k$ uniformly at random from $[M]$ and update as
\begin{align*}
x_{k+1} &= x_k - \eta \nabla f_{m_k} (x_k).
\end{align*}
The main problem with SGD is that the stochastic gradient estimator has a non-vanishing variance that slows down convergence. SVRG \citep{johnson13_svrg} modifies SGD by adding a correction term $g_k$ at each iteration:
\begin{align*}
g_k &= \nabla f(w_k) - \nabla f_{m_k} (w_k), \\
x_{k+1} &= x_k - \eta \left[ ∇ f_{m_k} (x_k) + g_k \right],
\end{align*}
where \(w_k\) is an anchor point that is periodically reset to the current iterate. Thus we added the correction term \(g_k\) in order to reduce the variance in the gradient estimator and allow the algorithm to converge. We propose to do the same for the stochastic proximal point method, where we instead change the argument to the proximal operator:
\begin{align*}
g_k &= \nabla f(w_k) - \nabla f_{m_k} (w_k), \\
x_{k+1} &= \prox_{\eta f_{m_k}} \left( x_k - \eta g_k \right).
\end{align*}

We can expect the correction term $g_k$ to function similarly and allow SPPM to converge faster.


\textbf{Step (b): Removing the loop.} Rather than reset the anchor point \(w_k\) to the current iterate at fixed intervals, SVRP is instead \emph{loopless}: it uses a random coin flip to determine when to communicate and re-compute full gradients. Loopless variants of variance-reduced algorithms such as L-SVRG~\citep{kovalev19_dont_jump_throug_hoops_remov_those_loops} and Loopless SARAH~\citep{li19_conver_sarah_beyon} enjoy the same convergence guarantees as their ordinary counterparts, but with superior empirical performance and simpler analysis.

Combining the previous two steps gives us SVRP (Algorithm~\ref{alg:svrp}), and we give the convergence result next.

\begin{theorem}\label{thm:svrp-rate}
(Convergence of SVRP). Suppose that Assumptions~\ref{asm:hessian-similarity} and~\ref{asm:main-asm} hold, and that each $x_{k+1}$ is a $b$-approximation of the proximal~\eqref{eq:6}. Let $\tau = \min \left \{ \frac{\eta \mu}{1+2\eta\mu}, \frac{p}{2} \right \}$. Set the parameters of Algorithm~\ref{alg:svrp} as $\eta = \frac{\mu}{2\delta^2}$, $b \leq \frac{\epsilon \tau (\eta \mu)^2}{2(1+\eta\mu)^3}$, and $p = \frac{1}{M}$. Then the final iterate $x_K$ satisfies $\ecn{x_K - x_{*}} \leq \epsilon$ provided that the total number of iterations $K$ is larger than $T_{\mathrm{iter}}$:
\begin{align*} %
\mathrm{T}_{\mathrm{iter}} = \tilde{\mathcal{O}} \left( \left( M + \frac{\delta^2}{\mu^2} \right) \log \frac{1}{\epsilon} \right).
\end{align*}
\end{theorem}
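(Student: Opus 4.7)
\textbf{Proof proposal for Theorem~\ref{thm:svrp-rate}.}

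My plan is to analyze SVRP through a Lyapunov function that couples the current iterate with the anchor, namely
\[
\Phi_k = \ecn{x_k - x_*} + C \ecn{w_k - x_*},
\]
for a constant $C > 0$ to be chosen later, and to show a one-step contraction of the form $\ec{\Phi_{k+1} \mid \mathcal{F}_k} \leq (1-\tau)\Phi_k + (\text{noise from the inexact prox})$. Unrolling this recursion then yields the claimed iteration complexity once we balance the geometric decay against the accumulated inexactness.

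The first main step is to analyze the exact proximal iterate $\tilde{x}_{k+1} = \prox_{\eta f_{m_k}}(x_k - \eta g_k)$. Because $\nabla f(x_*) = 0$, the point $x_*$ is itself a fixed point of the map $u \mapsto \prox_{\eta f_{m_k}}(u + \eta \nabla f_{m_k}(x_*))$. Combining the $\frac{1}{1+\eta\mu}$-contractivity of $\prox_{\eta f_{m_k}}$ (from $\mu$-strong convexity of $f_{m_k}$) with this fixed-point representation gives
\[
\sqn{\tilde{x}_{k+1} - x_*} \leq \tfrac{1}{(1+\eta\mu)^2}\sqn{x_k - x_* - \eta v_k},
\]
where $v_k = \nabla f(w_k) - \nabla f_{m_k}(w_k) + \nabla f_{m_k}(x_*)$. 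Since $m_k$ is uniform on $[M]$ and $\nabla f(x_*) = 0$, the random vector $v_k$ has mean zero, and Assumption~\ref{asm:hessian-similarity} evaluated at the pair $(w_k, x_*)$ directly yields $\ecn{v_k \mid \mathcal{F}_k} \leq \delta^2 \sqn{w_k - x_*}$. Expanding the square and using $\ec{v_k} = 0$ to kill the cross term gives
\[
\ec{\sqn{\tilde{x}_{k+1} - x_*} \mid \mathcal{F}_k} \leq \tfrac{1}{(1+\eta\mu)^2}\bigl(\sqn{x_k - x_*} + \eta^2 \delta^2 \sqn{w_k - x_*}\bigr).
\]

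Next, I combine this with the loopless anchor update $\ec{\sqn{w_{k+1} - x_*} \mid \mathcal{F}_k} = p\,\ec{\sqn{\tilde{x}_{k+1} - x_*} \mid \mathcal{F}_k} + (1-p)\sqn{w_k - x_*}$ to obtain a recursion for $\Phi_k$ in terms of two coefficients: one on $\sqn{x_k - x_*}$ equal to $\frac{1+Cp}{(1+\eta\mu)^2}$, and one on $\sqn{w_k - x_*}$ equal to $\frac{(1+Cp)\eta^2\delta^2}{(1+\eta\mu)^2} + C(1-p)$. The choice $\eta = \frac{\mu}{2\delta^2}$ makes $\eta^2\delta^2 = \frac{\eta\mu}{2}$, which is precisely what is needed to make the $w_k$-coefficient absorb into $C(1-\tau)$. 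Picking $C$ of order $\frac{\eta^2\delta^2}{p(1+\eta\mu)^2}$ and rebalancing then produces the stated contraction factor $1 - \tau$ with $\tau = \min\{\frac{\eta\mu}{1+2\eta\mu}, \frac{p}{2}\}$. Plugging in $\eta = \frac{\mu}{2\delta^2}$ and $p = \frac{1}{M}$ yields $\frac{1}{\tau} = \tilde{\mathcal{O}}(M + \frac{\delta^2}{\mu^2})$, matching $T_{\text{iter}}$.

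The step I expect to be most delicate is handling the inexact proximal evaluation, because the error $\sqn{x_{k+1} - \tilde{x}_{k+1}} \leq b$ propagates through both coordinates of the Lyapunov function (it enters $w_{k+1}$ on communication rounds) and interacts with the Young's-inequality slack needed when passing from $\tilde{x}_{k+1}$ to $x_{k+1}$. Concretely, I use the splitting $\sqn{x_{k+1} - x_*} \leq (1+\alpha)\sqn{\tilde{x}_{k+1} - x_*} + (1+\tfrac{1}{\alpha})b$ and pick $\alpha$ small enough (roughly $\alpha \asymp \eta\mu$) so that $(1+\alpha)/(1+\eta\mu)^2 \leq 1-\tau'$ for a slightly smaller but still $\Theta(\tau)$ rate, while the additive $b$-term accumulates to at most $\frac{O(b)(1+\eta\mu)^3}{\tau(\eta\mu)^2}$ after unrolling. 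Setting $b \leq \frac{\epsilon \tau (\eta\mu)^2}{2(1+\eta\mu)^3}$ makes this residual at most $\epsilon/2$, while the geometric term $(1-\tau)^K \Phi_0 \leq \epsilon/2$ after $K = \tilde{\mathcal{O}}(\tau^{-1} \log\frac{1}{\epsilon})$ iterations completes the bound. The final cleanup is noting $\Phi_0 = (1+C)\sqn{x_0 - x_*}$ contributes only logarithmically, so the stated $\tilde{\mathcal{O}}((M + \delta^2/\mu^2)\log\frac{1}{\epsilon})$ iteration complexity follows.
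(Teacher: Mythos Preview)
Your proposal is correct and follows essentially the same approach as the paper: the same Lyapunov function coupling $\sqn{x_k-x_*}$ with $\sqn{w_k-x_*}$ (the paper's weight is $\alpha=\eta\mu/p$, which coincides in order with your $C$ once $\eta=\mu/(2\delta^2)$), the same fixed-point-plus-contractivity bound on $\tilde{x}_{k+1}$, the same use of Assumption~\ref{asm:hessian-similarity} at $(w_k,x_*)$ to control the variance, and the same Young's-inequality treatment of the inexact prox. Two small points to tidy up: the anchor update in the algorithm uses the \emph{inexact} iterate $x_{k+1}$, not $\tilde{x}_{k+1}$, so your displayed recursion for $w_{k+1}$ should reflect that (you do acknowledge this later); and the paper's precise Young's-inequality slack is $a=\frac{(\eta\mu)^2}{1+2\eta\mu}$ rather than $\asymp\eta\mu$, which is what actually produces the $(\eta\mu)^2$ in the denominator of your accumulated error term.
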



\textbf{Communication complexity.} We consider one communication to represent the server exchanging one vector with one client. At each step of SVRP, the server samples a client \(m_k\) and sends them the current iterate \(x_k\), the client then computes \(g_k\) and \(x_{k+1}\) locally, and sends \(x_{k+1}\) back to the server. Then, with probability \(p\), the server changes the anchor point \(w_{k+1}\) to \(x_{k+1}\), sends \(w_{k+1}\) to the new clients, each client \(m\) then computes \(\nabla f_m (w_{k+1})\) and sends it back to the server, which averages the received gradients to get \(\nabla f(w_{k+1})\); The server then proceeds to send \(\nabla f(w_{k+1})\) back to all the clients. Thus the expected communication complexity is
\begin{align*}
\ec{\mathrm{T}_{\mathrm{comm}}} = \left( 2 + 3 p M \right) \mathrm{T}_{\mathrm{iter}} = 5 \mathrm{T}_{\mathrm{iter}} = \tilde{\mathcal{O}} \left( \left( M + \frac{\delta^2}{\mu^2} \right) \log \frac{1}{\epsilon} \right).
\end{align*}

Compared to the SVRG communication complexity $\tilde{\mathcal{O}} \left( \left( M + \frac{L}{\mu} \right) \log \frac{1}{\epsilon} \right)$~\citep{sebbouh19_towar_closin_gap_between_theor_pract_svrg}, this replaces the $L/\mu$ dependence with a $\delta^2/\mu^2$ dependence. This is better when $\delta \leq \sqrt{L \mu}$.

\textbf{Comparison with existing results.} The lower bound related the most to our setting is given by \citep{arjevani15_commun_compl_distr_convex_learn_optim}, only considers the setting of full participation (i.e.\ no client sampling) and corresponds to a communication complexity of $\tilde{\mathcal{O}} \left( \sqrt{\frac{\delta}{\mu}} M \right)$. Our result improves upon this when $M > \left( \delta/\mu \right)^{3/2}$. Note that while our result attains a superior dependence on $M$, the dependence on the effective conditioning $\delta/\mu$ is worse. In the next section, we shall improve this via acceleration. Note that the best existing results for optimization under second-order similarity, such as DiSCO~\citep{zhang15_commun_effic_distr_optim_self}, Accelerated SONATA~\citep{tian21_accel_distr_optim_under_simil}, and Extragradient sliding~\citep{kovalev22_optim_gradien_slidin_its_applic}, match this lower bound~\citep[Table 1]{kovalev22_optim_gradien_slidin_its_applic}. Therefore, our result shows that significantly better convergence can be obtained under second-order similarity when using client sampling.

\textbf{Computational complexity}. Similar to the discussion of the computational complexity of SPPM in the previous section, here too we essentially need to solve on each sampled device an optimization problem involving an $(L+\eta)$-smooth and $(\mu+\eta)$-strongly convex function up to the accuracy $b$. This can be done using accelerated gradient descent in $\tilde{\mathcal{O}} \left( \sqrt{\frac{L+\eta}{\mu + \eta}} \log \frac{1}{b} \right) = \tilde{\mathcal{O}} \left( \sqrt{\frac{\frac{L \delta^2}{\mu} + 1}{2 \delta^2 + 1}} \log \frac{1}{b} \right)$ gradient accesses. Compared to SGD or SVRG, we can see that we trade off a higher local Computational complexity for a lower number of communications steps to convergence.

\textbf{Similar methods.} Point-SAGA~\citep{defazio16_simpl_pract_accel_method_finit_sums} uses the stochastic proximal point method coupled with SAGA-style variance reduction. Point-SAGA achieves optimal performance under smoothness and strong convexity, but it inherits the heavy memory requirements of SAGA and its performance under Assumption~\ref{asm:hessian-similarity} is unknown. Another similar algorithm is SCAFFOLD~\citep{karimireddy19_scaff}: SCAFFOLD uses a similar SVRG-style correction sequence, and their method can be viewed as approximately solving a local unregularized minimization problem at each step. Unfortunately, their analysis under Assumption~\ref{asm:hessian-similarity} only holds for quadratic objectives and without client sampling.

\subsection{Accelerating SVRP via Catalyst}

In this section we improve SVRP by augmenting it with acceleration. Acceleration is an effective way to improve the convergence rate of first-order gradient methods, capable of achieving better convergence rates in deterministic~\citep{nesterov18_lectures_cvx_opt}, finite-sum~\citep{allen-zhu16_katyus}, and stochastic~\citep{jain17_accel_stoch_gradien_descen_least_squar_regres} settings. Catalyst~\citep{lin15_univer_catal_first_order_optim} is a generic framework for accelerated first-order methods that we utilize to accelerate SVRP. Catalyst (Algorithm~\ref{alg:catalyst}) is essentially an accelerated proximal point method that relies on a solver $\mathcal{A}$ to solve the proximal point iterations. We use SVRP (Algorithm~\ref{alg:svrp}) as the solver $\mathcal{A}$, and term the resulting algorithm \emph{Catalyzed SVRP}.

Catalyst gives us fast linear convergence provided that we solve certain regularized subproblems to a high accuracy. To do that, we run SVRP (as the method $\mathcal{A}$) with an appropriate set of parameters and for a fixed number of iterations. 
The details of our application of Catalyst are given in the supplementary material (Section~\ref{sec:proofs-catalyst+svrp}). The complexity of Catalyzed SVRP is given next.
\begin{algorithm}[t]
\caption{Catalyst}\label{alg:catalyst}
\KwData{Initialization \(x_0\), smoothing parameter \(\gamma\), algorithm \(\mathcal{A}\), strong convexity constant \(\mu\).}
Initialize \(y_0 = x_0\). \\
Initialize \(q = \frac{\mu}{\mu + \gamma}\) and \(\alpha_0 = \sqrt{q}\). \\
\For{$t=1, 2, \ldots, T-1$}{
Find \(x_t\) using \(\mathcal{A}\) starting from the initialization $x_{t-1}$:
\begin{align}\label{eq:9}
x_t \approx \argmin_{x \in \mathbb{R}^d} \left\{ h_t(x) \eqdef f(x) + \frac{\gamma}{2} \sqn{x - y_{t-1}} \right\}.
\end{align} \\
Update \(\alpha_t \in (0, 1)\) as
\begin{align*}
\alpha_t^2 = (1-\alpha_t)\alpha_{t-1}^2 + q \alpha_t.
\end{align*} \\
Compute \(y_t\) using an extrapolation step
\begin{align*}
y_t = x_t + \beta_t (x_t - x_{t-1}) && \text { with } \beta_t = \frac{\alpha_{t-1} (1-\alpha_{t-1})}{\alpha_{t-1}^2 + \alpha_t}.
\end{align*}
}
\end{algorithm}

\begin{theorem}\label{thm:catalyzed-svrp-rate}
(Catalyzed SVRP convergence rate). For Catalyst (Algorithm~\ref{alg:catalyst}) with SVRP (Algorithm~\ref{alg:svrp}) as the inner solver $\mathcal{A}$, suppose Assumptions~\ref{asm:hessian-similarity} and~\ref{asm:main-asm} hold, and that $f$is $L$-smooth. Then for a specific choice of the algorithm parameters, the expected communication complexity $\ec{\mathrm{T}_{\mathrm{comm}}}$ to reach an accuracy $\epsilon$, up to polylogarithmic factors and absolute constants, is
\begin{align}\label{eq:15}
\ec{\mathrm{T}_{\mathrm{comm}}} = \tilde{\mathcal{O}} \left( \left( M + \sqrt{\frac{\delta}{\mu}} M^{3/4} \right) \log \frac{1}{\epsilon} \right).
\end{align}
\end{theorem}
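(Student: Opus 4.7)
The plan is to apply the Catalyst framework~\citep{lin15_univer_catal_first_order_optim} around SVRP (Theorem~\ref{thm:svrp-rate}) as the inner solver $\mathcal{A}$, and then optimize the smoothing parameter $\gamma$ to balance outer against inner work.

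First, I would verify that each Catalyst subproblem fits the assumptions of Theorem~\ref{thm:svrp-rate}. Writing $h_t(x) = \frac{1}{M}\sum_{m=1}^{M} h_{t,m}(x)$ with $h_{t,m}(x) = f_m(x) + \frac{\gamma}{2}\|x-y_{t-1}\|^2$, each component is $(\mu+\gamma)$-strongly convex, and because the added quadratic is identical across clients, $\nabla h_{t,m}(x) - \nabla h_t(x) = \nabla f_m(x) - \nabla f(x)$; Assumption~\ref{asm:hessian-similarity} therefore transfers to $\{h_{t,m}\}$ with the \emph{same} constant $\delta$. Moreover, the proximal of $\eta h_{t,m}$ is as easy to evaluate locally as that of $\eta f_m$ since only the quadratic anchor $y_{t-1}$ shifts, so SVRP's per-iteration client cost is essentially unchanged. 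Applying Theorem~\ref{thm:svrp-rate} to the subproblem thus yields inner communication complexity
\begin{equation*}
\mathrm{T}_{\mathrm{inner}} = \tilde{\mathcal{O}}\left( M + \frac{\delta^2}{(\mu+\gamma)^2} \right)
\end{equation*}
per Catalyst outer step.

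Next I would plug this into the standard Catalyst convergence theorem, which delivers $\mathrm{T}_{\mathrm{outer}} = \tilde{\mathcal{O}}\bigl(\sqrt{(\mu+\gamma)/\mu}\bigr)$ outer steps to reach accuracy $\epsilon$, and hence total expected communication
\begin{equation*}
\ec{\mathrm{T}_{\mathrm{comm}}} = \tilde{\mathcal{O}}\left( \sqrt{\frac{\mu+\gamma}{\mu}} \left( M + \frac{\delta^2}{(\mu+\gamma)^2} \right) \right).
\end{equation*}
Writing $\Gamma = \mu+\gamma$ and balancing the two summands after the square-root prefactor is distributed leads to the choice $\Gamma \propto \delta/\sqrt{M}$, which is admissible ($\gamma \geq 0$) precisely when $\delta \gtrsim \mu\sqrt{M}$; in that regime both terms collapse to $\sqrt{\delta/\mu}\, M^{3/4}$. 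When $\delta < \mu\sqrt{M}$ the optimum lies at the boundary $\gamma = 0$, recovering vanilla SVRP with the $\tilde{\mathcal{O}}(M + \delta^2/\mu^2) = \tilde{\mathcal{O}}(M)$ rate. Combining the two regimes yields the claimed $\tilde{\mathcal{O}}(M + \sqrt{\delta/\mu}\, M^{3/4})$ bound.

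The main obstacle is handling inexactness consistently across the two loops. Catalyst requires each subproblem to be solved to a prescribed, geometrically decreasing accuracy in \emph{function value}, whereas Theorem~\ref{thm:svrp-rate} delivers accuracy in \emph{expected squared distance} to the subproblem minimizer; I would bridge this via smoothness of $h_t$ (which is $(L+\gamma)$-smooth), absorbing the resulting $\log((L+\gamma)/\mu)$ factor into $\tilde{\mathcal{O}}$. Because SVRP's guarantee is in expectation rather than almost sure, one must either invoke an expectation-form Catalyst result or combine Markov's inequality with restarting. Finally, the inner proximal-accuracy parameter $b$ appearing in Theorem~\ref{thm:svrp-rate} must be propagated through both layers so that the compounded approximation errors still allow termination at $\epsilon$-accuracy, which will force $b$ to be polynomially small in $\epsilon$ but only contribute logarithmic overhead to the communication bound.
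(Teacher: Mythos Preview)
Your proposal is correct and follows essentially the same route as the paper: verify that the Catalyst subproblems inherit $(\mu+\gamma)$-strong convexity and the same $\delta$, invoke the SVRP rate on each subproblem, multiply by the $\tilde{\mathcal{O}}(\sqrt{(\mu+\gamma)/\mu})$ outer-loop count from the Catalyst theorem, and choose $\mu+\gamma=\delta/\sqrt{M}$ when $\delta\ge\mu\sqrt{M}$ (else $\gamma=0$). The only notable divergence is that the paper sidesteps the $b$-propagation issue you flag by simply setting $b=0$ for the Catalyzed variant, and it handles the expectation/function-value bridge exactly as you anticipate via $(L+\gamma)$-smoothness inside a log factor.
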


\textbf{Improvement over SVRP.} Compared to ordinary SVRP, \tred{observe that since $\sqrt{\frac{\delta}{\mu}} M^{3/4} \le M + (\delta/\mu)^2$, \Cref{thm:catalyzed-svrp-rate} gives a communication complexity that is always better than what \Cref{thm:svrp-rate} provides up to logarithmic factors.}
In particular, the rate of Catalyzed SVRP given by \eqref{eq:15} is \tred{strictly} better than the rate of vanilla SVRP when $\frac{\delta}{\mu} \leq \sqrt{M}$. Note that unlike the communication complexity given by \Cref{thm:svrp-rate}, the rate given by \Cref{thm:catalyzed-svrp-rate} requires smoothness, but the smoothness constant $L$ only shows up inside a logarithmic factor, and hence does not show up in \cref{eq:15} (as $\tilde{\mathcal{O}}$ notation hides polylogarithmic factors).

\textbf{Comparison with the smooth setting.} When each function is $L$-smooth and $\mu$-strongly convex, Accelerated variants of SVRG reach an $\epsilon$-accurate solution in $\tilde{\mathcal{O}} \left( M + \sqrt{\frac{L}{\mu}} M^{1/2} \right)$ communication steps and $\mathcal{O} (1)$ local work per node \citep{lin15_univer_catal_first_order_optim}. The communication complexity Catalyzed SVRP achieves is better when $\delta \leq \frac{L}{\sqrt{M}}$, at the cost of more local computation.

\textbf{Improvement over prior work.} The best existing algorithms for solving Problem~\eqref{eq:finite-sum-problem} under Assumption~\ref{asm:hessian-similarity} are Accelerated SONATA~\citep{tian21_accel_distr_optim_under_simil} and Extragradient sliding~\citep{kovalev22_optim_gradien_slidin_its_applic}, both of which achieve a communication complexity of $\tilde{\mathcal{O}} \left( \sqrt{\frac{\delta}{\mu}} M \right)$. The rate given by \cref{eq:15} is better than this rate, achieving a smaller dependence on the number of nodes $M$. \citet{arjevani15_commun_compl_distr_convex_learn_optim} give a lower bound matching the rate $\tilde{\mathcal{O}} \left( \sqrt{\frac{\delta}{\mu}} M \right)$: ignoring polylogarithmic factors, our result improves upon their lower bound through the usage of client sampling. This improvement is possible because \citet{arjevani15_commun_compl_distr_convex_learn_optim} use an oracle model that does not allow for client sampling. Thus, \emph{Catalyzed SVRP improves upon all existing methods for optimization under Assumption~\ref{asm:hessian-similarity}, smoothness, and strong convexity when $\delta$ is sufficiently small.}

\textbf{Computational complexity.} Catalyzed SVRP uses SVRP as an inner solver, and hence inherits its computational requirements: at each iteration $k$, we have to sample a node $m_k$ and evaluate the proximal operator associated with its local objective $f_{m_k}$. However, we set the solution accuracy to $b=0$ when applying Catalyst to SVRP, i.e.\ we require exact stochastic proximal operator evaluation; This is only done for convenience of analysis, and we believe the analysis can be extended to the case of nonzero but small $b$. In practice, the proximal operation can be computed exactly in many cases, and otherwise we can compute the proximal operator to machine accuracy using accelerated gradient descent.

\section{Experiments}
\begin{figure}[ht]

  \setlength{\lineskip}{0pt}
	\centering
	\includegraphics[width=0.32\linewidth]{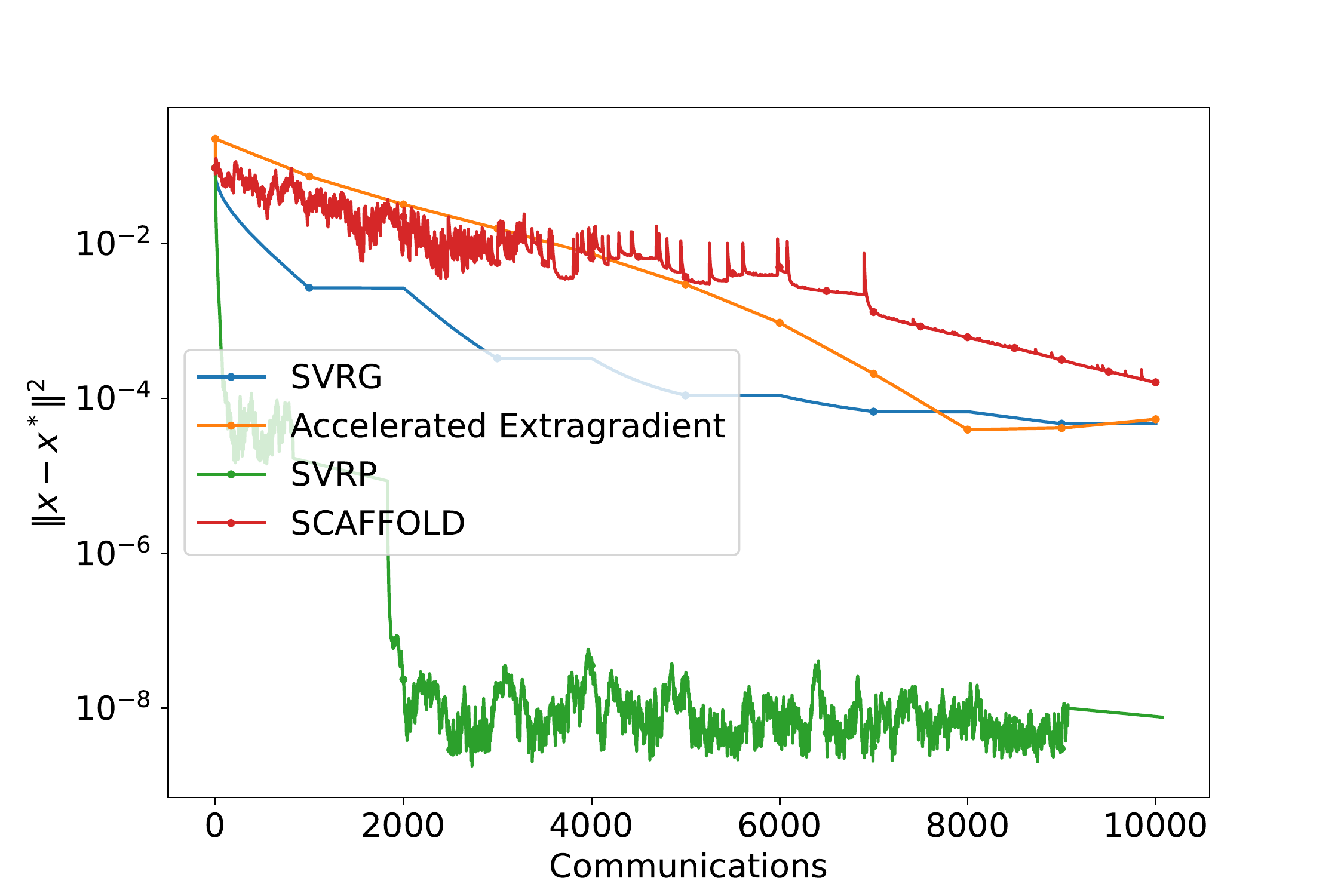}
	\includegraphics[width=0.32\linewidth]{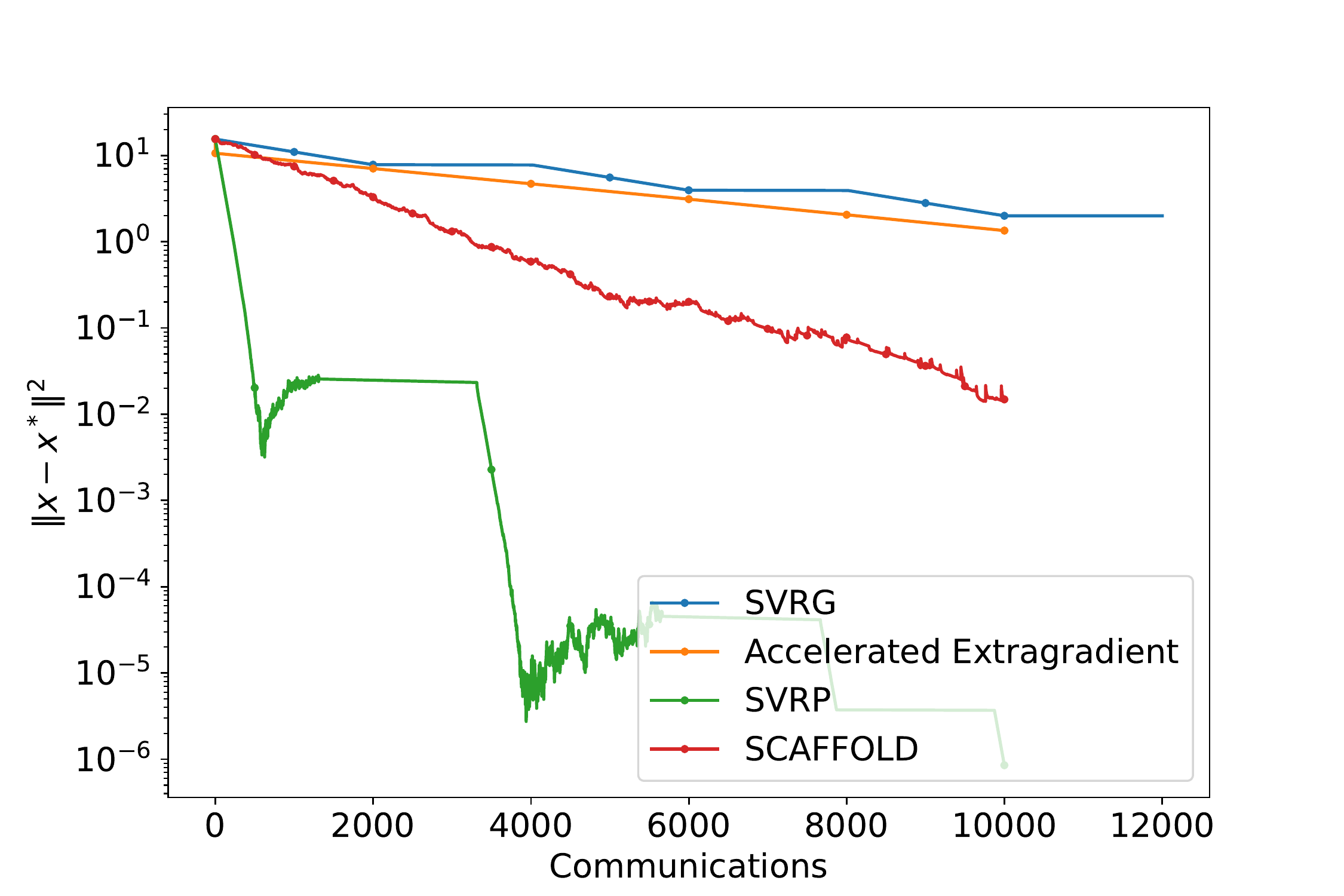}
  \includegraphics[width=0.32\linewidth]{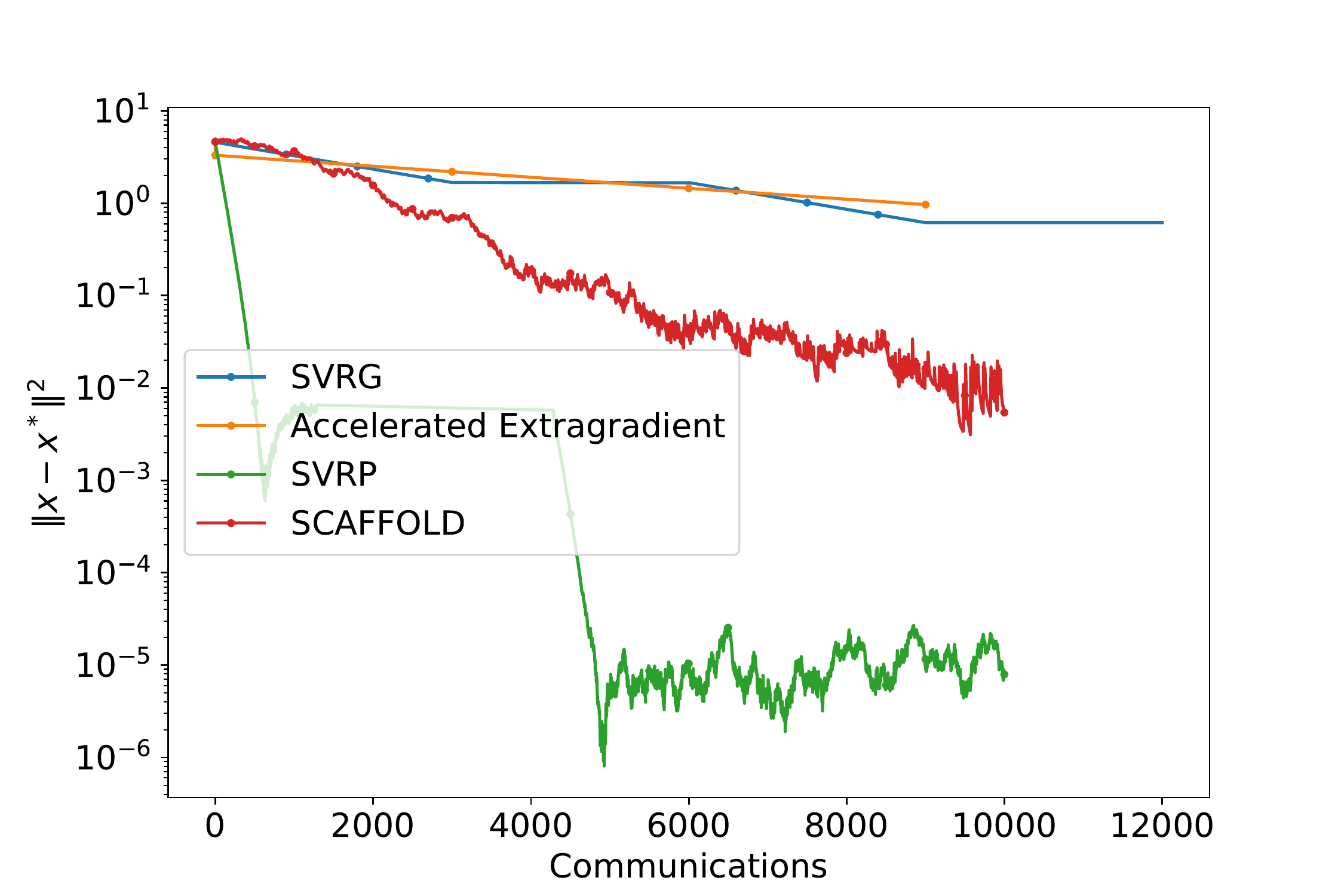}

	\includegraphics[width=0.32\linewidth]{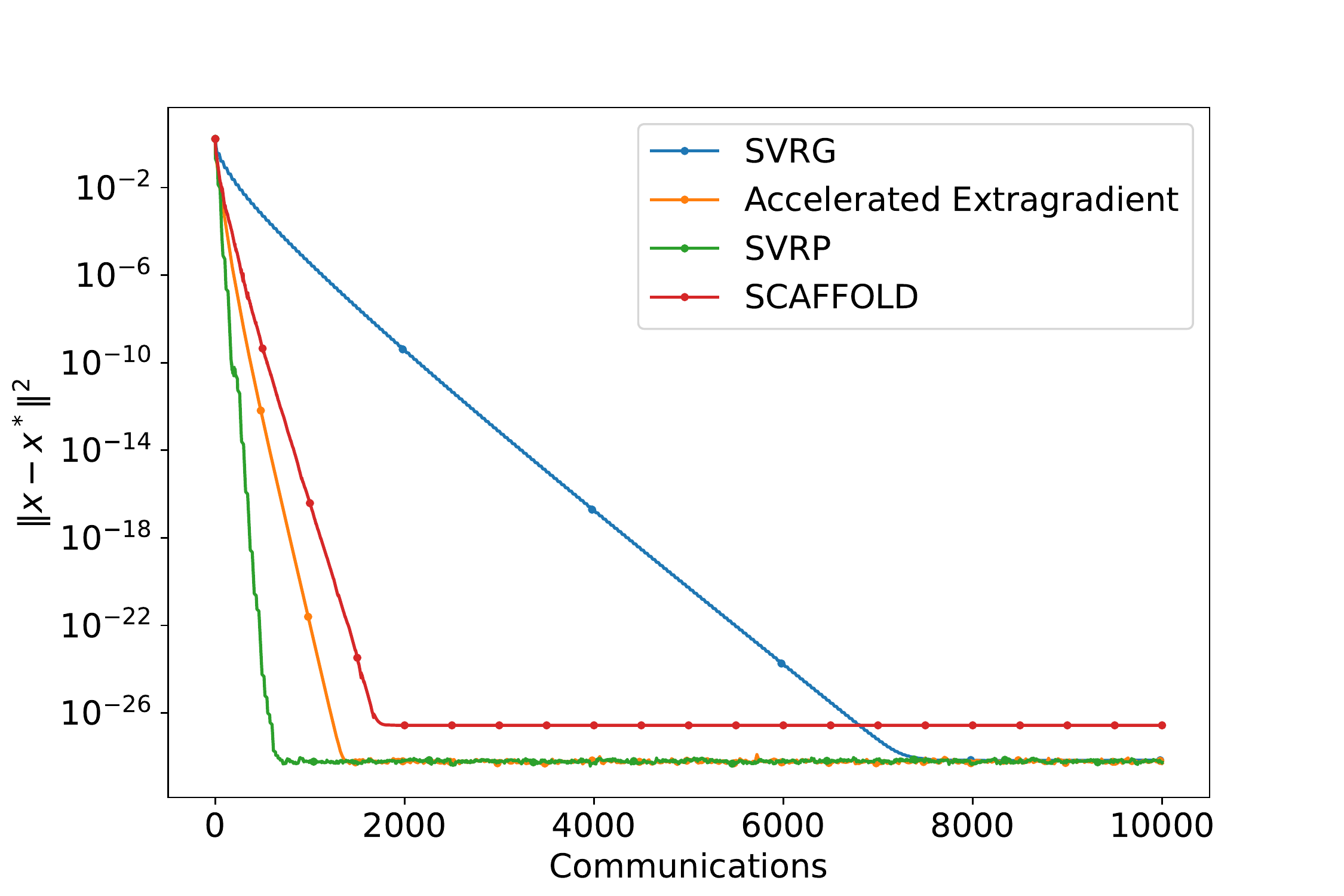}
	\includegraphics[width=0.32\linewidth]{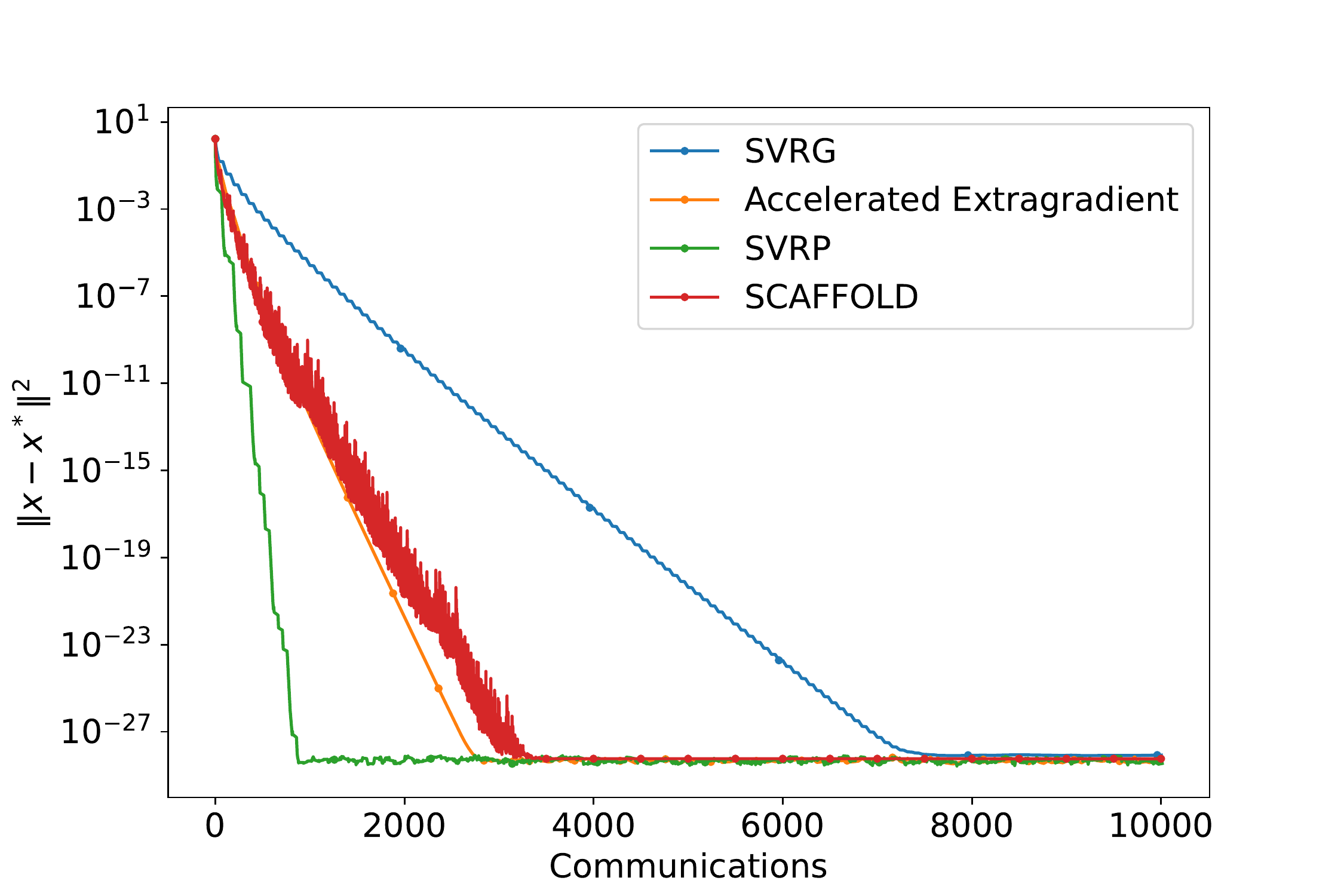}
  \includegraphics[width=0.32\columnwidth]{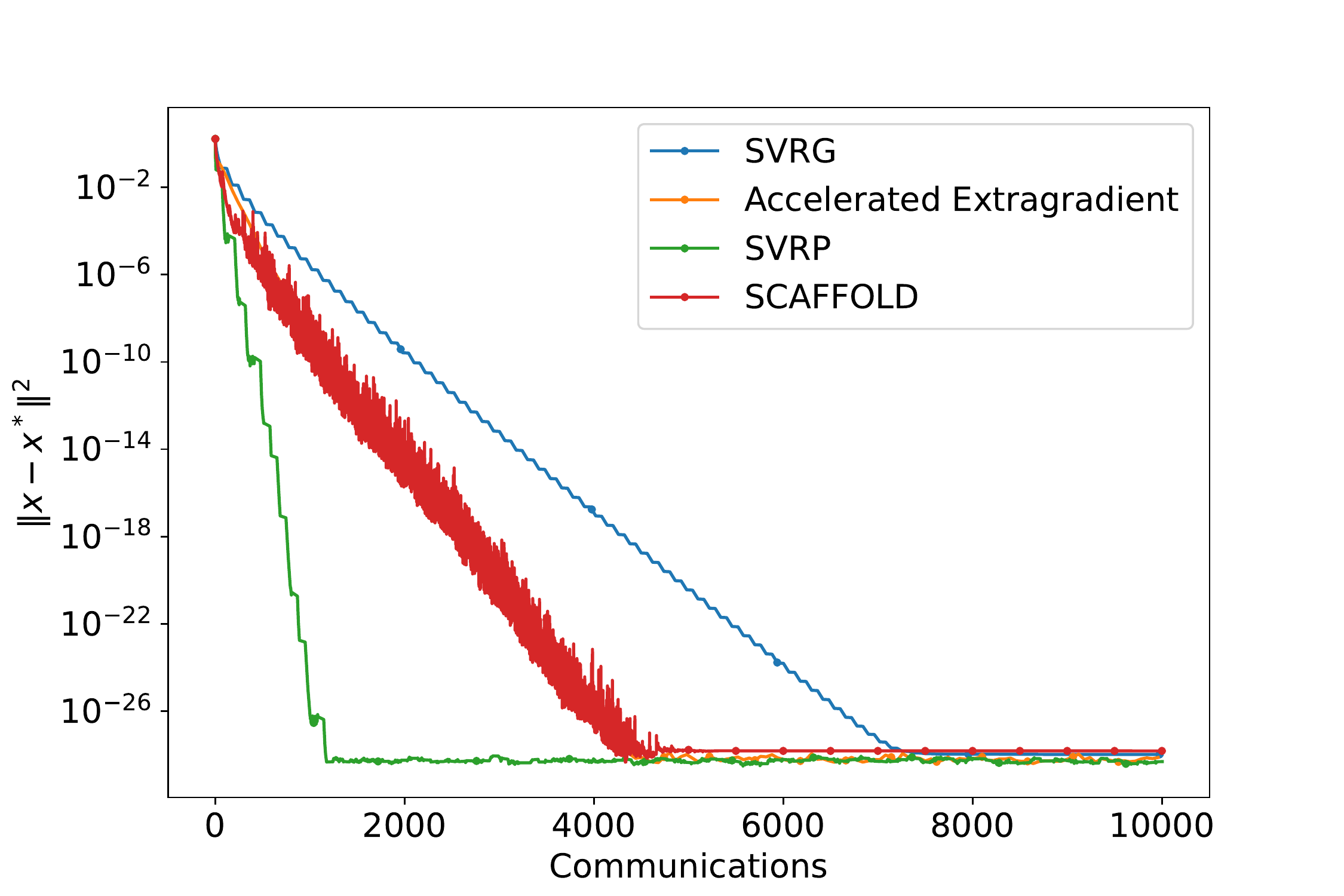}
	\caption{Top row: experiments on synthetic data. Left figure has $M=1000$ clients, middle has $M=2000$, and right has $M=3000$. Bottom row: experiments on real data. Left has $M=20$ clients, middle has $M=40$ clients, and right has $M=60$ clients. We plot the squared distance from the optimum point for all methods on a logarithmic scale versus the number of communication steps, where one exchange of vectors between the server and a single client counts as a communication step.}
	\label{fig:results}
\end{figure}

We run linear regression with $\ell_2$ regularization, where each client has a loss function of the form
\begin{equation*}
    \label{eq:regularized-losses} \textstyle
     f(x) = \frac{1}{M} \sum_{m=1}^M \left[ f_m (x) = \frac{1}{n} \sum_{i=1}^n (z_{m, i}^{\top} x - y_{m, i})^2 + \frac{\lambda}{2} \sqn{x} \right]
\end{equation*}
where $z_{m, i} \in \R^d$ and $y_{m, i} \in \R$ represent the feature and label vectors respectively, for $m \in [M]$ and $i \in [n]$. We do two sets of experiments: in the first set, we generate the data vectors $z_{m, i}$ synthetically and force the second-order similarity assumption to hold with $\delta$ small relative to $L$, with $L \approx 3330$ and $\delta \approx 10$, and regularization constant $\lambda = 1$. We vary the number of clients $M$ as $M \in \{ 1000, 2000, 3000 \}$. In the second set, we use the \textrm{``a9a''} dataset from LIBSVM~\citep{chang11_libsvm}, each client's data is constructed by sampling from the original training dataset with $n=2000$ samples per client. We vary the number of clients $M$ as $M \in \{ 20, 40, 60 \}$, and measure $L \approx 6.33$, $\delta \approx 0.22$, and set the regularization parameter as $\lambda = 0.1$. We simulate our results on a single machine, running each method for $10000$ communication steps. Our results are given in Figure~\ref{fig:results}. We compare SVRP against SVRG, SCAFFOLD, and the Accelerated Extragradient algorithms, using the optimal theoretical stepsize for each algorithm. In all cases, SVRP achieves superior performance to existing algorithms for both the real and synthetic data experiments. This is more pronounced in the synthetic data experiments as $\delta$ is much smaller than $L$ and the number of agents $M$ is very large.

\section{Conclusion}

In this paper, we develop two algorithms that utilize client sampling in order to reduce the amount of communication necessary for federated optimization under the second-order similarity assumption, with the faster of the two algorithms reducing the best-known communication complexity from $\tilde{\mathcal{O}} \left( \sqrt{\frac{\delta}{\mu}} M \right)$ \citep{kovalev22_optim_gradien_slidin_its_applic} to $\tilde{\mathcal{O}} \left( \sqrt{\frac{\delta}{\mu}} M^{3/4} + M \right)$. Both algorithms utilize variance-reduction on top of the stochastic proximal point algorithm, for which we also provide a new simple and smoothness-free analysis.

In all cases, the algorithms tradeoff more local work for a reduced communication complexity. An important direction in future research is to investigate whether this tradeoff is necessary, and to derive lower bounds under the partial participation communication model and second-order similarity.

\section{Acknowledgements}

We would like to thank Konstantin Mishchenko for helpful comments and suggestions on an earlier version of this manuscript.

\bibliographystyle{plainnat}
\bibliography{fedopt.bib}

\begin{thebibliography}{63}
\providecommand{\natexlab}[1]{#1}
\providecommand{\url}[1]{\texttt{#1}}
\expandafter\ifx\csname urlstyle\endcsname\relax
  \providecommand{\doi}[1]{doi: #1}\else
  \providecommand{\doi}{doi: \begingroup \urlstyle{rm}\Url}\fi

\bibitem[Allen-Zhu(2017)]{allen-zhu16_katyus}
Zeyuan Allen-Zhu.
\newblock Katyusha: the first direct acceleration of stochastic gradient
  methods.
\newblock In Hamed Hatami, Pierre McKenzie, and Valerie King, editors,
  \emph{Proceedings of the 49th Annual {ACM} {SIGACT} Symposium on Theory of
  Computing, {STOC} 2017, Montreal, QC, Canada, June 19-23, 2017}, pages
  1200--1205. ACM, 2017.
\newblock \doi{10.1145/3055399.3055448}.
\newblock URL \url{https://doi.org/10.1145/3055399.3055448}.

\bibitem[Arjevani and
  Shamir(2015)]{arjevani15_commun_compl_distr_convex_learn_optim}
Yossi Arjevani and Ohad Shamir.
\newblock Communication complexity of distributed convex learning and
  optimization.
\newblock In Corinna Cortes, Neil~D. Lawrence, Daniel~D. Lee, Masashi Sugiyama,
  and Roman Garnett, editors, \emph{Advances in Neural Information Processing
  Systems 28: Annual Conference on Neural Information Processing Systems 2015,
  December 7-12, 2015, Montreal, Quebec, Canada}, pages 1756--1764, 2015.
\newblock URL
  \url{https://proceedings.neurips.cc/paper/2015/hash/7fec306d1e665bc9c748b5d2b99a6e97-Abstract.html}.

\bibitem[Asadi et~al.(2021)Asadi, Fakoor, Gottesman, Kim, Littman, and
  Smola]{asadi21_proxim_iterat_deep_reinf_learn}
Kavosh Asadi, Rasool Fakoor, Omer Gottesman, Taesup Kim, Michael~L. Littman,
  and Alexander~J. Smola.
\newblock Proximal iteration for deep reinforcement learning.
\newblock \emph{arXiv preprint arXiv:2112.05848}, abs/2112.05848, 2021.
\newblock URL \url{https://arXiv.org/abs/2112.05848}.

\bibitem[Asi and Duchi(2019)]{asi19_stoch_approx_proxim_point_method}
Hilal Asi and John~C. Duchi.
\newblock Stochastic (approximate) proximal point methods: Convergence,
  optimality, and adaptivity.
\newblock \emph{SIAM Journal on Optimization}, 29\penalty0 (3):\penalty0
  2257--2290, 2019.
\newblock \doi{10.1137/18m1230323}.
\newblock URL \url{http://dx.doi.org/10.1137/18M1230323}.

\bibitem[Asi et~al.(2020)Asi, Chadha, Cheng, and
  Duchi]{asi20_minibatch_stochastic_aprox}
Hilal Asi, Karan~N. Chadha, Gary Cheng, and John~C. Duchi.
\newblock Minibatch stochastic approximate proximal point methods.
\newblock In Hugo Larochelle, Marc'Aurelio Ranzato, Raia Hadsell,
  Maria{-}Florina Balcan, and Hsuan{-}Tien Lin, editors, \emph{Advances in
  Neural Information Processing Systems 33: Annual Conference on Neural
  Information Processing Systems 2020, NeurIPS 2020, December 6-12, 2020,
  virtual}, 2020.
\newblock URL
  \url{https://proceedings.neurips.cc/paper/2020/hash/fa2246fa0fdf0d3e270c86767b77ba1b-Abstract.html}.

\bibitem[Beck(2017)]{beck17_first_order_methods_opt}
Amir Beck.
\newblock \emph{First-order methods in optimization}.
\newblock {MOS}-{SIAM} series on optimization. Society for Industrial and
  Applied Mathematics ; Mathematical Optimization Society, 2017.
\newblock ISBN 9781611974997.

\bibitem[Bertsekas(2011)]{bertsekas2011_proximal_methods}
Dimitri~P. Bertsekas.
\newblock Incremental proximal methods for large scale convex optimization.
\newblock \emph{Mathematical Programming}, 129\penalty0 (2):\penalty0 163--195,
  2011.
\newblock \doi{10.1007/s10107-011-0472-0}.
\newblock URL \url{https://doi.org/10.1007/s10107-011-0472-0}.

\bibitem[Beznosikov and Gasnikov(2022)]{beznosikov22_compr_data_simil}
Aleksandr Beznosikov and Alexander Gasnikov.
\newblock Compression and data similarity: Combination of two techniques for
  communication-efficient solving of distributed variational inequalities.
\newblock \emph{arXiv preprint arXiv:2206.09446}, abs/2206.09446, 2022.
\newblock URL \url{https://arXiv.org/abs/2206.09446}.

\bibitem[Bumin and Huang(2021)]{bumin21_efficient}
Aysegul Bumin and Kejun Huang.
\newblock Efficient implementation of stochastic proximal point algorithm for
  matrix and tensor completion.
\newblock In \emph{2021 29th European Signal Processing Conference (EUSIPCO)},
  pages 1050--1054. IEEE, 2021.

\bibitem[Chadha et~al.(2022)Chadha, Cheng, and
  Duchi]{chadha21_accel_optim_paral}
Karan~N. Chadha, Gary Cheng, and John~C. Duchi.
\newblock Accelerated, optimal and parallel: Some results on model-based
  stochastic optimization.
\newblock In \emph{{ICML}}, volume 162 of \emph{Proceedings of Machine Learning
  Research}, pages 2811--2827. PMLR, 2022.

\bibitem[Chang and Lin(2011)]{chang11_libsvm}
Chih-Chung Chang and Chih-Jen Lin.
\newblock {LIBSVM}: A library for support vector machines.
\newblock \emph{ACM Transactions on Intelligent Systems and Technology},
  2:\penalty0 27:1--27:27, 2011.
\newblock Software available at \url{http://www.csie.ntu.edu.tw/~cjlin/libsvm}.

\bibitem[Chayti and
  Karimireddy(2022)]{chayti22_optim_with_acces_to_auxil_infor}
El~Mahdi Chayti and Sai~Praneeth Karimireddy.
\newblock Optimization with access to auxiliary information.
\newblock \emph{arXiv preprint arXiv:2206.00395}, abs/2206.00395, 2022.
\newblock URL \url{https://arXiv.org/abs/2206.00395}.

\bibitem[Chen and Teboulle(1993)]{chen93_conv_anal_prox_bregman}
Gong Chen and Marc Teboulle.
\newblock Convergence analysis of a proximal-like minimization algorithm using
  bregman functions.
\newblock \emph{{SIAM} Journal on Optimization}, 3\penalty0 (3):\penalty0
  538--543, 1993.
\newblock \doi{10.1137/0803026}.
\newblock URL \url{{https://doi.org/10.1137/0803026}}.

\bibitem[Chen et~al.(2022)Chen, Horváth, and
  Richtárik]{chen20_optim_clien_sampl_feder_learn}
Wenlin Chen, Samuel Horváth, and Peter Richtárik.
\newblock Optimal client sampling for federated learning.
\newblock \emph{Transactions on Machine Learning Research}, 2022.
\newblock URL \url{https://openreview.net/forum?id=8GvRCWKHIL}.

\bibitem[Defazio(2016)]{defazio16_simpl_pract_accel_method_finit_sums}
Aaron Defazio.
\newblock A simple practical accelerated method for finite sums.
\newblock In Daniel~D. Lee, Masashi Sugiyama, Ulrike von Luxburg, Isabelle
  Guyon, and Roman Garnett, editors, \emph{Advances in Neural Information
  Processing Systems 29: Annual Conference on Neural Information Processing
  Systems 2016, December 5-10, 2016, Barcelona, Spain}, pages 676--684, 2016.
\newblock URL
  \url{https://proceedings.neurips.cc/paper/2016/hash/4f6ffe13a5d75b2d6a3923922b3922e5-Abstract.html}.

\bibitem[Defazio et~al.(2014)Defazio, Bach, and
  Lacoste{-}Julien]{defazio14_saga}
Aaron Defazio, Francis~R. Bach, and Simon Lacoste{-}Julien.
\newblock {SAGA:} {A} fast incremental gradient method with support for
  non-strongly convex composite objectives.
\newblock In Zoubin Ghahramani, Max Welling, Corinna Cortes, Neil~D. Lawrence,
  and Kilian~Q. Weinberger, editors, \emph{Advances in Neural Information
  Processing Systems 27: Annual Conference on Neural Information Processing
  Systems 2014, December 8-13 2014, Montreal, Quebec, Canada}, pages
  1646--1654, 2014.
\newblock URL
  \url{https://proceedings.neurips.cc/paper/2014/hash/ede7e2b6d13a41ddf9f4bdef84fdc737-Abstract.html}.

\bibitem[Dragomir et~al.(2021)Dragomir, Even, and
  Hendrikx]{dragomir21_fast_stoch_bregm_gradien_method}
Radu{-}Alexandru Dragomir, Mathieu Even, and Hadrien Hendrikx.
\newblock Fast stochastic bregman gradient methods: Sharp analysis and variance
  reduction.
\newblock In Marina Meila and Tong Zhang, editors, \emph{Proceedings of the
  38th International Conference on Machine Learning, {ICML} 2021, 18-24 July
  2021, Virtual Event}, volume 139 of \emph{Proceedings of Machine Learning
  Research}, pages 2815--2825. PMLR, 2021.
\newblock URL \url{http://proceedings.mlr.press/v139/dragomir21a.html}.

\bibitem[Duistermaat(2004)]{duistermaat2004multidimensional}
J.~J. Duistermaat.
\newblock \emph{Multidimensional real analysis}.
\newblock Cambridge University Press, 2004.
\newblock ISBN 9780521551144.

\bibitem[Foster et~al.(2019)Foster, Sekhari, Shamir, Srebro, Sridharan, and
  Woodworth]{foster19_compl_makin_gradien_small_stoch_convex_optim}
Dylan~J. Foster, Ayush Sekhari, Ohad Shamir, Nathan Srebro, Karthik Sridharan,
  and Blake~E. Woodworth.
\newblock The complexity of making the gradient small in stochastic convex
  optimization.
\newblock In Alina Beygelzimer and Daniel Hsu, editors, \emph{Conference on
  Learning Theory, {COLT} 2019, 25-28 June 2019, Phoenix, AZ, {USA}}, volume~99
  of \emph{Proceedings of Machine Learning Research}, pages 1319--1345. PMLR,
  2019.
\newblock URL \url{http://proceedings.mlr.press/v99/foster19b.html}.

\bibitem[Ghosh et~al.(2020)Ghosh, Chung, Yin, and
  Ramchandran]{ghosh20_effic_framew_clust_feder_learn}
Avishek Ghosh, Jichan Chung, Dong Yin, and Kannan Ramchandran.
\newblock An efficient framework for clustered federated learning.
\newblock In Hugo Larochelle, Marc'Aurelio Ranzato, Raia Hadsell,
  Maria{-}Florina Balcan, and Hsuan{-}Tien Lin, editors, \emph{Advances in
  Neural Information Processing Systems 33: Annual Conference on Neural
  Information Processing Systems 2020, NeurIPS 2020, December 6-12, 2020,
  virtual}, 2020.
\newblock URL
  \url{https://proceedings.neurips.cc/paper/2020/hash/e32cc80bf07915058ce90722ee17bb71-Abstract.html}.

\bibitem[Gower et~al.(2019)Gower, Loizou, Qian, Sailanbayev, Shulgin, and
  Richtárik]{gower19_sgd}
Robert~Mansel Gower, Nicolas Loizou, Xun Qian, Alibek Sailanbayev, Egor
  Shulgin, and Peter Richtárik.
\newblock {SGD}: General analysis and improved rates.
\newblock In Kamalika Chaudhuri and Ruslan Salakhutdinov, editors,
  \emph{Proceedings of the 36th International Conference on Machine Learning},
  volume~97 of \emph{Proceedings of Machine Learning Research}, pages
  5200--5209. PMLR, 2019.
\newblock URL \url{http://proceedings.mlr.press/v97/qian19b.html}.

\bibitem[Hendrikx et~al.(2020)Hendrikx, Xiao, Bubeck, Bach, and
  Massoulié]{hendrikx20_statis_precon_accel_gradien_method_distr_optim}
Hadrien Hendrikx, Lin Xiao, Sébastien Bubeck, Francis~R. Bach, and Laurent
  Massoulié.
\newblock Statistically preconditioned accelerated gradient method for
  distributed optimization.
\newblock In \emph{Proceedings of the 37th International Conference on Machine
  Learning, {ICML} 2020, 13-18 July 2020, Virtual Event}, volume 119 of
  \emph{Proceedings of Machine Learning Research}, pages 4203--4227. PMLR,
  2020.
\newblock URL \url{http://proceedings.mlr.press/v119/hendrikx20a.html}.

\bibitem[Horváth et~al.(2022)Horváth, Sanjabi, Xiao, Richtárik, and
  Rabbat]{horvath22_fedsh}
Samuel Horváth, Maziar Sanjabi, Lin Xiao, Peter Richtárik, and Michael
  Rabbat.
\newblock {FedShuffle}: Recipes for better use of local work in federated
  learning.
\newblock \emph{arXiv preprint arXiv:2204.13169}, abs/2204.13169, 2022.
\newblock URL \url{https://arXiv.org/abs/2204.13169}.

\bibitem[Jain et~al.(2018)Jain, Kakade, Kidambi, Netrapalli, and
  Sidford]{jain17_accel_stoch_gradien_descen_least_squar_regres}
Prateek Jain, Sham~M. Kakade, Rahul Kidambi, Praneeth Netrapalli, and Aaron
  Sidford.
\newblock Accelerating stochastic gradient descent for least squares
  regression.
\newblock In Sébastien Bubeck, Vianney Perchet, and Philippe Rigollet,
  editors, \emph{Conference On Learning Theory, {COLT} 2018, Stockholm, Sweden,
  6-9 July 2018}, volume~75 of \emph{Proceedings of Machine Learning Research},
  pages 545--604. PMLR, 2018.
\newblock URL \url{http://proceedings.mlr.press/v75/jain18a.html}.

\bibitem[Johnson and Zhang(2013)]{johnson13_svrg}
Rie Johnson and Tong Zhang.
\newblock Accelerating stochastic gradient descent using predictive variance
  reduction.
\newblock In Christopher J.~C. Burges, Léon Bottou, Zoubin Ghahramani, and
  Kilian~Q. Weinberger, editors, \emph{Advances in Neural Information
  Processing Systems 26: 27th Annual Conference on Neural Information
  Processing Systems 2013. Proceedings of a meeting held December 5-8, 2013,
  Lake Tahoe, Nevada, United States}, pages 315--323, 2013.
\newblock URL
  \url{https://proceedings.neurips.cc/paper/2013/hash/ac1dd209cbcc5e5d1c6e28598e8cbbe8-Abstract.html}.

\bibitem[Kairouz et~al.(2019)Kairouz, McMahan, Avent, Bellet, Bennis, Bhagoji,
  Bonawitz, Charles, Cormode, Cummings, D'Oliveira, Eichner, Rouayheb, Evans,
  Gardner, Garrett, Gascón, Ghazi, Gibbons, Gruteser, Harchaoui, He, He, Huo,
  Hutchinson, Hsu, Jaggi, Javidi, Joshi, Khodak, Konečný, Korolova,
  Koushanfar, Koyejo, Lepoint, Liu, Mittal, Mohri, Nock, Özgür, Pagh,
  Raykova, Qi, Ramage, Raskar, Song, Song, Stich, Sun, Suresh, Tramèr,
  Vepakomma, Wang, Xiong, Xu, Yang, Yu, Yu, and
  Zhao]{kairouz19_advan_open_probl_feder_learn}
Peter Kairouz, H.~Brendan McMahan, Brendan Avent, Aurélien Bellet, Mehdi
  Bennis, Arjun~Nitin Bhagoji, Kallista Bonawitz, Zachary Charles, Graham
  Cormode, Rachel Cummings, Rafael G.~L. D'Oliveira, Hubert Eichner, Salim~El
  Rouayheb, David Evans, Josh Gardner, Zachary Garrett, Adrià Gascón, Badih
  Ghazi, Phillip~B. Gibbons, Marco Gruteser, Zaid Harchaoui, Chaoyang He, Lie
  He, Zhouyuan Huo, Ben Hutchinson, Justin Hsu, Martin Jaggi, Tara Javidi,
  Gauri Joshi, Mikhail Khodak, Jakub Konečný, Aleksandra Korolova, Farinaz
  Koushanfar, Sanmi Koyejo, Tancrède Lepoint, Yang Liu, Prateek Mittal,
  Mehryar Mohri, Richard Nock, Ayfer Özgür, Rasmus Pagh, Mariana Raykova,
  Hang Qi, Daniel Ramage, Ramesh Raskar, Dawn Song, Weikang Song, Sebastian~U.
  Stich, Ziteng Sun, Ananda~Theertha Suresh, Florian Tramèr, Praneeth
  Vepakomma, Jianyu Wang, Li~Xiong, Zheng Xu, Qiang Yang, Felix~X. Yu, Han Yu,
  and Sen Zhao.
\newblock Advances and open problems in federated learning.
\newblock \emph{arXiv preprint arXiv:1912.04977}, abs/1912.04977, 2019.
\newblock URL \url{https://arXiv.org/abs/1912.04977}.

\bibitem[Karimireddy et~al.(2020{\natexlab{a}})Karimireddy, Jaggi, Kale, Mohri,
  Reddi, Stich, and Suresh]{karimireddy20_mime}
Sai~Praneeth Karimireddy, Martin Jaggi, Satyen Kale, Mehryar Mohri, Sashank~J.
  Reddi, Sebastian~U. Stich, and Ananda~Theertha Suresh.
\newblock Mime: Mimicking centralized stochastic algorithms in federated
  learning.
\newblock \emph{arXiv preprint arXiv:2008.03606}, abs/2008.03606,
  2020{\natexlab{a}}.
\newblock URL \url{https://arXiv.org/abs/2008.03606}.

\bibitem[Karimireddy et~al.(2020{\natexlab{b}})Karimireddy, Kale, Mohri, Reddi,
  Stich, and Suresh]{karimireddy19_scaff}
Sai~Praneeth Karimireddy, Satyen Kale, Mehryar Mohri, Sashank~J. Reddi,
  Sebastian~U. Stich, and Ananda~Theertha Suresh.
\newblock {SCAFFOLD:} stochastic controlled averaging for federated learning.
\newblock In \emph{Proceedings of the 37th International Conference on Machine
  Learning, {ICML} 2020, 13-18 July 2020, Virtual Event}, volume 119 of
  \emph{Proceedings of Machine Learning Research}, pages 5132--5143. PMLR,
  2020{\natexlab{b}}.
\newblock URL \url{http://proceedings.mlr.press/v119/karimireddy20a.html}.

\bibitem[Kim et~al.(2022)Kim, Toulis, and Kyrillidis]{kim22_spp_with_momentum}
Junhyung~Lyle Kim, Panos Toulis, and Anastasios Kyrillidis.
\newblock Convergence and stability of the stochastic proximal point algorithm
  with momentum.
\newblock In Roya Firoozi, Negar Mehr, Esen Yel, Rika Antonova, Jeannette Bohg,
  Mac Schwager, and Mykel Kochenderfer, editors, \emph{Proceedings of The 4th
  Annual Learning for Dynamics and Control Conference}, volume 168 of
  \emph{Proceedings of Machine Learning Research}, pages 1034--1047. PMLR,
  2022.
\newblock URL \url{https://proceedings.mlr.press/v168/kim22a.html}.

\bibitem[{Konečný} et~al.(2016){Konečný}, {Brendan McMahan}, {Yu},
  Richtárik, {Theertha Suresh}, and {Bacon}]{konecny16_feder_learn}
Jakub {Konečný}, H.~{Brendan McMahan}, Felix~X. {Yu}, Peter Richtárik,
  Ananda {Theertha Suresh}, and Dave {Bacon}.
\newblock {Federated Learning: Strategies for Improving Communication
  Efficiency}.
\newblock In \emph{NIPS Private Multi-Party Machine Learning Workshop}, 2016.

\bibitem[Kovalev et~al.(2020)Kovalev, Horv{\'a}th, and
  Richt{\'a}rik]{kovalev19_dont_jump_throug_hoops_remov_those_loops}
Dmitry Kovalev, Samuel Horv{\'a}th, and Peter Richt{\'a}rik.
\newblock Don’t jump through hoops and remove those loops: {SVRG} and
  {K}atyusha are better without the outer loop.
\newblock In Aryeh Kontorovich and Gergely Neu, editors, \emph{Proceedings of
  the 31st International Conference on Algorithmic Learning Theory}, volume 117
  of \emph{Proceedings of Machine Learning Research}, pages 451--467. PMLR, 08
  Feb--11 Feb 2020.
\newblock URL \url{https://proceedings.mlr.press/v117/kovalev20a.html}.

\bibitem[Kovalev et~al.(2022)Kovalev, Beznosikov, Borodich, Gasnikov, and
  Scutari]{kovalev22_optim_gradien_slidin_its_applic}
Dmitry Kovalev, Aleksandr Beznosikov, Ekaterina Borodich, Alexander Gasnikov,
  and Gesualdo Scutari.
\newblock Optimal gradient sliding and its application to distributed
  optimization under similarity.
\newblock \emph{arXiv preprint arXiv:2205.15136}, abs/2205.15136, 2022.
\newblock URL \url{https://arXiv.org/abs/2205.15136}.

\bibitem[Li et~al.(2020{\natexlab{a}})Li, Ma, and
  Giannakis]{li19_conver_sarah_beyon}
Bingcong Li, Meng Ma, and Georgios~B. Giannakis.
\newblock On the convergence of {SARAH} and beyond.
\newblock In Silvia Chiappa and Roberto Calandra, editors, \emph{The 23rd
  International Conference on Artificial Intelligence and Statistics, {AISTATS}
  2020, 26-28 August 2020, Online [Palermo, Sicily, Italy]}, volume 108 of
  \emph{Proceedings of Machine Learning Research}, pages 223--233. PMLR,
  2020{\natexlab{a}}.
\newblock URL \url{http://proceedings.mlr.press/v108/li20a.html}.

\bibitem[Li et~al.(2020{\natexlab{b}})Li, Sahu, Zaheer, Sanjabi, Talwalkar, and
  Smith]{li18_feder_optim_heter_networ}
Tian Li, Anit~Kumar Sahu, Manzil Zaheer, Maziar Sanjabi, Ameet Talwalkar, and
  Virginia Smith.
\newblock Federated optimization in heterogeneous networks.
\newblock In Inderjit~S. Dhillon, Dimitris~S. Papailiopoulos, and Vivienne Sze,
  editors, \emph{Proceedings of Machine Learning and Systems 2020, MLSys 2020,
  Austin, TX, USA, March 2-4, 2020}. mlsys.org, 2020{\natexlab{b}}.
\newblock URL \url{https://proceedings.mlsys.org/book/316.pdf}.

\bibitem[Lin et~al.(2015)Lin, Mairal, and
  Harchaoui]{lin15_univer_catal_first_order_optim}
Hongzhou Lin, Julien Mairal, and Zaid Harchaoui.
\newblock A universal catalyst for first-order optimization.
\newblock In Corinna Cortes, Neil~D. Lawrence, Daniel~D. Lee, Masashi Sugiyama,
  and Roman Garnett, editors, \emph{Advances in Neural Information Processing
  Systems 28: Annual Conference on Neural Information Processing Systems 2015,
  December 7-12, 2015, Montreal, Quebec, Canada}, pages 3384--3392, 2015.
\newblock URL
  \url{https://proceedings.neurips.cc/paper/2015/hash/c164bbc9d6c72a52c599bbb43d8db8e1-Abstract.html}.

\bibitem[Liu et~al.(2021)Liu, Ho, Wang, Gao, Jin, and
  Zhang]{liu21_feder_learn_meets_natur_languag_proces}
Ming Liu, Stella Ho, Mengqi Wang, Longxiang Gao, Yuan Jin, and He~Zhang.
\newblock Federated learning meets natural language processing: a survey.
\newblock \emph{arXiv preprint arXiv:2107.12603}, abs/2107.12603, 2021.
\newblock URL \url{https://arXiv.org/abs/2107.12603}.

\bibitem[Mishchenko et~al.(2020)Mishchenko, Khaled, and
  Richtárik]{mishchenko20_random_reshuf}
Konstantin Mishchenko, Ahmed Khaled, and Peter Richtárik.
\newblock Random reshuffling: Simple analysis with vast improvements.
\newblock In Hugo Larochelle, Marc'Aurelio Ranzato, Raia Hadsell,
  Maria{-}Florina Balcan, and Hsuan{-}Tien Lin, editors, \emph{Advances in
  Neural Information Processing Systems 33: Annual Conference on Neural
  Information Processing Systems 2020, NeurIPS 2020, December 6-12, 2020,
  virtual}, 2020.
\newblock URL
  \url{https://proceedings.neurips.cc/paper/2020/hash/c8cc6e90ccbff44c9cee23611711cdc4-Abstract.html}.

\bibitem[Mishchenko et~al.(2022{\natexlab{a}})Mishchenko, Khaled, and
  Richtárik]{mishchenko21_proxim_feder_random_reshuf}
Konstantin Mishchenko, Ahmed Khaled, and Peter Richtárik.
\newblock Proximal and federated random reshuffling.
\newblock In \emph{{ICML}}, volume 162 of \emph{Proceedings of Machine Learning
  Research}, pages 15718--15749. PMLR, 2022{\natexlab{a}}.

\bibitem[Mishchenko et~al.(2022{\natexlab{b}})Mishchenko, Malinovsky, Stich,
  and Richtárik]{mishchenko22_proxs}
Konstantin Mishchenko, Grigory Malinovsky, Sebastian Stich, and Peter
  Richtárik.
\newblock {ProxSkip}: Yes! local gradient steps provably lead to communication
  acceleration! {F}inally!
\newblock In \emph{{ICML}}, volume 162 of \emph{Proceedings of Machine Learning
  Research}, pages 15750--15769. PMLR, 2022{\natexlab{b}}.

\bibitem[Needell et~al.(2014)Needell, Ward, and
  Srebro]{needell13_stoch_gradien_descen_weigh_sampl}
Deanna Needell, Rachel Ward, and Nathan Srebro.
\newblock Stochastic gradient descent, weighted sampling, and the randomized
  kaczmarz algorithm.
\newblock In Zoubin Ghahramani, Max Welling, Corinna Cortes, Neil~D. Lawrence,
  and Kilian~Q. Weinberger, editors, \emph{Advances in Neural Information
  Processing Systems 27: Annual Conference on Neural Information Processing
  Systems 2014, December 8-13 2014, Montreal, Quebec, Canada}, pages
  1017--1025, 2014.
\newblock URL
  \url{https://proceedings.neurips.cc/paper/2014/hash/f29c21d4897f78948b91f03172341b7b-Abstract.html}.

\bibitem[Nesterov(2018)]{nesterov18_lectures_cvx_opt}
Yurii Nesterov.
\newblock \emph{Lectures on Convex Optimization}.
\newblock Springer Publishing Company, Incorporated, 2nd edition, 2018.
\newblock ISBN 3319915770.

\bibitem[Nguyen et~al.(2021)Nguyen, Pham, Pathirana, Ding, Seneviratne, Lin,
  Dobre, and Hwang]{nguyen21_feder_learn_smart_healt}
Dinh~C. Nguyen, Quoc-Viet Pham, Pubudu~N. Pathirana, Ming Ding, Aruna
  Seneviratne, Zihuai Lin, Octavia~A. Dobre, and Won-Joo Hwang.
\newblock Federated learning for smart healthcare: a survey.
\newblock \emph{arXiv preprint arXiv:2111.08834}, abs/2111.08834, 2021.
\newblock URL \url{https://arXiv.org/abs/2111.08834}.

\bibitem[Nguyen et~al.(2017)Nguyen, Liu, Scheinberg, and
  Takác]{nguyen17_sarah}
Lam~M. Nguyen, Jie Liu, Katya Scheinberg, and Martin Takác.
\newblock {SARAH:} {A} novel method for machine learning problems using
  stochastic recursive gradient.
\newblock In Doina Precup and Yee~Whye Teh, editors, \emph{Proceedings of the
  34th International Conference on Machine Learning, {ICML} 2017, Sydney, NSW,
  Australia, 6-11 August 2017}, volume~70 of \emph{Proceedings of Machine
  Learning Research}, pages 2613--2621. PMLR, 2017.
\newblock URL \url{http://proceedings.mlr.press/v70/nguyen17b.html}.

\bibitem[Nguyen et~al.(2019)Nguyen, Nguyen, and van Dijk]{NEURIPS2019_deb54ffb}
Phuong~Ha Nguyen, Lam Nguyen, and Marten van Dijk.
\newblock Tight dimension independent lower bound on the expected convergence
  rate for diminishing step sizes in {SGD}.
\newblock In H.~Wallach, H.~Larochelle, A.~Beygelzimer, F.~d\textquotesingle
  Alch\'{e}-Buc, E.~Fox, and R.~Garnett, editors, \emph{Advances in Neural
  Information Processing Systems}, volume~32. Curran Associates, Inc., 2019.
\newblock URL
  \url{https://proceedings.neurips.cc/paper/2019/file/deb54ffb41e085fd7f69a75b6359c989-Paper.pdf}.

\bibitem[Pătraşcu and
  Necoara(2017)]{patrascu17_nonas_conver_stoch_proxim_point}
Andrei Pătraşcu and Ion Necoara.
\newblock Nonasymptotic convergence of stochastic proximal point algorithms for
  constrained convex optimization.
\newblock \emph{arXiv preprint arXiv:1706.06297}, abs/1706.06297, 2017.
\newblock URL \url{https://arXiv.org/abs/1706.06297}.

\bibitem[Ryu and Boyd(2014)]{ryu17_stochastic_prox}
Ernest~K. Ryu and Stephen Boyd.
\newblock Stochastic proximal iteration: A non-asymptotic improvement upon
  stochastic gradient descent, 2014.
\newblock URL \url{https://stanford.edu/~boyd/papers/pdf/spi.pdf}.

\bibitem[Sattler et~al.(2020)Sattler, Müller, and
  Samek]{sattler19_clust_feder_learn}
Felix Sattler, Klaus-Robert Müller, and Wojciech Samek.
\newblock Clustered federated learning: Model-agnostic distributed multitask
  optimization under privacy constraints.
\newblock \emph{IEEE Transactions on Neural Networks and Learning Systems},
  pages 1--13, 2020.
\newblock \doi{10.1109/TNNLS.2020.3015958}.

\bibitem[Schmidt et~al.(2011)Schmidt, Roux, and
  Bach]{schmidt11_conver_rates_inexac_proxim_gradien}
Mark Schmidt, Nicolas~Le Roux, and Francis~R. Bach.
\newblock Convergence rates of inexact proximal-gradient methods for convex
  optimization.
\newblock In John Shawe{-}Taylor, Richard~S. Zemel, Peter~L. Bartlett, Fernando
  C.~N. Pereira, and Kilian~Q. Weinberger, editors, \emph{Advances in Neural
  Information Processing Systems 24: 25th Annual Conference on Neural
  Information Processing Systems 2011. Proceedings of a meeting held 12-14
  December 2011, Granada, Spain}, pages 1458--1466, 2011.
\newblock URL
  \url{https://proceedings.neurips.cc/paper/2011/hash/8f7d807e1f53eff5f9efbe5cb81090fb-Abstract.html}.

\bibitem[Sebbouh et~al.(2019)Sebbouh, Gazagnadou, Jelassi, Bach, and
  Gower]{sebbouh19_towar_closin_gap_between_theor_pract_svrg}
Othmane Sebbouh, Nidham Gazagnadou, Samy Jelassi, Francis~R. Bach, and
  Robert~M. Gower.
\newblock Towards closing the gap between the theory and practice of {SVRG}.
\newblock In Hanna~M. Wallach, Hugo Larochelle, Alina Beygelzimer, Florence
  d'Alché{-}Buc, Emily~B. Fox, and Roman Garnett, editors, \emph{Advances in
  Neural Information Processing Systems 32: Annual Conference on Neural
  Information Processing Systems 2019, NeurIPS 2019, December 8-14, 2019,
  Vancouver, BC, Canada}, pages 646--656, 2019.
\newblock URL
  \url{https://proceedings.neurips.cc/paper/2019/hash/cd00692c3bfe59267d5ecfac5310286c-Abstract.html}.

\bibitem[Shahid et~al.(2021)Shahid, Pouriyeh, Parizi, Sheng, Srivastava, and
  Zhao]{shahid21_commun_effic_feder_learn}
Osama Shahid, Seyedamin Pouriyeh, Reza~M. Parizi, Quan~Z. Sheng, Gautam
  Srivastava, and Liang Zhao.
\newblock Communication efficiency in federated learning: Achievements and
  challenges.
\newblock \emph{arXiv preprint arXiv:2107.10996}, abs/2107.10996, 2021.
\newblock URL \url{https://arXiv.org/abs/2107.10996}.

\bibitem[Shamir et~al.(2014)Shamir, Srebro, and
  Zhang]{shamir13_commun_effic_distr_optim_using}
Ohad Shamir, Nathan Srebro, and Tong Zhang.
\newblock Communication-efficient distributed optimization using an approximate
  newton-type method.
\newblock In \emph{Proceedings of the 31th International Conference on Machine
  Learning, {ICML} 2014, Beijing, China, 21-26 June 2014}, volume~32 of
  \emph{{JMLR} Workshop and Conference Proceedings}, pages 1000--1008.
  JMLR.org, 2014.
\newblock URL \url{http://proceedings.mlr.press/v32/shamir14.html}.

\bibitem[Shtoff(2022)]{shtoff22_effic_implem_increm_proxim_point_method}
Alex Shtoff.
\newblock Efficient implementation of incremental proximal-point methods.
\newblock \emph{arXiv preprint arXiv:2205.01457}, abs/2205.01457, 2022.
\newblock URL \url{https://arXiv.org/abs/2205.01457}.

\bibitem[Sun et~al.(2022)Sun, Scutari, and
  Daneshmand]{sun19_distr_optim_based_gradien_track_revis}
Ying Sun, Gesualdo Scutari, and Amir Daneshmand.
\newblock Distributed optimization based on gradient tracking revisited:
  Enhancing convergence rate via surrogation.
\newblock \emph{SIAM Journal on Optimization}, 32\penalty0 (2):\penalty0
  354--385, 2022.
\newblock \doi{10.1137/19M1259973}.
\newblock URL \url{https://doi.org/10.1137/19M1259973}.

\bibitem[Szlendak et~al.(2022)Szlendak, Tyurin, and
  Richt{\'a}rik]{szlendak21_permut_compr_provab_faster_distr_noncon_optim}
Rafa{\l} Szlendak, Alexander Tyurin, and Peter Richt{\'a}rik.
\newblock Permutation compressors for provably faster distributed nonconvex
  optimization.
\newblock In \emph{International Conference on Learning Representations
  (ICLR)}, 2022.
\newblock URL \url{https://openreview.net/forum?id=GugZ5DzzAu}.

\bibitem[Tian et~al.(2022)Tian, Scutari, Cao, and
  Gasnikov]{tian21_accel_distr_optim_under_simil}
Ye~Tian, Gesualdo Scutari, Tianyu Cao, and Alexander~V. Gasnikov.
\newblock Acceleration in distributed optimization under similarity.
\newblock In Gustau Camps{-}Valls, Francisco J.~R. Ruiz, and Isabel Valera,
  editors, \emph{International Conference on Artificial Intelligence and
  Statistics, {AISTATS} 2022, 28-30 March 2022, Virtual Event}, volume 151 of
  \emph{Proceedings of Machine Learning Research}, pages 5721--5756. PMLR,
  2022.
\newblock URL \url{https://proceedings.mlr.press/v151/tian22b.html}.

\bibitem[Toulis et~al.(2015)Toulis, Horel, and
  Airoldi]{toulis15_proxim_robbin_monro_method}
Panos Toulis, Thibaut Horel, and Edoardo~M. Airoldi.
\newblock The proximal {R}obbins-{M}onro method.
\newblock \emph{arXiv preprint arXiv:1510.00967}, abs/1510.00967, 2015.
\newblock URL \url{https://arXiv.org/abs/1510.00967}.

\bibitem[Wang et~al.(2021)Wang, Charles, Xu, Joshi, McMahan, Arcas,
  Al-Shedivat, Andrew, Avestimehr, Daly, Data, Diggavi, Eichner, Gadhikar,
  Garrett, Girgis, Hanzely, Hard, He, Horvath, Huo, Ingerman, Jaggi, Javidi,
  Kairouz, Kale, Karimireddy, Konecny, Koyejo, Li, Liu, Mohri, Qi, Reddi,
  Richtarik, Singhal, Smith, Soltanolkotabi, Song, Suresh, Stich, Talwalkar,
  Wang, Woodworth, Wu, Yu, Yuan, Zaheer, Zhang, Zhang, Zheng, Zhu, and
  Zhu]{wang21_field_guide_to_feder_optim}
Jianyu Wang, Zachary Charles, Zheng Xu, Gauri Joshi, H.~Brendan McMahan, Blaise
  Aguera~y Arcas, Maruan Al-Shedivat, Galen Andrew, Salman Avestimehr,
  Katharine Daly, Deepesh Data, Suhas Diggavi, Hubert Eichner, Advait Gadhikar,
  Zachary Garrett, Antonious~M. Girgis, Filip Hanzely, Andrew Hard, Chaoyang
  He, Samuel Horvath, Zhouyuan Huo, Alex Ingerman, Martin Jaggi, Tara Javidi,
  Peter Kairouz, Satyen Kale, Sai~Praneeth Karimireddy, Jakub Konecny, Sanmi
  Koyejo, Tian Li, Luyang Liu, Mehryar Mohri, Hang Qi, Sashank~J. Reddi, Peter
  Richtarik, Karan Singhal, Virginia Smith, Mahdi Soltanolkotabi, Weikang Song,
  Ananda~Theertha Suresh, Sebastian~U. Stich, Ameet Talwalkar, Hongyi Wang,
  Blake Woodworth, Shanshan Wu, Felix~X. Yu, Honglin Yuan, Manzil Zaheer,
  Mi~Zhang, Tong Zhang, Chunxiang Zheng, Chen Zhu, and Wennan Zhu.
\newblock A field guide to federated optimization.
\newblock \emph{arXiv preprint arXiv:2107.06917}, abs/2107.06917, 2021.
\newblock URL \url{https://arXiv.org/abs/2107.06917}.

\bibitem[Woodworth and
  Srebro(2021)]{woodworth21_even_more_optim_stoch_optim_algor}
Blake~E Woodworth and Nathan Srebro.
\newblock An even more optimal stochastic optimization algorithm: Minibatching
  and interpolation learning.
\newblock In M.~Ranzato, A.~Beygelzimer, Y.~Dauphin, P.S. Liang, and J.~Wortman
  Vaughan, editors, \emph{Advances in Neural Information Processing Systems},
  volume~34, pages 7333--7345. Curran Associates, Inc., 2021.
\newblock URL
  \url{https://proceedings.neurips.cc/paper/2021/file/3c63ec7be1b6c49e6c308397023fd8cd-Paper.pdf}.

\bibitem[Woodworth and
  Srebro(2016)]{woodworth16_tight_compl_bound_optim_compos_objec}
Blake~E. Woodworth and Nati Srebro.
\newblock Tight complexity bounds for optimizing composite objectives.
\newblock In Daniel~D. Lee, Masashi Sugiyama, Ulrike von Luxburg, Isabelle
  Guyon, and Roman Garnett, editors, \emph{Advances in Neural Information
  Processing Systems 29: Annual Conference on Neural Information Processing
  Systems 2016, December 5-10, 2016, Barcelona, Spain}, pages 3639--3647, 2016.
\newblock URL
  \url{https://proceedings.neurips.cc/paper/2016/hash/645098b086d2f9e1e0e939c27f9f2d6f-Abstract.html}.

\bibitem[Woodworth et~al.(2021)Woodworth, Bullins, Shamir, and
  Srebro]{woodworth21_min_max_compl_distr_stoch}
Blake~E. Woodworth, Brian Bullins, Ohad Shamir, and Nathan Srebro.
\newblock The min-max complexity of distributed stochastic convex optimization
  with intermittent communication.
\newblock In Mikhail Belkin and Samory Kpotufe, editors, \emph{Conference on
  Learning Theory, {COLT} 2021, 15-19 August 2021, Boulder, Colorado, {USA}},
  volume 134 of \emph{Proceedings of Machine Learning Research}, pages
  4386--4437. PMLR, 2021.
\newblock URL \url{http://proceedings.mlr.press/v134/woodworth21a.html}.

\bibitem[Yuan and Li(2019)]{yuan19_conver_distr_approx_newton_method}
Xiao-Tong Yuan and Ping Li.
\newblock On convergence of distributed approximate newton methods:
  Globalization, sharper bounds and beyond.
\newblock \emph{arXiv preprint arXiv:1908.02246}, abs/1908.02246, 2019.
\newblock URL \url{https://arXiv.org/abs/1908.02246}.

\bibitem[Yuan and Li(2022)]{yuan22_conver_fedpr}
Xiao-Tong Yuan and Ping Li.
\newblock On convergence of {FedProx}: Local dissimilarity invariant bounds,
  non-smoothness and beyond.
\newblock \emph{arXiv preprint arXiv:2206.05187}, abs/2206.05187, 2022.
\newblock URL \url{https://arXiv.org/abs/2206.05187}.

\bibitem[Zhang and Lin(2015)]{zhang15_commun_effic_distr_optim_self}
Yuchen Zhang and Xiao Lin.
\newblock {DiSCO}: Distributed optimization for self-concordant empirical loss.
\newblock In Francis~R. Bach and David~M. Blei, editors, \emph{Proceedings of
  the 32nd International Conference on Machine Learning, {ICML} 2015, Lille,
  France, 6-11 July 2015}, volume~37 of \emph{{JMLR} Workshop and Conference
  Proceedings}, pages 362--370. JMLR.org, 2015.
\newblock URL \url{http://proceedings.mlr.press/v37/zhangb15.html}.

\end{thebibliography}

\clearpage
\onecolumn

\part*{Supplementary Material}

\tableofcontents

\section{Basic Facts and Notation}
We shall make use of the following facts from linear algebra: for any $a, b \in \R^d$ and any $\zeta > 0$,
\begin{align}
    \label{eq:bf-inner-product-decomposition}
    2 \ev{a, b} &= \sqn{a} + \sqn{b} - \sqn{a - b}. \\
    \label{eq:bf-squared-triangle-inequality}
    \sqn{a} &\leq \br{1 + \zeta} \sqn{a - b} + \br{1 + \zeta^{-1}} \sqn{b}.
\end{align}

\section{Applications of second-order similarity}
\label{sec:applications-of-second-order-similarity}

In this section we give a few examples where Assumption~\ref{asm:hessian-similarity} holds. These are known in the literature, and we collect them here for motivation.

\paragraph{Relation to smoothness.} The standard assumption in analyzing federated optimization algorithms for Problem~\ref{eq:finite-sum-problem} is smoothness:
\begin{definition}(Smoothness)
We say that a differentiable function \(f\) is \(L\)-smooth if for all \(x, y \in \mathbb{R}^d\) we have \(\norm{\nabla f(x) - \nabla f(y)} \leq L \norm{x-y}\).
\end{definition}
Note that $L$-smoothness of each $f_1, f_2, \ldots, f_M$ implies Assumption~\ref{asm:hessian-similarity} holds with $\delta = L$, as we have for any $x, y \in \R^d$
\begin{align*}
\begin{split}
\frac{1}{M} \sum_{m=1}^M &\sqn{\nabla f_m (x) - \nabla f(x) - \left[ \nabla f_m (y) - \nabla f(y) \right]} \\
&= \frac{1}{M} \sum_{m=1}^M \sqn{\nabla f_m (x) - \nabla f_m (y)} - \sqn{\nabla f(x) - \nabla f(y)} \\
&\leq \frac{1}{M} \sum_{m=1}^M \sqn{\nabla f_m (x) - \nabla f_m (y)} \\
&\leq L^2 \sqn{x-y}. \qquad\qquad\qquad\quad
\end{split}
\end{align*}
Thus we have that, under smoothness, Assumption~\ref{asm:hessian-similarity} holds. While typically we are interested in the case in which $\delta$ is much smaller than $L$, the fact that by default we have $\delta \leq L$ means that we do not lose any generality by considering optimization under Assumption~\ref{asm:hessian-similarity}.

\paragraph{Relation to mean-squared smoothness.} Observe that in the preceding proof we did not actually use that each $f_m$ is $L$-smooth, only that
\begin{align*}
  \frac{1}{M} \sum_{m=1}^M \sqn{\nabla f_m (x) - \nabla f_m (y)} \leq L^2 \sqn{x-y}.
\end{align*}
This assumption is known as \emph{mean-squared smoothness} in the literature, and this proof shows it is also a special case of second-order similarity.

\paragraph{Statistical Learning.} Suppose that each function $f_m$ corresponds to empirical risk minimization with data drawn according to a distribution $\mathcal{D}_w$:
\begin{align}\textstyle
f_m (x) = \frac{1}{n} \sum_{i=1}^n \ell (x, z_{m, i}),
\end{align}
where $\ell$ is $L$-smooth and convex in its first argument and $z_{m, i}$ represents the $i$-th training point on node $m$. If all the training points are drawn i.i.d.\ from the same distribution $z_{m, i} \sim \mathcal{Z}$ for all $m \in [M]$ and $i \in [n]$, then if the losses are quadratic, \citet{shamir13_commun_effic_distr_optim_using} show that, with high probability, Assumption~\ref{asm:hessian-similarity} holds with $\delta = \tilde{\mathcal{O}} \left( \frac{L}{\sqrt{n}} \right)$. This can be much smaller than $L$, especially when the number of data points per node $n$ is large. \citet{zhang15_commun_effic_distr_optim_self} show that a similar concentration holds for non-quadratic minimization, albeit with extra dependence on the data dimensionality $d$. \citet{hendrikx20_statis_precon_accel_gradien_method_distr_optim} remove the dependence on the data dimensionality for generalized linear models under some additional assumptions.

While clients normally do not ordinarily share the same data distribution $\mathcal{D}_w$ in federated optimization, \emph{clustering} clients together such that each group of clients has similar data is a common strategy~\citep{sattler19_clust_feder_learn,ghosh20_effic_framew_clust_feder_learn}, and as such we can apply algorithms designed for Assumption~\ref{asm:hessian-similarity} to clusters of clients with similar data.

\paragraph{Other examples.} \citet{karimireddy19_scaff} observe that Assumption~\ref{asm:hessian-similarity} holds with $\delta = 0$ when using objective perturbation as a differential privacy mechanism for empirical risk minimization, since objective perturbation relies on adding linear noise that does not affect the differences of gradients.  \citet{chayti22_optim_with_acces_to_auxil_infor} give more examples relevant to federated learning.

\paragraph{Hessian formulation.} The way we wrote Assumption~\ref{asm:hessian-similarity} in the main text is
\begin{align*}
\frac{1}{M} \sum_{m=1}^M \sqn{\nabla f_m (x) - \nabla f(x) - \left[ \nabla f_m (y) - \nabla f(y) \right]} \leq \delta^2 \sqn{x-y}.
\end{align*}
For twice-differentiable objectives, this is a biproduct of the following inequality on the Hessians: for all $x \in \mathbb{R}^d$ we have
\begin{align*}
  \frac{1}{M} \sum_{m=1}^M \sqn{\nabla^2 f_m (x) - \nabla^2 f (x)}_{op} \leq \delta^2.
\end{align*}
This motivates the name \emph{second-order similarity}. To see why this is the case, observe that by Taylor's theorem \citep[Theorem 2.8.3]{duistermaat2004multidimensional}, Jensen's inequality and the convexity of the squared norm, we have
\begin{align*}
&\sqn{\nabla f_m (x) - \nabla f(x) - \left[ \nabla f_m (y) - \nabla f(y) \right]} \\
&\qquad = \sqn{\int_{ 0 } ^1 \left[ \nabla^2 f_m (\theta x + (1-\theta) y) - \nabla^2 f(\theta x + (1-\theta) y) \right] (x-y) \mathrm{d}\theta} \\
&\qquad \leq \int_{ 0 }^1 \sqn{\left[ \nabla^2 f_m (\theta x + (1-\theta) y) - \nabla^2 f(\theta x + (1-\theta) y) \right] (x-y)}  \mathrm{ d }\theta \\
&\qquad \leq \int_{ 0 }^1 \sqn{\nabla^2 f_m (\theta x + (1-\theta) y) - \nabla^2 f (\theta x + (1-\theta) y)}_{op} \sqn{x-y} \mathrm{ d }\theta.
\end{align*}
Averaging with respect to $m$ gives
\begin{align*}
&\frac{1}{M} \sum_{m=1}^M \sqn{\nabla f_m (x) - \nabla f(x) - \left[ \nabla f_m (y) - \nabla f(y) \right]} \\
&\qquad \leq \int_{ 0 }^1 \frac{1}{M} \sum_{m=1}^M \sqn{\nabla^2 f_m (\theta x + (1-\theta) y) - \nabla^2 f (\theta x + (1-\theta) y)}_{op} \sqn{x-y} \mathrm{ d }\theta\\
&\qquad \leq \int_{ 0 }^1 \delta^2 \sqn{x-y} \mathrm{ d }\theta \\
&\qquad = \delta^2 \sqn{x-y}.
\end{align*}
Therefore if it holds that $\max_{m \in [M]} \sup_{x \in \mathbb{R}^d} \norm{\nabla^2 f_m (x) - \nabla f(x)}_{\mathrm{op}} \leq \delta$ (a condition known as Hessian similarity), we necessarily have that Assumption~\ref{asm:hessian-similarity} also holds.

\section{Algorithm-independent results}

This section collects all facts and propositions that are algorithm-independent.

\subsection{Facts about the proximal operator}
In this section we derive two useful facts about the proximal operator. Both facts are relatively straightforward to derive.

\begin{fact}\label{fact:prox-of-x-p-gradx}
Let $h: \mathbb{R}^d \to \mathbb{R}$ be a convex differentiable function and \(\eta > 0\) be a stepsize. Then for all $x \in \mathbb{R}^d$,
\begin{align*}
\prox_{\eta h} (x + \eta \nabla h(x)) = x.
\end{align*}
\end{fact}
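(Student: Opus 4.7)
The plan is to verify the identity by appealing to the first-order optimality condition that characterizes the proximal operator. Specifically, writing $z = x + \eta \nabla h(x)$, the proximal value $\prox_{\eta h}(z)$ is defined as the unique minimizer of the strongly convex objective
\begin{equation*}
\phi(y) \;=\; \eta h(y) + \tfrac{1}{2}\|y - z\|^2.
\end{equation*}
Since $\eta h$ is convex and the quadratic term is $1$-strongly convex, $\phi$ is $1$-strongly convex and differentiable, so it has a unique minimizer characterized by $\nabla \phi(y) = 0$.

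Next I would compute $\nabla \phi(y) = \eta \nabla h(y) + (y - z) = \eta \nabla h(y) + (y - x - \eta \nabla h(x))$ and substitute $y = x$ to verify that the optimality condition holds: the two $\eta \nabla h(x)$ terms cancel and the $y - x$ term vanishes, giving $\nabla \phi(x) = 0$. By uniqueness of the minimizer of $\phi$, this yields $\prox_{\eta h}(z) = x$, which is the claim.

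The argument is entirely routine and there is essentially no obstacle; the only thing worth being careful about is invoking convexity (plus differentiability) of $h$ to ensure that the first-order condition is both necessary and sufficient for optimality of the strongly convex objective $\phi$, so that checking $\nabla \phi(x) = 0$ suffices to conclude. No additional assumptions beyond those stated in the fact are required.
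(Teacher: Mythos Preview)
Your proposal is correct and follows essentially the same approach as the paper: both write down the strongly convex objective defining the proximal operator, invoke the first-order optimality condition for its unique minimizer, and verify by direct substitution that $y = x$ satisfies it. The only cosmetic difference is that you work with $\phi(y) = \eta h(y) + \tfrac{1}{2}\|y-z\|^2$ while the paper divides through by $\eta$, which of course does not change the minimizer.
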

\begin{proof}
Solving the proximal is equivalent to
\begin{align*}
\prox_{\eta h} (z) = \argmin_{y \in \mathbb{R}^d} \left( h(y) + \frac{1}{2\eta} \sqn{y-z} \right).
\end{align*}
This is a strongly convex minimization problem for any $\eta > 0$, hence the (necessarily unique) minimizer of this problem satisfies the first order optimality condition
\begin{align*}
\nabla h(y) + \frac{1}{\eta} \br{y-z} = 0.
\end{align*}
Now observe that we have
\begin{align*}
\nabla h(x) + \frac{1}{\eta} \br{x - (x+\eta \nabla h(x))} = \nabla h(x) + \frac{- \eta \nabla h(x)}{\eta} = 0.
\end{align*}
It follows that $x = \prox_{\eta h} (x+\eta \nabla h(x))$.
\end{proof}


\begin{fact}\label{fact:prox-contraction}
(Tight contractivity of the proximal operator). If $h$ is \(\mu\)-strongly convex and differentiable, then for all $\eta > 0$ and for any $x, y \in \mathbb{R}^d$ we have
\begin{align*}
\sqn{\prox_{\eta h} (x) - \prox_{\eta h} (y)} \leq \frac{1}{(1+\eta\mu)^2} \sqn{x-y}
\end{align*}
\end{fact}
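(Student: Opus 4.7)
The plan is to reduce the inequality to the first-order optimality characterization of the proximal, then exploit $\mu$-strong convexity twice. Write $u = \prox_{\eta h}(x)$ and $v = \prox_{\eta h}(y)$. By the first-order optimality argument used to prove Fact~\ref{fact:prox-of-x-p-gradx} (equivalently, by applying that fact with $x$ replaced by $u$ and $v$), we have
\begin{align*}
x = u + \eta \nabla h(u), \qquad y = v + \eta \nabla h(v).
\end{align*}
Subtracting gives $x - y = (u-v) + \eta(\nabla h(u) - \nabla h(v))$, so expanding the squared norm yields
\begin{align*}
\sqn{x-y} = \sqn{u-v} + 2\eta \langle u-v, \nabla h(u) - \nabla h(v) \rangle + \eta^2 \sqn{\nabla h(u) - \nabla h(v)}.
\end{align*}

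Now I would invoke $\mu$-strong convexity in two complementary ways. First, the standard monotonicity consequence $\langle \nabla h(u) - \nabla h(v), u - v \rangle \geq \mu \sqn{u-v}$ handles the cross term. Second, $\mu$-strong convexity also implies $\norm{\nabla h(u) - \nabla h(v)} \geq \mu \norm{u-v}$ (this follows from the same monotonicity inequality combined with Cauchy--Schwarz), which handles the last term. Substituting both bounds,
\begin{align*}
\sqn{x-y} \geq \sqn{u-v} + 2\eta\mu \sqn{u-v} + \eta^2 \mu^2 \sqn{u-v} = (1+\eta\mu)^2 \sqn{u-v},
\end{align*}
which is the claim after dividing by $(1+\eta\mu)^2$.

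I don't expect any serious obstacle here: the only subtlety is remembering to use strong convexity twice (once on the inner product, once on the squared gradient difference) so that the cross term and the quadratic term combine exactly into $(1+\eta\mu)^2$. A common pitfall would be to bound only the inner-product term and discard $\eta^2 \sqn{\nabla h(u) - \nabla h(v)}$, which would yield the weaker factor $1/(1+2\eta\mu)$ rather than the tight $1/(1+\eta\mu)^2$; keeping both pieces is what makes the contractivity tight (and attained on quadratics $h(x) = \tfrac{\mu}{2}\sqn{x}$).
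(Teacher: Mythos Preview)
Your proof is correct and takes essentially the same approach as the paper: both use the optimality condition $x = u + \eta\nabla h(u)$, then bound the cross term via strong monotonicity and the $\eta^2$ term via $\norm{\nabla h(u)-\nabla h(v)}\geq\mu\norm{u-v}$. The only cosmetic difference is that you expand $\sqn{x-y}$ directly and invoke monotonicity, whereas the paper expands $\sqn{p(x)-p(y)}$ first and routes the inner-product bound through the Bregman-divergence identity $\ev{u-v,\nabla h(u)-\nabla h(v)} = D_h(u,v)+D_h(v,u)$; your version is slightly more streamlined but otherwise identical in substance.
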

\begin{proof}
This lemma can be seen as a tighter version of \citep[Lemma 5]{mishchenko21_proxim_feder_random_reshuf} though our proof technique is different. Note that $p(x) = \prox_{\eta h} (x)$ satisfies $\eta \nabla h(p(x)) + \left[ p(x) - x \right] = 0$, or equivalently $p(x) = x - \eta \nabla h(p(x))$. Using this we have
\begin{align}
\sqn{p(x) - p(y)} &= \sqn{\left[ x - \eta \nabla h(p(x)) \right] - \left[ y - \eta \nabla h(p(y)) \right]} \nonumber \\
&= \sqn{\left[ x - y \right] - \eta \left[ \nabla h(p(x)) - \nabla h(p(y)) \right]} \nonumber \\
&= \sqn{x-y} + \eta^2 \sqn{\nabla h(p(x)) - \nabla h(p(y))} - 2 \eta \ev{x-y, \nabla h(p(x)) - \nabla h(p(y))}.\label{eq:59}
\end{align}
Now note that
\begin{align}
\ev{x-y, \nabla h(p(x)) - \nabla h(p(y))} &= \ev{p(x) + \eta \nabla h(p(x)) - \left[ p(y) + \eta \nabla h(p(y)) \right], \nabla h(p(x)) - \nabla h(p(y))} \nonumber \\
&= \ev{p(x) - p(y), \nabla h(p(x)) - \nabla h(p(y))} + \eta \sqn{\nabla h(p(x)) - \nabla h(p(y))}.\label{eq:58}
\end{align}
Combining \cref{eq:58,eq:59} we get
\begin{align}
\begin{split}
\sqn{p(x) - p(y)} &= \sqn{x-y} + \eta^2 \sqn{\nabla h(p(x)) - \nabla h(p(y))} - 2 \eta \ev{p(x)-p(y), \nabla h(p(x)) - \nabla h(p(y))} \\
&\qquad - 2 \eta^2 \sqn{\nabla h(p(x)) - \nabla h(p(y))}
\end{split} \nonumber \\
&= \sqn{x-y} - \eta^2 \sqn{\nabla h(p(x)) - \nabla h(p(y))} - 2 \eta \ev{p(x) - p(y), \nabla h(p(x)) - \nabla h(p(y))}.\label{eq:60}
\end{align}
Let $D_h (u, v) = h(u) - h(v) - \ev{\nabla h(v), u-v}$ be the Bregman divergence associated with $h$ at $u, v$. It is easy to show that
\begin{align*}
\ev{u-v, \nabla h(u) - \nabla h(v)} = D_h (u, v) + D_h (v, u).
\end{align*}
This is a special case of the three-point identity~\citep[Lemma 3.1]{chen93_conv_anal_prox_bregman}. Using this with $u = p(x)$ and $v = p(y)$ and plugging back into \eqref{eq:60} we get
\begin{align*}
\sqn{p(x) - p(y)} &= \sqn{x-y} - \eta^2 \sqn{\nabla h(p(x)) - \nabla h(p(y))} - 2 \eta \left[ D_h (p(x), p(y)) + D_h (p(y), p(x)) \right].
\end{align*}
Note that because $h$ is strongly convex, we have that $D_h (p(y), p(x)) \geq \frac{\mu}{2} \sqn{p(y) - p(x)}$ and $D_h (p(x), p(y)) \geq \frac{\mu}{2} \sqn{p(y) - p(x)}$, hence
\begin{align}\label{eq:7}
\sqn{p(x) - p(y)} &\leq \sqn{x-y} - \eta^2 \sqn{\nabla h(p(x)) - \nabla h(p(y))} - 2 \eta \mu \sqn{p(x) - p(y)}.
\end{align}
Strong convexity implies that for any two points \(u, v\)
\begin{align*}
\sqn{\nabla h(u) - \nabla h(v)} \geq \mu^2 \sqn{u - v},
\end{align*}
see \citep[Theorem 2.1.10]{nesterov18_lectures_cvx_opt} for a proof. Using this in \cref{eq:7} with \(u = p(x)\) and \(v=p(y)\) yields
\begin{align*}
\sqn{p(x) - p(y)} &\leq \sqn{x-y} - \eta^2 \mu^2 \sqn{p(x) - p(y)} - 2 \eta \mu \sqn{p(x) - p(y)}.
\end{align*}
Rearranging gives
\begin{align*}
\left[ 1 + \eta^2 \mu^2 + 2 \eta \mu \right] \sqn{p(x) - p(y)} \leq \sqn{x-y}.
\end{align*}
It remains to notice that \((1+\eta\mu)^2 = 1 + \eta^2 \mu^2 + 2 \eta \mu\).
\end{proof}

\subsection{A lemma for solving recurrences}
\begin{lemma}\label{lem:sc-recurrence}
Suppose that we have a sequence of positive values $(r_k)_{k=0}^{K-1}$ satisfying, for some $\theta > 0$ and some $c > 0$
\begin{align*}
r_{k+1} \leq \frac{1}{1+\theta} \left[ r_k + c \right].
\end{align*}
Then the sequence satisfies
\begin{align*}
r_K \leq \frac{1}{(1+\theta)^K} r_0 + \min \left \{ \frac{K}{1+\theta}, \frac{1}{\theta} \right \} c.
\end{align*}
\end{lemma}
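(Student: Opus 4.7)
The plan is to unroll the recurrence and then estimate the resulting geometric sum in two different ways, one giving each term inside the minimum.

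First I would prove by straightforward induction on $k$ that
\begin{align*}
r_K \leq \frac{1}{(1+\theta)^K} r_0 + c \sum_{i=1}^{K} \frac{1}{(1+\theta)^i}.
\end{align*}
The base case $K=0$ is trivial (the sum is empty), and the inductive step follows by applying the hypothesis $r_{K+1} \leq (1+\theta)^{-1}(r_K + c)$ and dividing the unrolled bound through by $1+\theta$, which shifts the geometric sum by one and adds the new $i=1$ term.

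Next I would bound the geometric sum $S_K := \sum_{i=1}^{K} (1+\theta)^{-i}$ in two ways. For the $1/\theta$ bound, I use the closed form of the geometric series:
\begin{align*}
S_K = \frac{1}{1+\theta} \cdot \frac{1 - (1+\theta)^{-K}}{1 - (1+\theta)^{-1}} = \frac{1 - (1+\theta)^{-K}}{\theta} \leq \frac{1}{\theta}.
\end{align*}
For the $K/(1+\theta)$ bound, I simply use that each of the $K$ terms is at most $1/(1+\theta)$, giving $S_K \leq K/(1+\theta)$. Taking the minimum of the two bounds and substituting back into the unrolled recurrence yields the claim.

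There is no real obstacle here; the only thing to be mildly careful about is the direction of the geometric series indexing when doing the induction, and making sure that $\theta > 0$ is used to guarantee $1 - (1+\theta)^{-1} = \theta/(1+\theta) > 0$ so that the closed form of $S_K$ is valid.
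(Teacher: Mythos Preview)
Your proposal is correct and mirrors the paper's own proof: both unroll the recurrence to obtain the geometric sum and then bound it in the same two ways (term-by-term for the $K/(1+\theta)$ estimate, and via the geometric series formula for the $1/\theta$ estimate). The only cosmetic difference is that the paper writes the sum as $\frac{c}{1+\theta}\sum_{j=0}^{K-1}(1+\theta)^{-j}$ and bounds it by the infinite series rather than the finite closed form, but this is the same argument.
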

\begin{proof}
We start from the recurrence to get
\begin{align*}
r_{k+1} &\leq \frac{1}{1+\theta} r_k + \frac{c}{1+\theta} \\
&\leq \frac{1}{1+\theta} \left[ \frac{1}{1+\theta} r_{k-1} + \frac{c}{1+\theta}  \right] + \frac{c}{1+\theta} \\
&= \frac{1}{(1+\theta)^2} r_{k-1} + \frac{c}{(1+\theta)^2} + \frac{c}{(1+\theta)}.
\end{align*}
Continuing similarly we obtain
\begin{align}\label{eq:41}
r_K \leq \frac{1}{(1+\theta)^K} r_0 + \frac{c}{1+\theta} \sum_{j=0}^{K-1} \br{\frac{1}{1+\theta}}^j.
\end{align}
We can now bound the latter sum in two ways, the first is to note that since $1+\theta > 1$ we have that $\frac{1}{1+\theta} < 1$, and hence
\begin{align}\label{eq:39}
\sum_{j=0}^{K-1} \br{\frac{1}{1+\theta}}^j \leq \sum_{j=0}^{K-1} 1^j = K.
\end{align}
The second way is to use the convergence of the geometric series
\begin{align}\label{eq:40}
\sum_{j=0}^{K-1} \br{\frac{1}{1+\theta}}^j \leq \sum_{j=0}^{\infty} \br{\frac{1}{1+\theta}}^j = \frac{1}{1 - \frac{1}{1+\theta}} = \frac{1+\theta}{\theta}.
\end{align}
Using \cref{eq:39,eq:40} in \eqref{eq:41} gives
\begin{align*}
r_K &\leq \frac{1}{(1+\theta)^K} r_0 + \frac{c}{1+\theta} \min \left \{ K, \frac{1+\theta}{\theta} \right \} \\
&= \frac{1}{(1+\theta)^K} r_0 + c \cdot \min \left \{ \frac{K}{1+\theta}, \frac{1}{\theta} \right \},
\end{align*}
and this is the lemma's statement.
\end{proof}

\section{Proofs for SPPM (Algorithm~\ref{alg:spp})}

\begin{proof}[Proof of Theorem~\ref{thm:spp-rate}]
Using \cref{eq:bf-squared-triangle-inequality} and our assumption that the proximal operators are solved up to the accuracy $b$ we have
\begin{align}
\sqn{x_{k+1} - x_{*}} &= \sqn{x_{k+1} - \prox_{\eta f_{\xi_k}} (x_k) + \prox_{\eta f_{\xi_k}} (x_k) - x_{*}} \nonumber \\
&\leq \left( 1+\frac{1}{\eta\mu} \right) \sqn{x_{k+1} - \prox_{\eta f_{\xi_k}} (x_k)} + (1+\eta\mu) \sqn{\prox_{\eta f_{\xi_k}} (x_k) - x_{*}} \nonumber \\
&\leq \left( \frac{1+\eta\mu}{\eta \mu} \right) b + (1+\eta\mu) \sqn{\prox_{\eta f_{\xi_k}} (x_k) - x_{*}}.\label{eq:19}
\end{align}
For the second term in \cref{eq:19} we have by \Cref{fact:prox-of-x-p-gradx} that $x_{*} = \prox_{\eta f_{\xi_k}} (x_{*} + \eta \nabla f_{\xi_k} (x_{*}))$, then using the contraction of the prox (\Cref{fact:prox-contraction}) we get
\begin{align}
\sqn{\prox_{\eta f_{\xi_k}} (x_k) - x_{*}} &= \sqn{\prox_{\eta f_{\xi_k}} (x_k) - \prox_{\eta f_{\xi_k}} (x_{*} + \eta \nabla f_{\xi_k} (x_{*}))} \nonumber \\
&\leq \frac{1}{(1+\eta\mu)^2} \sqn{x_k - (x_{*} + \eta \nabla f_{\xi_k} (x_{*}))}.\label{eq:20}
\end{align}
Expanding out the square we have
\begin{align*}
\sqn{\prox_{\eta f_{\xi_k}} (x_k) - x_{*}} &\leq \frac{1}{(1+\eta\mu)^2} \sqn{x_k - x_{*} - \eta \nabla f_{\xi_k} (x_{*})} \\
&= \frac{1}{(1+\eta\mu)^2} \left[ \sqn{x_k - x_{*}} + \eta^2 \sqn{\nabla f_{\xi_k} (x_{*})} - 2 \eta \ev{x_k - x_{*}, \nabla f_{\xi_k} (x_{*})} \right].
\end{align*}
Denote expectation conditional on $x_k$ by $\ec[k]{\cdot}$. Taking expectation conditional on $x_k$ we get
\begin{align*}
\ecn[k]{\prox_{\eta f_{\xi_k}} (x_k) - x_{*}} &\leq \frac{1}{(1+\eta\mu)^2} \left[ \sqn{x_k - x_{*}} + \eta^2 \ecn{\nabla f_{\xi_k} (x_{*})} - 2 \eta \ev{x_k - x_{*}, \ec[k]{\nabla f_{\xi_k} (x_{*})}} \right] \\
&= \frac{1}{(1+\eta\mu)^2} \left[ \sqn{x_k - x_{*}} + \eta^2 \sigma_{*}^2 \right],
\end{align*}
where we used that $\ec{\nabla f_{\xi_k} (x_{*})} = \nabla f(x_{*}) = 0$ and the definition of $\sigma_{*}^2$. Taking conditional expectation in \cref{eq:19} and plugging the last line gives
\begin{align*}
\ecn[k]{x_{k+1} - x_{*}} &\leq \left( \frac{1+\eta\mu}{\eta\mu} \right) b + \frac{1+\eta\mu}{(1+\eta\mu)^2} \left[ \sqn{x_k - x_{*}} + \eta^2 \sigma_{*}^2 \right] \\
                         &= \frac{1}{1+\eta\mu} \left[ \sqn{x_k - x_{*}} + \eta^2 \sigma_{*}^2 +  \frac{(1+\eta\mu)^2}{\eta\mu} b \right].
\end{align*}
Taking unconditional expectation gives
\begin{align*}
\ecn{x_{k+1} - x_{*}} \leq \frac{1}{1+\eta\mu} \left[ \ecn{x_k - x_{*}} + \eta^2 \sigma_{*}^2 + \frac{(1+\eta\mu)^2}{\eta\mu} b \right].
\end{align*}
We can then use \Cref{lem:sc-recurrence} to get that at step \(K\) we have
\begin{align}\label{eq:4}
\ecn{x_K - x_{*}} \leq \left( \frac{1}{1+\eta\mu} \right)^K \sqn{x_0 - x_{*}} + \frac{\eta \sigma_{*}^2}{\mu} + \frac{(1+\eta\mu)^2}{(\eta\mu)^2} b.
\end{align}
This proves the first statement of the theorem. For the second statement, observe that when \(\eta = \frac{\mu \epsilon}{2 \sigma_{*}^2}\) and $b \leq \frac{\epsilon}{4} \frac{(\eta\mu)^2}{(1+\eta\mu)^2}$ we have
\begin{align}\label{eq:1}
\frac{\eta \sigma_{*}^2}{\mu} + \frac{(1+\eta\mu)^2}{(\eta\mu)^2} b &\leq \frac{\epsilon}{2} + \frac{\epsilon}{4} = \frac{3 \epsilon}{4}.
\end{align}
Moreover, we have using the inequality \(1-x \leq \exp(-x)\) for all \(x>0\) and our choice of \(K\) that
\begin{align}\label{eq:3}
\left( \frac{1}{1+\eta\mu} \right)^K &= \left( 1 - \frac{\eta\mu}{1+\eta\mu} \right)^K \leq \exp \left( - \frac{\eta \mu K}{1+\eta \mu} \right) \leq \frac{\epsilon}{4} \frac{1}{\sqn{x_0 - x_{*}}}.
\end{align}
Plugging \cref{eq:1,eq:3} into \cref{eq:4} yields
\begin{align*}
\ecn{x_K - x_{*}} \leq \epsilon,
\end{align*}
and this is the second statement of the theorem.
\end{proof}

\section{Related work for SPPM}
\label{sec:related-work-sppm}
\citet{toulis15_proxim_robbin_monro_method} study SPPM where they assume each stochastic proximal operation has unbiased and bounded error from the proximal operation with respect to the full proximal, i.e. for \(\epsilon_{\xi} (x) = \frac{1}{\eta} \left[ \prox_{\eta f_{\xi}} (x) - \prox_{\eta f} (x) \right]\) we have:
\begin{align}\label{eq:5}
\ec{\epsilon_{\xi}} = 0, && \text { and, } && \ecn{\epsilon_\xi} \leq \sigma^2.
\end{align}
Under this assumption, \citet{toulis15_proxim_robbin_monro_method} prove that the stochastic proximal point method can achieve a rate that is completely independent of the smoothness constant, and which matches the optimal rate for \cref{eq:expectation-problem} for (potentially) nonsmooth objectives. \citet{kim22_spp_with_momentum} show a similar result under the same condition for a momentum variant of SPPM. It is not clear how to satisfy \eqref{eq:5} in practice: the next example shows that even in the simple setting of quadratic minimization, the iterations are not unbiased.

\begin{example}
Take \(f_1 = a x^2\), \(f_2 = 2 ax^2\), and \(f = \frac{1}{2} (f_1 + f_2)\), then for any \(x \in \mathbb{R}^d\) we have for \(\xi\) drawn uniformly at random from \(\{1, 2\}\):
\begin{align*}
\ec{\prox_{\eta f_{\xi}} (x) - \prox_{\eta f} (x)} = \frac{(1+3 \eta a) x}{1 + 6 \eta a + 8 \eta^2 a^2} - \frac{x}{3 \eta a + 1} \neq 0.
\end{align*}
Thus the errors are \emph{not} unbiased. Moreover, it can be shown that the variance scales with \(\sqn{x}\), and thus can be made very large.
\end{example}

In contrast, the convergence rate given by Theorem~\ref{thm:spp-rate} does not require condition \eqref{eq:5}, and results in linear convergence for the functions of Example 1 (as both \(f_1\) and \(f_2\) share a minimizer at \(x=0\)). We note that \citet{ryu17_stochastic_prox} also study the convergence of SPPM without a condition like \eqref{eq:5}, but their theory does not show convergence to an \(\epsilon\)-approximate solution even if the stepsize is taken proportional to \(\epsilon\). \citet{patrascu17_nonas_conver_stoch_proxim_point} also analyze the convergence of SPPM and present two different cases: if the stepsize is held constant (as in our Theorem~\ref{thm:spp-rate}), then their theory shows only convergence to a neighborhood whose size cannot be made small by varying the stepsize; Alternatively, by using a decreasing stepsize they can show a \(\mathcal{O} \left( \frac{1}{\epsilon} \right)\) iteration complexity, but then this complexity requires that each \(f_{\xi}\) is smooth. As mentioned previously, \citet[Proposition 5.3]{asi19_stoch_approx_proxim_point_method} show the same $\mathcal{O} \left( \frac{1}{\epsilon} \right)$ rate without requiring smoothness or a condition like~\eqref{eq:5}, but using exact evaluations of the proximal operator. Theorem~\ref{thm:spp-rate} gives the same iteration complexity without requiring smoothness or bounded variance, and while allowing for approximate proximal point operator evaluations. Note that the improvement of allowing approximate evaluations over \citep{asi19_stoch_approx_proxim_point_method} is not very significant, as the approximation has to be very tight. The main way we depart from \citep{asi19_stoch_approx_proxim_point_method} is that we use the contractivity of the proximal as the main proof tool, and this extends more easily to the variance-reduced setting of SVRP.


\section{Proofs for SVRP (Algorithm~\ref{alg:svrp})}
\label{sec:proofs-svrp}

\begin{proof}[Proof of Theorem~\ref{thm:svrp-rate}]
Let $\tilde{x}_{k+1} = \prox_{\eta f_{m_k}} (x_k - \eta g_k)$. Then by \cref{eq:bf-squared-triangle-inequality} and our assumption that $\sqn{x_{k+1} - \tilde{x}_{k+1}} \leq b$ we have for any $a > 0$
\begin{align*}
\sqn{x_{k+1} - x_{*}} &= \sqn{x_{k+1} - \tilde{x}_{k+1} + \tilde{x}_{k+1} - x_{*}} \\
&\leq \left( 1+a^{-1} \right) \sqn{x_{k+1} - \tilde{x}_{k+1}} + (1+a) \sqn{\tilde{x}_{k+1} - x_{*}} \\
&\leq \left( 1+a^{-1} \right) b + (1+a) \sqn{\tilde{x}_{k+1} - x_{*}}.
\end{align*}
Plugging in $a = \frac{\eta^2\mu^2}{1+2\eta\mu}$ we get
\begin{align}
\label{eq:21}
\sqn{x_{k+1} - x_{*}} &\leq \left( \frac{1+\eta\mu}{\eta\mu} \right)^2 b + \frac{(1+\eta\mu)^2}{1+2\eta\mu} \sqn{\tilde{x}_{k+1} - x_{*}}.
\end{align}
For the second term in \cref{eq:21}, we have by \Cref{fact:prox-of-x-p-gradx} that $x_{*} = \prox_{\eta f_{m_k}} (x_{*} + \eta \nabla f_{m_k} (x_{*}))$, then using \Cref{fact:prox-contraction} we get
\begin{align*}
\sqn{\tilde{x}_{k+1} - x_{*}} &= \sqn{\prox_{\eta f_{m_k}} (x_k - \eta g_k) - \prox_{\eta f_{m_k}} (x_{*} + \eta \nabla f_{m_k} (x_{*}))} \\
&\leq \frac{1}{(1+\eta\mu)^2} \sqn{x_k - \eta g_k - (x_{*} + \eta \nabla f_{m_k} (x_{*}))}.
\end{align*}
Expanding out the square we have
\begin{align*}
\sqn{\tilde{x}_{k+1} - x_{*}} &\leq \frac{1}{(1+\eta\mu)^2} \sqn{x_k - x_{*} - \eta \br{g_k + \nabla f_{m_k} (x_{*})}} \\
&= \frac{1}{(1+\eta\mu)^2} \left[ \sqn{x_k - x_{*}} + \eta^2 \sqn{g_k + \nabla f_{m_k} (x_{*})} - 2 \eta \ev{x_k - x_{*}, g_k + \nabla f_{m_k} (x_{*})} \right].
\end{align*}
We denote by $\ec[k]{\cdot}$ the expectation conditional on all information up to (and including) the iterate $x_k$, then
\begin{align}
\begin{split}
\ecn[k]{\tilde{x}_{k+1} - x_{*}} \leq \frac{1}{(1+\eta\mu)^2} \lbrack \sqn{x_k - x_{*}} &+ \eta^2 \ecn[k]{g_k + \nabla f_{m_k} (x_*)} \\
&- 2 \eta \ev{x_k - x_{*}, \ec[k]{g_k + \nabla f_{m_k} (x_{*})}} \rbrack,
\end{split}\label{eq:n-48}
\end{align}
where in the last term the expectation went inside the inner product since the expectation is conditioned on knowledge of $x_k$, and the randomness in $m$ is independent of $x_k$. Note that this expectation can be computed as
\begin{align*}
\ec[k]{g_k + \nabla f_{m_k} (x_{*})} &= \ec[k]{\nabla f(w_k) - \nabla f_{m_k} (w_k) + \nabla f_{m_k} (x_{*})} \\
&= \nabla f(w_k) - \nabla f(w_k) + \nabla f(x_{*}) \\
&= 0 + 0 = 0,
\end{align*}
where we used that since $x_{*}$ minimizes $f$ we must have $\nabla f(x_{*}) = 0$. Plugging this into \eqref{eq:n-48} gives
\begin{align}\label{eq:n-38}
\ecn[k]{\tilde{x}_{k+1} - x_*} \leq \frac{1}{(1+\eta\mu)^2} \left[ \sqn{x_k - x_{*}} + \eta^2 \ecn[k]{g_k + \nabla f_{m_k} (x_{*})} \right].
\end{align}
For the second term, we can add $\nabla f(x_{*})$ term inside (as it is a zero) to get
\begin{align}
\ecn[k]{g_k + \nabla f_{m_k} (x_{*})} &= \ecn[k]{g_k + \nabla f_{m_k} (x_{*}) - \nabla f(x_{*})} \nonumber \\
&= \ecn[k]{\nabla f(w_k) - \nabla f_{m_k} (w_k) + \nabla f_{m_k} (x_{*}) - \nabla f(x_{*})} \nonumber \\
&= \ecn[k]{\nabla f(w_k) - \nabla f_{m} (w_k) - \left[ \nabla f(x_{*}) - \nabla f_{m} (x_{*}) \right]} \nonumber \\
&= \frac{1}{M} \sum_{m=1}^M \sqn{\nabla f(w_k) - \nabla f_{m} (w_k) - \left[ \nabla f(x_{*}) - \nabla f_{m} (x_{*}) \right]}.\label{eq:n-37}
\end{align}
Using \Cref{asm:hessian-similarity} with \cref{eq:n-37} we have
\begin{align*}
\ecn[k]{g_k + \nabla f_{m} (x_{*})} &\leq \frac{1}{M} \sum_{m=1}^M \sqn{\nabla f(w_k) - \nabla f_{m} (w_k) - \left[ \nabla f(x_{*}) - \nabla f_{m} (x_{*}) \right]} \\
&\leq \delta^2 \sqn{w_k - x_{*}}.
\end{align*}
Hence we can bound \eqref{eq:n-38} as
\begin{align}\label{eq:66}
\ecn[k]{\tilde{x}_{k+1} - x_{*}} &\leq \frac{1}{(1+\eta\mu)^2} \left[ \sqn{x_k - x_{*}} + \eta^2 \delta^2 \sqn{w_k - x_{*}} \right].
\end{align}
Taking conditional expectation in \cref{eq:21} and plugging the estimate of \cref{eq:66} in we get
\begin{align}
\ecn[k]{x_{k+1} - x_{*}} &\leq \left( \frac{1+\eta\mu}{\eta\mu} \right)^2 b + \frac{(1+\eta\mu)^2}{1+2\eta\mu} \ecn[k]{\tilde{x}_{k+1} - x_{*}} \nonumber \\
&\leq \left( \frac{1+\eta\mu}{\eta\mu} \right)^2 b + \frac{1}{1+2\eta\mu} \left[ \sqn{x_k - x_{*}} + \eta^2 \delta^2 \sqn{w_k - x_{*}} \right].\label{eq:22}
\end{align}
Observe that by design we have
\begin{align}\label{eq:64}
\ecn[k]{w_{k+1} - x_{*}} = p \cdot \sqn{x_{k+1} - x_{*}} + (1-p) \cdot \sqn{w_k - x_{*}}.
\end{align}
Let \(\alpha = \frac{\eta\mu}{p}\), then using \cref{eq:22,eq:64} we have
\begin{align}
&\ecn[k]{x_{k+1} - x_{*}} + \alpha \ecn[k]{w_{k+1} - x_{*}} = (1+ \alpha p) \ecn[k]{x_{k+1} - x_{*}} + \alpha (1-p) \cdot \sqn{w_k - x_{*}} \nonumber \\
&\qquad \leq \frac{1+\alpha p}{1+2 \eta \mu} \left[ \sqn{x_k - x_{*}} + \eta^2 \delta^2 \sqn{w_k - x_{*}} \right] + \alpha (1-p) \cdot \sqn{w_k - x_{*}} + (1+\alpha p) \left( \frac{1+\eta\mu}{\eta\mu} \right)^2 b \nonumber \\
&\qquad = \frac{1+\alpha p}{1+2 \eta \mu} \left[ \sqn{x_k - x_{*}} + \eta^2 \delta^2 \sqn{w_k - x_{*}} \right] + \alpha (1-p) \cdot \sqn{w_k - x_{*}} + \frac{(1+\eta\mu)^3}{(\eta\mu)^2} b \nonumber \\
  &\qquad = \frac{1+\alpha p}{1+2 \eta \mu} \sqn{x_k - x_{*}} + \alpha \left( 1-p + \frac{\eta^{2}\delta^2 (1+\alpha p)}{\alpha (1+2\eta\mu)} \right) \sqn{w_k - x_{*}} + \frac{(1+\eta\mu)^3}{(\eta\mu)^2} b \nonumber \\
&\qquad = \frac{1+\eta \mu}{1+2\eta\mu} \sqn{x_k - x_{*}} + \alpha \left( 1 - p + \frac{p \eta \delta^2}{\mu} \frac{1+\eta \mu}{1 + 2 \eta \mu} \right) \sqn{w_k - x_{*}} + \frac{(1+\eta\mu)^3}{(\eta\mu)^2} b. \label{eq:65}
\end{align}
Note that by condition on the stepsize we have \(\eta \delta^2/\mu \leq \frac{1}{2}\), hence
\begin{align*}
\frac{\eta \delta^2}{\mu} \cdot \frac{1+\eta\mu}{1+2\eta\mu} \leq \frac{1}{2} \frac{1+\eta\mu}{1+2\eta\mu} \leq \frac{1}{2} \cdot 1 = \frac{1}{2}.
\end{align*}
Using this in the second term of \cref{eq:65} gives
\begin{align*}
&\ecn[k]{x_{k+1} - x_{*}} + \alpha \ecn[k]{w_{k+1} - x_{*}} \\
&\qquad \leq \frac{1+\eta \mu}{1+2\eta\mu} \sqn{x_k - x_{*}} + \alpha \left( 1 - p + \frac{p}{2} \right) \sqn{w_k - x_{*}} + \frac{(1+\eta\mu)^3}{(\eta\mu)^2} b \\
&\qquad = \frac{1+\eta \mu}{1+2\eta\mu} \sqn{x_k - x_{*}} + \alpha \left( 1 - \frac{p}{2} \right) \sqn{w_k - x_{*}} + \frac{(1+\eta\mu)^3}{(\eta\mu)^2} b \\
&\qquad \leq \max \left\{ \frac{1+\eta\mu}{1+2\eta\mu}, 1- \frac{p}{2} \right\} \left[ \sqn{x_k - x_{*}} + \alpha \sqn{w_k - x_{*}} \right] + \frac{(1+\eta\mu)^3}{(\eta\mu)^2} b.
\end{align*}
Define the Lyapunov function \(V_k = \sqn{x_k - x_{*}} + \frac{\eta\mu}{p} \sqn{w_k - x_{*}}\). Then the last equation can simply be written as
\begin{align*}
\ec[k]{V_{k+1}} \leq \max \left\{ \frac{1+\eta\mu}{1+2\eta\mu}, 1- \frac{p}{2} \right\} \cdot V_k + \frac{(1+\eta\mu)^3}{(\eta\mu)^2} b.
\end{align*}
Taking unconditional expectation gives
\begin{align*}
\ec{V_{k+1}} \leq \max \left\{ \frac{1+\eta\mu}{1+2\eta\mu}, 1- \frac{p}{2} \right\} \ec{V_k} + \frac{(1+\eta\mu)^3}{(\eta\mu)^2} b.
\end{align*}
Let \(\tau = \min \{ \frac{\eta\mu}{1+2\eta\mu}, \frac{p}{2} \}\), then \(\max \left\{ \frac{1+\eta\mu}{1+2\eta\mu}, 1- \frac{p}{2} \right\} = 1 - \tau\), and we get the simple recursion
\begin{align*}
\ec{V_{k+1}} \leq (1-\tau) \ec{V_k} + \frac{(1+\eta\mu)^3}{(\eta\mu)^2} b.
\end{align*}
Iterating this for \(k\) steps and using the formula for the sum of the geometric series gives for any \(k \leq K\),
\begin{align}
\ec{V_k} &\leq (1-\tau)^k \ec{V_0} + \frac{(1+\eta\mu)^3}{(\eta\mu)^2} b \sum_{t=0}^{k-1} (1-\tau)^t \nonumber \\
&\leq (1-\tau)^k \ec{V_0} + \frac{(1+\eta\mu)^3}{(\eta\mu)^2} b \sum_{t=0}^{\infty} (1-\tau)^t \nonumber \\
&= (1-\tau)^k \ec{V_0} + \frac{(1+\eta\mu)^3}{(\eta\mu)^2 \tau} b.\label{eq:67}
\end{align}
Now note that
\begin{align}\label{eq:68}
\ecn{x_k - x_{*}} \leq \ec{V_k}.
\end{align}
And by initialization we have \(w_0 = x_0\), hence
\begin{align}\label{eq:69}
\ec{V_0} = \sqn{x_0 - x_{*}} + \frac{\eta\mu}{p} \sqn{w_0 - x_{*}} = \left( 1 + \frac{\eta\mu}{p} \right) \sqn{x_0 - x_{*}}.
\end{align}
Plugging \cref{eq:68,eq:69} into \cref{eq:67} gives for any \(k \leq K\),
\begin{align}\label{eq:23}
\ecn{x_k - x_{*}} \leq \left( 1 + \frac{\eta\mu}{p} \right) (1-\tau)^k \sqn{x_0 - x_{*}} + \frac{(1+\eta\mu)^3}{(\eta\mu)^2 \tau} b.
\end{align}
For the second statement of the theorem, observe that by assumption on $b$ we can bound the right hand side of \cref{eq:23} as
\begin{align*}
\ecn{x_k - x_{*}} \leq \left( 1 + \frac{\eta\mu}{p} \right) (1-\tau)^k \sqn{x_0 - x_{*}} + \frac{\epsilon}{2}.
\end{align*}
Next, we use that for all $x \geq 0$ we have $1-x \leq \exp(-x)$, hence we have after $K$ iterations of SVRP that
\begin{align*}
\ecn{x_K - x_{*}} \leq \left( 1 + \frac{\eta \mu}{p} \right) \exp(-\tau K) \sqn{x_0 - x_{*}} + \frac{\epsilon}{2}.
\end{align*}
Thus if we run for the following number of iterations
\begin{align}\label{eq:8}
K \geq \frac{1}{\tau} \log \left( \frac{2 \sqn{x_0 - x_{*}} \left( 1 + \frac{\eta \mu}{p} \right)}{\epsilon} \right)
\end{align}
We get that $\ecn{x_K - x_{*}} \leq \frac{\epsilon}{2} + \frac{\epsilon}{2} = \epsilon$. Choose $\eta = \frac{\mu}{2 \delta^2}$ and $p = \frac{1}{M}$, then \cref{eq:8} reduces to
\begin{align*}
K \geq 2 \max \left\{ \frac{\delta^2}{\mu^2} + 1, M \right\} \log \left( \frac{2 \sqn{x_0 - x_{*}} \left( 1 + \frac{\mu^2 M}{2 \delta^2} \right)}{\epsilon} \right).
\end{align*}
This gives the second part of the theorem.
\end{proof}

\section{Proofs for Catalyst+SVRP}
\label{sec:proofs-catalyst+svrp}

The convergence rate of Catalyst is given by the following proposition:

\begin{proposition}\label{prop:catalyst-convergence}
(Catalyst convergence rate). Run Catalyst (Algorithm~\ref{alg:catalyst}) with smoothing parameter $\gamma$ for a $\mu$-strongly convex function $f$. Let $q = \frac{\mu}{\mu + \gamma}$ and choose $\rho \leq \sqrt{q}$. Assume at each timestep $t = 1, 2, \ldots, T$ we have that $x_t$ from \cref{eq:9} satisfies
\begin{align}\label{eq:catalyst-req}
\ec{h_t (x_t) - \min_{x \in \mathbb{R}^d} h_t (x)} \leq \epsilon_t \eqdef \frac{2}{9} \left( f(x_0) - f(x_{*}) \right) (1-\rho)^t.
\end{align}
Then the iterates generated by Algorithm~\ref{alg:catalyst} satisfy
\begin{align}\label{eq:16}
\ec{f(x_t) - f(x_{*})} \leq \frac{8}{(\sqrt{q} - \rho)^2} \left( 1-\rho \right)^{t+1} (f(x_0) - f(x_{*})).
\end{align}
\end{proposition}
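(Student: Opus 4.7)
The plan is to adapt the estimate-sequence / Lyapunov-function argument of Güler and Nesterov for the accelerated proximal point method so that it accommodates inexact proximal evaluations. First I would observe that each subproblem $h_t$ is $(\mu+\gamma)$-strongly convex, so it admits a unique minimizer $\hat{x}_t$. Combining the inexactness assumption \eqref{eq:catalyst-req} with strong convexity yields $\ecn{x_t - \hat{x}_t} \leq 2 \epsilon_t / (\mu + \gamma)$, so the iterate approximation error inherits the geometric decay $(1-\rho)^t$ of the function-value error $\epsilon_t$.

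Second I would set up the standard accelerated potential. Introduce the auxiliary sequence $v_t$ implicitly defined through $y_t = \alpha_t v_{t+1} + (1-\alpha_t) x_t$, which is consistent with the extrapolation coefficient $\beta_t$ of Algorithm~\ref{alg:catalyst}, and consider a Lyapunov function of the form
\begin{equation*}
\Phi_t = f(x_t) - f(x_*) + C_t \sqn{v_t - x_*},
\end{equation*}
where $C_t$ is chosen so that the parameter update $\alpha_t^2 = (1-\alpha_t) \alpha_{t-1}^2 + q \alpha_t$ is exactly what makes the one-step contraction $\Phi_{t+1} \leq (1-\alpha_t) \Phi_t$ hold in the exact case (matching the analysis of Nesterov's method, but with the proximal operator playing the role of a $\gamma$-smooth gradient step). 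Since $\alpha_0 = \sqrt{q}$ is the fixed point of the $\alpha$-recursion, $\alpha_t = \sqrt{q}$ for all $t$, giving rate $(1-\sqrt{q})^t$ in the exact case.

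Third I would quantify the damage of the inexact solve. Substituting $x_t$ for $\hat{x}_t$ in the descent inequality for $\Phi_t$ introduces extra terms; splitting the resulting cross terms via Cauchy-Schwarz into contributions bounded by $\sqrt{\epsilon_t \Phi_t}$ and a pure additive $\epsilon_t$ produces a square-root recursion
\begin{equation*}
\sqrt{\ec{\Phi_{t+1}}} \leq \sqrt{1-\sqrt{q}}\,\sqrt{\ec{\Phi_t}} + c \sqrt{\epsilon_t}
\end{equation*}
for some absolute constant $c$. Iterating and plugging in $\epsilon_s = \tfrac{2}{9}(f(x_0) - f(x_*))(1-\rho)^s$ reduces the task to summing a geometric series with ratio $\sqrt{(1-\rho)/(1-\sqrt{q})} \geq 1$, where the inequality uses $\rho \leq \sqrt{q}$. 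The sum is then controlled by $(1-\rho)^{t/2}$ divided by $\sqrt{1-\rho} - \sqrt{1-\sqrt{q}}$; lower bounding the denominator by $(\sqrt{q}-\rho)/2$ and squaring gives the claimed rate \eqref{eq:16} with prefactor $8/(\sqrt{q}-\rho)^2$.

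The main obstacle I expect is the bookkeeping around the inexact solve: correctly identifying how the error $x_t - \hat{x}_t$ propagates through both the $v_t$ update and the function-value descent, and then extracting the cross terms so that the error enters the potential recursion as $\sqrt{\epsilon_t}$ (rather than in a mixed form that would break the geometric summation). This square-root structure is precisely what forces the geometric schedule $\epsilon_t \propto (1-\rho)^t$ with $\rho < \sqrt{q}$ to be the right choice, because it keeps the additive error small enough at every step that it is absorbed without degrading the $(1-\sqrt{q})$ contraction rate, with the loss appearing only through the prefactor $1/(\sqrt{q}-\rho)^2$.
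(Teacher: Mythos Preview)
Your proposal is correct and in fact far more detailed than what the paper does: the paper's entire proof is a one-line citation of \citep[Theorem 3.1]{lin15_univer_catal_first_order_optim}, deferring the argument to the original Catalyst paper. The estimate-sequence / Lyapunov argument you outline---setting up the potential $\Phi_t$, using that $\alpha_t \equiv \sqrt{q}$ is the fixed point, extracting a square-root recursion $\sqrt{\ec{\Phi_{t+1}}} \leq \sqrt{1-\sqrt{q}}\sqrt{\ec{\Phi_t}} + c\sqrt{\epsilon_t}$, and then summing the resulting geometric series in $\sqrt{(1-\rho)/(1-\sqrt{q})}$---is precisely the proof strategy in that reference, so you are not taking a different route so much as supplying the argument the paper chose to outsource.
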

\begin{proof}
This is \citep[Theorem 3.1]{lin15_univer_catal_first_order_optim}.
\end{proof}

We see that in order to apply Catalyst, we have to solve the problem point iterations of \cref{eq:9} up to the accuracy given by \cref{eq:catalyst-req}. However, the accuracy $\epsilon_t$ depends on the suboptimality gap $f(x_0) - f(x_{*})$, which we do not have access to in general. The next proposition shows that this is not a problem for methods with linear convergence, and we can instead run the method for a fixed number of iterations:
\begin{proposition}\label{prop:inner-method-complexity-catalyst}
(Inner method complexity in Catalyst). Consider Algorithm~\ref{alg:catalyst} run with smoothing parameter $\gamma$ on a $\mu$-strongly convex function $f$, with $q = \frac{\mu}{\mu + \gamma}$ and $\rho \leq \sqrt{q}$. Suppose that the method $\mathcal{A}$ generates iterates $(z_s)_{s \geq 0}$ such that
\begin{align}\label{eq:11}
\ec{h_t (z_s) - \min_{x \in \mathbb{R}^d} h_t (x)} \leq A (1 - \tau_{\mathcal{A}, h})^s \left( h_t (z_0) - \min_{x \in \mathbb{R}^d} h_t (x) \right),
\end{align}
then the precision $\epsilon_t$ from \cref{eq:catalyst-req} is reached in expectation when the number of iterations $s$ of $\mathcal{A}$ exceeds $T_{\mathcal{A}}$ where
\begin{align*}
T_{\mathcal{A}} = \frac{1}{\tau_{\mathcal{A}, h}} \log \left( A \cdot \left( \frac{2}{1-\rho} + \frac{2592 \gamma}{\mu (1-\rho)^2 (\sqrt{q} - \rho)^2} \right) \right).
\end{align*}
\end{proposition}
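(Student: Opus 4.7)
The plan has two main stages: reduce the stopping criterion to a ratio bound, and then control that ratio inductively, combining the inner method's accuracy at the previous step with the outer Catalyst convergence rate from \Cref{prop:catalyst-convergence}.

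\textbf{Reduction to a ratio bound.} Taking logarithms in the linear-convergence assumption \eqref{eq:11} and using $\log\tfrac{1}{1-\tau_{\mathcal{A},h}} \geq \tau_{\mathcal{A},h}$, the condition $\ec{h_t(z_s) - \min h_t} \leq \epsilon_t$ is implied as soon as
\begin{equation*}
s \;\geq\; \frac{1}{\tau_{\mathcal{A},h}} \log\!\left(\frac{A \cdot \ec{h_t(z_0) - \min h_t}}{\epsilon_t}\right).
\end{equation*}
With the warm start $z_0 = x_{t-1}$ from \eqref{eq:9}, the entire task reduces to showing that $\ec{h_t(x_{t-1}) - \min h_t}/\epsilon_t$ is bounded uniformly in $t$ by $\tfrac{2}{1-\rho} + \tfrac{2592\gamma}{\mu(1-\rho)^2(\sqrt{q}-\rho)^2}$.

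\textbf{Inductive telescoping.} The proof proceeds by strong induction on $t$, maintaining $\ec{h_{t-1}(x_{t-1}) - \min h_{t-1}} \leq \epsilon_{t-1}$. Writing $x_t^{\star}$ for the minimizer of $h_t$, decompose
\begin{equation*}
h_t(x_{t-1}) - \min h_t \;=\; [h_{t-1}(x_{t-1}) - \min h_{t-1}] \;+\; [h_t(x_{t-1}) - h_{t-1}(x_{t-1})] \;+\; [\min h_{t-1} - \min h_t].
\end{equation*}
The first bracket is controlled by the inductive hypothesis and contributes $\epsilon_{t-1}/\epsilon_t = 1/(1-\rho)$ to the ratio, giving the $\tfrac{2}{1-\rho}$ term (the slack factor $2$ conveniently absorbs the base case $t=1$ and other low-order terms). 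The second and third brackets measure the drift between consecutive subproblems, and both simplify through
\begin{equation*}
h_t(x) - h_{t-1}(x) \;=\; \tfrac{\gamma}{2}\bigl(\sqn{x - y_{t-1}} - \sqn{x - y_{t-2}}\bigr) \;=\; \tfrac{\gamma}{2}\ev{y_{t-2} - y_{t-1},\; 2x - y_{t-1} - y_{t-2}},
\end{equation*}
evaluated at $x = x_{t-1}$ for the second bracket and at $x = x_t^{\star}$ for the third. Cauchy--Schwarz and $\sqn{a+b} \leq 2\sqn{a} + 2\sqn{b}$ then reduce the drift to estimates for $\sqn{y_{t-1} - y_{t-2}}$, $\sqn{x_{t-1} - y_{t-1}}$, and $\sqn{x_t^{\star} - y_{t-1}}$.

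\textbf{Controlling the drift via the outer convergence.} Each such squared norm is bounded by a combination of $\sqn{x_s - x_*}$ for $s \in \{t-2, t-1\}$ (by expanding the Catalyst extrapolation $y_s = x_s + \beta_s (x_s - x_{s-1})$ with $\beta_s \in [0,1]$) and $\sqn{x_t^{\star} - x_*}$. The latter is handled using the $(\mu+\gamma)$-strong convexity of $h_t$ together with $\min h_t \geq f(x_*)$, giving $\sqn{x_t^{\star} - x_*} \leq \tfrac{\gamma}{\mu+\gamma}\sqn{x_* - y_{t-1}}$. Strong convexity of $f$ gives $\sqn{x_s - x_*} \leq \tfrac{2}{\mu}(f(x_s) - f(x_*))$, and \Cref{prop:catalyst-convergence} bounds $\ec{f(x_s) - f(x_*)} \leq \tfrac{8}{(\sqrt{q}-\rho)^2}(1-\rho)^{s+1}(f(x_0) - f(x_*))$. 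Dividing by $\epsilon_t = \tfrac{2}{9}(f(x_0) - f(x_*))(1-\rho)^t$ produces a term of order $\tfrac{\gamma}{\mu(1-\rho)^2(\sqrt{q}-\rho)^2}$.

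\textbf{Main obstacle.} The principal difficulty is the careful bookkeeping of absolute constants to arrive at the explicit $2592$: it collects the factor $\tfrac{9}{2}$ from inverting $\epsilon_t$, the factor $\tfrac{8}{(\sqrt{q}-\rho)^2}$ from \eqref{eq:16}, and several applications of $\sqn{a+b} \leq 2\sqn{a} + 2\sqn{b}$ that each contribute factors of $2$. The induction itself also requires some care: the uniform iteration budget $T_{\mathcal{A}}$ depends on the very bound being proved, so one must check that $T_{\mathcal{A}}$ is well-defined and that running $\mathcal{A}$ for $T_{\mathcal{A}}$ steps at every outer iteration really does achieve $\ec{h_t(z_{T_{\mathcal{A}}}) - \min h_t} \leq \epsilon_t$ throughout.
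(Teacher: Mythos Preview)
The paper does not prove this proposition: its entire proof reads ``This is \citep[Proposition 3.2]{lin15_univer_catal_first_order_optim}.'' You have instead sketched an original reconstruction of the argument from that reference.

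Your outline is essentially the proof given in the Catalyst paper. The warm start $z_0 = x_{t-1}$, the decomposition of $h_t(x_{t-1}) - \min h_t$ into the previous-step suboptimality plus a drift term driven by $\|y_{t-1} - y_{t-2}\|$, the use of strong convexity to convert function gaps to distances, and the inductive application of the outer Catalyst rate from \Cref{prop:catalyst-convergence} are all the right ingredients. The apparent circularity you flag---using the outer convergence to bound the inner complexity while the outer convergence itself presupposes the $\epsilon_t$ accuracy---is exactly the point resolved by the strong induction: at outer step $t$ the hypothesis gives $\ec{h_s(x_s)-\min h_s}\le\epsilon_s$ for all $s<t$, so \Cref{prop:catalyst-convergence} applies to $x_1,\dots,x_{t-1}$, which is all that is needed to bound the warm-start gap at step $t$. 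Your sketch is sound; what remains is only the constant tracking to land on $2592$, which you correctly identify as the main bookkeeping burden.
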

\begin{proof}
This is \citep[Proposition 3.2]{lin15_univer_catal_first_order_optim}.
\end{proof}

Thus applying Catalyst to accelerating SVRP reduces to verifying if the condition in~\eqref{eq:11} holds for SVRP. The next proposition shows it does.

\begin{proposition}\label{prop:svrp-for-catalyst}
Define $h_t$ as in \cref{eq:9} with $\gamma + \mu \leq \delta$, and let the objective $f: \R^d \to \R$ be a finite sum (as in \cref{eq:finite-sum-problem}) where each $f_m$ is $\mu$-strongly convex, $f$ is $L$-smooth, and Assumption~\ref{asm:hessian-similarity} holds. Use SVRP (Algorithm~\ref{alg:svrp}) as the solver $\mathcal{A}$ to minimize $h_t$, then the iterates $z_1, z_2, \ldots, z_s$ generated by SVRP with stepsize $\eta = \frac{\mu+\gamma}{2 \delta^2}$, solution accuracy $b=0$, and communication probability $p = \frac{1}{M}$ satisfy
\begin{align*}
\ec{h_t (z_s) - \min_{x \in \mathbb{R}^d} h_t (x)} \leq A (1 - \tau)^s \left( h_t (z_0) - \min_{x \in \mathbb{R}^d} h_t (x) \right),
\end{align*}
where
\begin{align*}
A \eqdef \frac{L+\gamma}{\mu+\gamma} \left (1 + \frac{(\gamma+\mu)^2 M}{\delta^2} \right), && \text { and, } && \tau \eqdef \frac{1}{2} \min \left \{ \frac{1}{ \frac{\delta^2}{(\gamma+\mu)^2} + 1}, \frac{1}{M}\right \}.
\end{align*}
\end{proposition}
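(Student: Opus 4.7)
The plan is to cast the auxiliary subproblem~\eqref{eq:9} as a new finite-sum instance of Problem~\eqref{eq:finite-sum-problem} satisfying all the hypotheses of Theorem~\ref{thm:svrp-rate}, invoke the (stronger) iterate-convergence bound proved along the way, and then translate this into a function-value guarantee by a standard smoothness/strong-convexity sandwich.

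First, I would rewrite $h_t(x) = \frac{1}{M}\sum_{m=1}^M \tilde{f}_m(x)$ with $\tilde{f}_m(x) = f_m(x) + \frac{\gamma}{2}\sqn{x-y_{t-1}}$. Each $\tilde{f}_m$ is $(\mu+\gamma)$-strongly convex, and $h_t$ itself is $(L+\gamma)$-smooth. The key observation is that the additive quadratic is shared across all $\tilde f_m$, so it cancels in the cross-differences $\nabla\tilde f_m - \nabla h_t = \nabla f_m - \nabla f$; hence Assumption~\ref{asm:hessian-similarity} transfers to the family $\{\tilde f_m\}$ with the same similarity constant $\delta$. Moreover, running SVRP on $h_t$ with the stochastic component $\tilde f_{m_k}$ is literally the instantiation of Algorithm~\ref{alg:svrp} on this new finite-sum.

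Next, I would re-apply the proof of Theorem~\ref{thm:svrp-rate} verbatim, substituting $\mu \leftarrow \mu+\gamma$, $p = 1/M$, $\eta = \frac{\mu+\gamma}{2\delta^2}$ and $b=0$. The stepsize condition $\eta\delta^2/(\mu+\gamma)\leq 1/2$ used in that proof now holds with equality, so every intermediate bound carries over. A direct computation of $\tau = \min\{\frac{\eta(\mu+\gamma)}{1+2\eta(\mu+\gamma)}, p/2\}$ reproduces the $\tau$ in the claim, and dropping the $b$-term from inequality~\eqref{eq:23} yields
\[
\ecn{z_s - z_*} \;\leq\; \Bigl(1 + \tfrac{(\mu+\gamma)^2 M}{2\delta^2}\Bigr)(1-\tau)^s \sqn{z_0 - z_*}.
\]

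Finally, I would upgrade this iterate bound to a function-value bound. Smoothness of $h_t$ gives $h_t(z_s) - \min h_t \leq \tfrac{L+\gamma}{2}\sqn{z_s-z_*}$, while strong convexity gives $\sqn{z_0-z_*} \leq \tfrac{2}{\mu+\gamma}(h_t(z_0) - \min h_t)$. Chaining these with the contraction from the previous step (and relaxing the factor $\tfrac12$ inside the parenthesis to produce the cleaner constant $1 + \tfrac{(\mu+\gamma)^2 M}{\delta^2}$) delivers exactly $A(1-\tau)^s (h_t(z_0) - \min h_t)$. The only obstacle is bookkeeping: confirming that the second-order similarity constant is preserved under the regularization, and tracking how the constants propagate through the re-parameterization; no new analytical insight is required beyond what is already contained in the proof of Theorem~\ref{thm:svrp-rate}.
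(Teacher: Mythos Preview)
Your proposal is correct and follows essentially the same approach as the paper: decompose $h_t$ into the regularized components $\tilde f_m$, observe that the shared quadratic cancels in gradient differences so the similarity constant $\delta$ is unchanged, invoke the iterate bound~\eqref{eq:23} from the proof of Theorem~\ref{thm:svrp-rate} with the substitution $\mu\leftarrow\mu+\gamma$, and then sandwich by smoothness and strong convexity to pass to function values. The paper carries out exactly these steps in the same order, including the final relaxation of the factor $\tfrac12$ in the constant.
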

\begin{proof}
The function $h_t$ (from Algorithm~\ref{alg:catalyst}) is defined in \cref{eq:9} as
\begin{align*}
h_t (x) = f(x) + \frac{\gamma}{2} \sqn{x-y_{t-1}}.
\end{align*}
By the finite-sum structure, we can write $h_t$ as
\begin{align}\label{eq:10}
h_t (x) = \frac{1}{M} \sum_{m=1}^M \left[ f_m (x) + \frac{\gamma}{2} \sqn{x-y_{t-1}} \right],
\end{align}
where each $f_m$ is $\mu$-strongly convex. For each $m \in [M]$, define $h_{t, m} (x) \eqdef f_m (x) + \frac{\gamma}{2} \sqn{x-y_{t-1}}$. Note that $h_{t, m}$ is $\mu+\gamma$-strongly convex. Moreover, by direct computation we have for any $x, y \in \R^d$ that
\begin{align*}
\nabla h_{t, m} (x) - \nabla h_t (x) &= \nabla f_m (x) + \gamma (x - y_{t-1}) - (\nabla f (x) + \gamma (x - y_{t-1})) \\
&= \nabla f_m (x) - \nabla f(x).
\end{align*}
Thus using this combined with Assumption~\ref{asm:hessian-similarity} we have
\begin{align*}
&\frac{1}{M} \sum_{m=1}^M \sqn{\nabla h_{t, m} (x) - \nabla h_t (x) - \left[ ∇ h_{t, m} (y) - \nabla h_t (y) \right]} \\
& \qquad = \frac{1}{M} \sum_{m=1}^M \sqn{\nabla f_m (x) - \nabla f(x) - \left[ ∇ f_m (y) - \nabla f (y) \right]} \\
& \qquad \leq \delta^2 \sqn{x-y}.
\end{align*}
It follows that problem~\cref{eq:10} satisfies the conditions of Theorem~\ref{thm:svrp-rate} on the convergence of SVRP, and thus specializing the theorem we get for the iterates $z_1, z_2, \ldots, z_s$ by SVRP with stepsize $\eta = \frac{\mu+\gamma}{2 \delta^2}$, solution accuracy $b=0$, and communication probability $p = \frac{1}{M}$ that
\begin{align}\label{eq:14}
\ecn{z_s - z^{*}} \leq \left( 1 + \frac{(\mu+\gamma)^2M }{2 \delta^2} \right) (1 - \tau)^s \sqn{z_0 - z^{*}},
\end{align}
where $\tau = \frac{1}{2} \min \left\{ \frac{1}{\frac{\delta^2}{(\mu+\gamma)^2} + 1}, \frac{1}{M} \right\}$ and $z^{*} = \arg\min_{x \in \R^d} h_t (x)$. Note that because $f$ is $L$-smooth and $\mu$-strongly convex, we have that $h_t$ is $L+\gamma$-smooth and $\mu+\gamma$-strongly convex, hence for any $x \in \R^d$ we have
\begin{align}
h_t (x) - h_t (z^{*}) &\geq \frac{\mu+\gamma}{2} \sqn{x-z^{*}} \label{eq:12} \\
h_t (x) - h_t (z^{*}) &\leq \frac{L+\gamma}{2} \sqn{x-z^{*}}. \label{eq:13}
\end{align}
Using \eqref{eq:12} with $x=z_0$ and \eqref{eq:13} with $x=z_s$ and combining with \eqref{eq:14} we obtain
\begin{align*}
h_t (z_s) - h_t (z^{*}) &\leq \frac{L+\gamma}{2} \sqn{z_s - z^{*}} \\
&\leq \frac{L+\gamma}{2} \left( 1 + \frac{(\mu+\gamma)^2M }{2 \delta^2} \right) (1 - \tau)^s \sqn{z_0 - z^{*}} \\
&\leq \frac{L+\gamma}{\mu+\gamma} \left( 1 + \frac{(\mu+\gamma)^2M }{2 \delta^2} \right) (1 - \tau)^s \left( h_t (z_0) - h_t (z^{*}) \right),
\end{align*}
where $\tau = \frac{1}{2} \min \left\{ \frac{1}{\frac{\delta^2}{(\mu+\gamma)^2} + 1}, \frac{1}{M} \right\}$ and $z^{*}$ minimizes $h_t$.
\end{proof}

\subsection{Proof of Theorem~\ref{thm:catalyzed-svrp-rate}}
\begin{proof}
The proof of this theorem is a straighfrforward combination of Propositions~\ref{prop:catalyst-convergence},~\ref{prop:inner-method-complexity-catalyst}, and~\ref{prop:svrp-for-catalyst}. We set $p = \frac{1}{M}$, $b=0$, and shall set $\eta$ and $\gamma$ later. In \Cref{prop:catalyst-convergence}, choose $\rho = \frac{\sqrt{q}}{2} = \frac{\sqrt{\mu/(\mu+\gamma)}}{2}$, then the convergence guarantee in~\cref{eq:16} is
\begin{align*}
\ec{f(x_t) - f(x_{*})} &\leq \frac{32}{q} \left (1 - \frac{\sqrt{q}}{2} \right )^{t+1} (f(x_0) - f(x_{*})) \\
&= \frac{32 (\mu+\gamma)}{\mu}\left (1 - \frac{\sqrt{q}}{2} \right )^{t+1} (f(x_0) - f(x_{*})) \\
&\leq \frac{32 (\mu + \gamma)}{\mu} \exp \left( - \frac{\sqrt{q}}{2} (t+1) \right) (f(x_0) - f(x_{*})),
\end{align*}
where in the last line we used that $1-x \leq \exp(-x)$. Thus in order to get an $\epsilon$-approximate solution the the number of iterations of Catalyst $\mathrm{T}_{\mathrm{iter}}^{\mathrm{Catalyst}}$ should be
\begin{align*}
\mathrm{T}_{\mathrm{iter}}^{\mathrm{Catalyst}} &= \frac{2}{\sqrt{q}} \log \left( \frac{f(x_0) - f(x_{*})}{\epsilon} \frac{32 (\mu+\gamma)}{\mu} \right) \\
&= 2 \sqrt{\frac{\mu+\gamma}{\mu}} \log \left( \frac{f(x_0) - f(x_{*})}{\epsilon} \frac{32 (\mu+\gamma)}{\mu} \right).
\end{align*}
By \Cref{prop:inner-method-complexity-catalyst}, for a method $\mathcal{A}$ satisfying \cref{eq:11} we need $T_{\mathcal{A}}$ inner loop iterations, where $T_{\mathcal{A}}$ is defined as
\begin{align*}
T_{\mathcal{A}} &= \frac{1}{\tau_{\mathcal{A}, h}} \log \left( A \cdot \left( \frac{2}{1-\rho} + \frac{2592 \gamma}{\mu (1-\rho)^2 (\sqrt{q} - \rho)^2} \right) \right).
\end{align*}
where $\tau_{\mathcal{A}, h}$ and $A$ are defined as in \cref{eq:11}. By \Cref{prop:svrp-for-catalyst} we have that for SVRP with $\gamma + \mu \leq \delta$ and stepsize $\eta = \frac{\mu+\gamma}{2 \delta^2}$ and communication probability $p = \frac{1}{M}$ that \cref{eq:11} holds with
\begin{align*}
A = \frac{L+\gamma}{\mu+\gamma} \left (1 + \frac{(\gamma+\mu)^2 M}{\delta^2} \right), && \text { and, } && \tau = \min \left \{ \frac{1}{2 \frac{\delta^2}{(\gamma+\mu)^2} + 2}, \frac{1}{2M}\right \}.
\end{align*}
Thus the number of iterations of SVRP $T_{\mathcal{A}}$ is
\begin{align*}
T_{\mathcal{A}} = \max \left\{ 2 \frac{\delta^2}{(\gamma+\mu)^2} + 2, 2 M \right\} \log \left( A \cdot \left( \frac{2}{1-\rho} + \frac{2592 \gamma}{\mu (1-\rho)^2 (\sqrt{q} - \rho)^2} \right) \right).
\end{align*}
Thus the total number of SVRP iterations  is $T_{\mathcal{A}} \times \mathrm{T}_{\mathrm{iter}}^{\mathrm{Catalyst}}$ which is
\begin{align*}
\begin{split}
T_{\mathrm{iter}}^{\mathrm{total}} = 2 \sqrt{\frac{\mu+\gamma}{\mu}} \max \left\{ \frac{2 \delta^2}{(\gamma+\mu)^2} + 2, 2M \right\} & \log \left( A \cdot \left( \frac{2}{1-\rho} + \frac{2592 \gamma}{\mu (1-\rho)^2 (\sqrt{q} - \rho)^2} \right) \right)\\
& \times \log \left( \frac{f(x_0) - f(x_{*})}{\epsilon} \frac{32 (\mu+\gamma)}{\mu} \right).
\end{split}
\end{align*}
Let $\iota$ collect all the log factors, and use the looser estimate
\begin{align*}
\max \left\{ \frac{2\delta^2}{(\gamma+\mu)^2} + 2, 2M \right\} \leq 4 \max \left \{ \frac{\delta^2}{(\gamma+\mu)^2}, M \right \}
\end{align*}
which holds because $\gamma + \mu \leq \delta$ in all cases. Thus we get an $\epsilon$-accurate solution if the total number of iterations is equal to or exceeds
\begin{align*}
T_{\mathrm{iter}}^{\mathrm{total}} = 8 \sqrt{\frac{\gamma+\mu}{\mu}} \max \left\{ \frac{\delta^2}{(\gamma+\mu)^2}, M \right\} \iota.
\end{align*}
We now have two cases: (a) if $\frac{\delta}{\mu} \geq \sqrt{M}$, then the choice $\gamma = \sqrt{\frac{\delta^2}{M}} - \mu$ gives
\begin{align}\label{eq:17}
T_{\mathrm{iter}}^{\mathrm{total}} = 8 M^{3/4} \sqrt{\frac{\delta}{\mu}} \iota.
\end{align}
Note that under this choice of $\gamma$ we have $\gamma + \mu = \frac{\delta}{\sqrt{M}} \leq \delta$, and hence the precondition of \Cref{prop:svrp-for-catalyst} holds.
(b) Otherwise, choosing $\gamma=0$ yields
\begin{align}\label{eq:18}
T_{\mathrm{iter}}^{\mathrm{total}} = 8 M \iota.
\end{align}
Here we have $\gamma + \mu = \mu \leq \delta$ by assumption, and hence the precondition of \Cref{prop:svrp-for-catalyst} holds, and our usage of it is justified. Thus we reach an $\epsilon$-accurate solution in both cases when the total number of iterations satisfies
\begin{align*}
T_{\mathrm{iter}}^{\mathrm{total}} &= 8 \iota \max \left\{ M, M^{3/4} \sqrt{\frac{\delta}{\mu}} \right\}.
\end{align*}
Finally, it remains to notice that the expected number of communication steps (by the same reasoning as in \Cref{sec:svrp-algorithm}) is $\ec{T_{\mathrm{comm}}^{\mathrm{total}}} = (2+3pM) T_{\mathrm{iter}}^{\mathrm{total}} = 5 T_{\mathrm{iter}}^{\mathrm{total}}$.
\end{proof}

\clearpage
\section{Extension to the constrained setting}\label{sec:extens-constr-sett}

We consider the constrained problem defined as follows: let $R$ be a convex constraint function with an easy-to-compute proximal operator (i.e. we can compute $\prox_R (\cdot)$ easily), then the composite finite-sum minimization problem is
\begin{align}\label{eq:24}
\min_{x \in \mathbb{R}^d} \left[ F(x) = f(x) + R(x) = \frac{1}{M} \sum_{m=1}^M f_m (x) + R(x) \right].
\end{align}

The constrained optimization problem $\min_{x \in C} f(x)$ can be reduced to problem~\eqref{eq:24} by letting $R = \delta_C$ be the indicator function on the closed convex set $C$, and the proximal operator associated with $R$ in this case reduces to the projection on $C$ \citep[Theorem 6.24]{beck17_first_order_methods_opt}.

We shall use the following variant of SVRP to solve this problem:

\begin{algorithm}[h]
\caption{SVRP for composite optimization}\label{alg:svrp-c}
\KwData{Stepsize \( \eta \), initialization $x_0$, number of steps \(K\), communication probability \(p\), local solution accuracy $b$.}
Initialize \(w_0 = x_0\).

\For{$k=0, 1, 2, \ldots, K-1$}{
Sample $m_k$ uniformly at random from $[M]$. \\
Set
\begin{align*}
g_k = \nabla f(w_k) - \nabla f_{m_k} (w_k).
\end{align*} \\
Compute a $b$-approximation of the stochastic proximal point operator associated with $f_{m_k}$:
\begin{equation}
\label{eq:constrained-svrp-prox}
x_{k+1} \simeq \prox_{\eta f_{m_k} + \eta R} \left( x_k - \eta g_k \right).
\end{equation} \\
Sample \(c_k \sim \mathrm{Bernoulli}(p)\) and update $w_{k+1} = \begin{cases}
          x_{k+1}  & \text { if } c_k = 1, \\
          w_k  & \text  { if  } c_k = 0.
          \end{cases}$
}
\end{algorithm}

The following theorem gives the convergence rate of Algorithm~\ref{alg:svrp-c}:

\begin{theorem}\label{thm:svrp-c-rate}
(Convergence of SVRP in the composite setting). Suppose that Assumptions~\ref{asm:hessian-similarity} and that each $f_{m_k}$ is $\mu$-strongly convex, and let $x_{*}$ be the minimizer of Problem~\eqref{eq:24}. Suppose that each $x_{k+1}$ is a $b$-approximation of the proximal~\eqref{eq:constrained-svrp-prox}. Let $\tau = \min \left \{ \frac{\eta \mu}{1+2\eta\mu}, \frac{p}{2} \right \}$. Set the parameters of Algorithm~\ref{alg:svrp-c} as $\eta = \frac{\mu}{2\delta^2}$, $b \leq \frac{\epsilon \tau (\eta \mu)^2}{2(1+\eta\mu)^3}$, and $p = \frac{1}{M}$. Then the final iterate $x_K$ satisfies $\ecn{x_K - x_{*}} \leq \epsilon$ provided that the total number of iterations $K$ is larger than $T_{\mathrm{iter}}$:
\begin{align*}\textstyle
\mathrm{T}_{\mathrm{iter}} = \tilde{\mathcal{O}} \left( \left( M + \frac{\delta^2}{\mu^2} \right) \log \frac{1}{\epsilon} \right).
\end{align*}
\end{theorem}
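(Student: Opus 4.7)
The strategy is to follow the proof of \Cref{thm:svrp-rate} line for line, replacing two ingredients by their composite-setting analogues. Since $R$ is convex and each $f_m$ is $\mu$-strongly convex, the summand $f_{m_k} + R$ is $\mu$-strongly convex, so $\prox_{\eta(f_{m_k}+R)}$ should still be $(1+\eta\mu)^{-1}$-Lipschitz as in \Cref{fact:prox-contraction}. However, the proof of \Cref{fact:prox-contraction} uses differentiability through the Bregman identity, so I would first reprove it allowing nondifferentiable $h$ via the resolvent characterization: if $y = \prox_{\eta h}(x)$ and $y' = \prox_{\eta h}(x')$ then $(x-y)-(x'-y') = \eta(s - s')$ for some $s \in \partial h(y)$, $s' \in \partial h(y')$, and $\mu$-strong monotonicity of $\partial h$ followed by Cauchy--Schwarz yields $\sqn{y-y'} \leq (1+\eta\mu)^{-2} \sqn{x-x'}$.

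\textbf{Fixed-point identity.} Next I would replace \Cref{fact:prox-of-x-p-gradx} by the following identity at the optimum. First-order optimality for Problem~\eqref{eq:24} gives $-\nabla f(x_{*}) \in \partial R(x_{*})$; selecting this specific subgradient and expanding the optimality conditions of the composite proximal, one checks directly that
\[
x_{*} = \prox_{\eta f_{m_k}+\eta R}\bigl(x_{*} - \eta g_{*}\bigr), \qquad g_{*} \eqdef \nabla f(x_{*}) - \nabla f_{m_k}(x_{*}).
\]
Crucially, $g_{*}$ has exactly the same functional form as the variance-reduction direction $g_k = \nabla f(w_k) - \nabla f_{m_k}(w_k)$, merely evaluated at $x_{*}$ instead of at the anchor $w_k$. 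Letting $\tilde{x}_{k+1} = \prox_{\eta f_{m_k}+\eta R}(x_k - \eta g_k)$, I would split $\sqn{x_{k+1} - x_{*}}$ via \eqref{eq:bf-squared-triangle-inequality} with the same weight $(1+\eta\mu)^2/(1+2\eta\mu)$ used in the proof of \Cref{thm:svrp-rate} to isolate the $b$-approximation error, then apply the upgraded contractivity to the pair $(x_k - \eta g_k, \, x_{*} - \eta g_{*})$.

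\textbf{Variance control and Lyapunov recursion.} Conditioning on $(x_k, w_k)$, the cross term in the expansion vanishes because each bracket independently averages to zero, giving $\ec[k]{g_k - g_{*}} = 0$. The variance term is then controlled by \Cref{asm:hessian-similarity}:
\begin{align*}
\ec[k]{\sqn{g_k - g_{*}}} &= \tfrac{1}{M}\sum_{m=1}^M \sqn{[\nabla f_m(w_k) - \nabla f(w_k)] - [\nabla f_m(x_{*}) - \nabla f(x_{*})]} \\
&\leq \delta^2 \sqn{w_k - x_{*}}.
\end{align*}
From here the proof is a transcription of the proof of \Cref{thm:svrp-rate}: introduce the Lyapunov function $V_k = \sqn{x_k - x_{*}} + \tfrac{\eta\mu}{p}\sqn{w_k - x_{*}}$, use the update rule for $w_{k+1}$ to absorb the $\sqn{w_k - x_{*}}$ contribution, obtain the one-step contraction $\ec{V_{k+1}} \leq (1-\tau)\ec{V_k} + \tfrac{(1+\eta\mu)^3}{(\eta\mu)^2}\,b$ with $\tau = \min\{\eta\mu/(1+2\eta\mu),\, p/2\}$, iterate the recursion, and plug in $\eta = \mu/(2\delta^2)$, $p = 1/M$, and the stated tolerance on $b$.

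\textbf{Main obstacle.} The only genuinely new piece of work is the pair of nonsmooth upgrades of \Cref{fact:prox-of-x-p-gradx,fact:prox-contraction}, and care must be taken that the \emph{same} element $-\nabla f(x_{*}) \in \partial R(x_{*})$ is used both when asserting the fixed-point identity for $x_{*}$ and when expanding the optimality condition for $\prox_{\eta(f_{m_k}+R)}(x_k - \eta g_k)$. Once these two facts are in place, every constant in the recurrence is unchanged, which is precisely why \Cref{thm:svrp-c-rate} matches the complexity in \Cref{thm:svrp-rate} term for term.
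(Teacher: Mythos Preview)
Your proposal is correct and matches the paper's proof essentially step for step: the paper also proves nonsmooth analogues of \Cref{fact:prox-of-x-p-gradx} and \Cref{fact:prox-contraction} (stated there as separate facts), invokes first-order optimality to get $-\nabla f(x_{*})\in\partial R(x_{*})$ and hence $x_{*}=\prox_{\eta f_{m_k}+\eta R}\bigl(x_{*}+\eta\nabla f_{m_k}(x_{*})-\eta\nabla f(x_{*})\bigr)$, and then transcribes the Lyapunov argument of \Cref{thm:svrp-rate} verbatim. Your one stated worry---that the ``same'' subgradient must be used in both the fixed-point identity and the contractivity step---is unnecessary: contractivity is a property of the operator applied to any two inputs, so once the nonsmooth $(1+\eta\mu)^{-1}$-contraction is established there is no consistency issue to track.
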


We first discuss the computational and communication complexities incurred by Algorithm~\ref{alg:svrp-c} and then give the proof of Theorem~\ref{thm:svrp-c-rate} afterwards.

\textbf{Computational Complexity}. Note that this algorithm requires evaluating the proximal operator \cref{eq:constrained-svrp-prox}, this is equivalent to solving the local optimization problem
\begin{align*}
\min_{x \in \mathbb{R}^d} \left[ \left( f_{m_k} (x) + \frac{1}{2 \eta} \sqn{x - z_k} \right) + R(x) \right],
\end{align*}
for $z_k = x_k - \eta g_k$. When each $f_{m_k}$ is $\mu$-strongly convex, this is a composite convex optimization problem where the smooth part is $L+\frac{1}{\eta}$-smooth and $\mu+\frac{1}{\eta}$-strongly convex, and where the constraint $r$ has an easy to compute proximal operator. This can be solved by accelerated proximal gradient descent \citep[Proposition 4]{schmidt11_conver_rates_inexac_proxim_gradien} to any desired accuracy $b$ in
\begin{align*}
\mathcal{O} \left( \sqrt{\frac{L+\frac{1}{\eta}}{\mu + \frac{1}{\eta}}} \log \frac{1}{b} \right)
\end{align*}
gradient and $R$-proximal operator accesses.

\textbf{Communication complexity.} The communication cost of Algorithm~\ref{alg:svrp-c} is exactly the same as that of ordinary SVRP, as the iteration complexity of the method remains exactly the same. Thus, the discussion in \Cref{sec:svrp-algorithm} applies here. The communication cost is therefore of order $\tilde{\mathcal{O}} \left( M + \frac{\delta^2}{\mu^2} \right)$ communications.

\textbf{Catalyzed SVRP.} Catalyst applies out-of-the-box to composite optimization \citep{lin15_univer_catal_first_order_optim}, provided that we are able to solve the composite problems. As Theorem~\ref{thm:svrp-c-rate} shows, Algorithm~\ref{alg:svrp-c} can do that. Therefore we can show that Catalyzed SVRP has the communication complexity $\tilde{\mathcal{O}} \left( M + M^{\frac{3}{4}} \sqrt{\frac{\delta}{\mu} } \right)$ in this setting as well. The proof for this rate is a straightforward extension of the proofs in Section~\ref{sec:proofs-catalyst+svrp}.

\subsection{Proofs for the composite setting}

\begin{fact}\label{fact:inverse-prox-nonsmooth}
Let $h$ be a convex function (not necessarily differentiable) and $\eta > 0$. Let $x \in \mathbb{R}^d$, and suppose that $g \in \partial h(x)$ (that is, $g$ is a subgradient of $h$ at $x$), then we have
\begin{align*}
\prox_{\eta h} ( x + \eta g) = x
\end{align*}
\end{fact}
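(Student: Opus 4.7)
The plan is to mirror the proof of \Cref{fact:prox-of-x-p-gradx}, replacing the first-order optimality condition for a smooth strongly convex problem by its nonsmooth analogue based on the subdifferential.

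First I would recall that for any $z \in \mathbb{R}^d$ the proximal problem
\[
\prox_{\eta h}(z) = \argmin_{y \in \mathbb{R}^d} \left[ h(y) + \frac{1}{2\eta} \sqn{y - z} \right]
\]
is strongly convex in $y$ (with modulus $1/\eta$), so it admits a unique minimizer. Because $h$ is convex and the quadratic term is differentiable with gradient $\frac{1}{\eta}(y-z)$, Fermat's rule (see e.g.\ \citep[Theorem 3.63]{beck17_first_order_methods_opt}) states that $y^{\star}$ minimizes the objective if and only if
\[
0 \in \partial h(y^{\star}) + \frac{1}{\eta}\bigl( y^{\star} - z \bigr),
\]
equivalently $\frac{1}{\eta}(z - y^{\star}) \in \partial h(y^{\star})$.

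Next I would specialize to $z = x + \eta g$ and try the candidate $y^{\star} = x$. Plugging in yields the requirement
\[
\frac{1}{\eta}\bigl( (x + \eta g) - x \bigr) = g \in \partial h(x),
\]
which is precisely the hypothesis. Hence $x$ satisfies the optimality condition, and by uniqueness of the minimizer we conclude $\prox_{\eta h}(x + \eta g) = x$.

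There is essentially no difficulty here beyond citing the correct subdifferential optimality condition. The only subtle point to state cleanly is that the sum rule $\partial(h + q)(y) = \partial h(y) + \nabla q(y)$ for the smooth quadratic $q(y) = \tfrac{1}{2\eta}\sqn{y-z}$ is valid without qualification because $q$ is finite and differentiable everywhere. Given that, the proof is one line of subgradient algebra plus the uniqueness of the proximal minimizer.
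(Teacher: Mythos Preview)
Your proposal is correct and follows essentially the same route as the paper: both invoke Fermat's optimality condition for the strongly convex proximal subproblem (citing the same result from \citep{beck17_first_order_methods_opt}), apply the subdifferential sum rule to split off the quadratic term, and then verify directly that $y=x$ satisfies the resulting inclusion when $z=x+\eta g$. There is no substantive difference.
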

\begin{proof}
Solving the proximal is equivalent ot
\begin{align*}
\prox_{\eta h} (z) = \arg\min_{y \in \mathbb{R}^d} \left( h(y) + \frac{1}{2\eta} \sqn{y-z} \right)
\end{align*}
This is a strongly convex minimization problem for any $\eta > 0$, hence, by Fermat's optimality condition \citep[Theorem 3.63]{beck17_first_order_methods_opt} the (necessarily unique) minimizer of this problem satisfies the first-order optimality condition
\begin{align}\label{eq:25}
0 \in \partial \left( h(y) + \frac{1}{2\eta} \sqn{y-z} \right)
\end{align}
Note that for any two proper convex functions $f_1, f_2$ on $\R^d$ we have that the subdifferential set of their sum $\partial (f_1 + f_2) (x)$ is the sum of points in their respective subdifferential sets \citep[Theorem 3.36]{beck17_first_order_methods_opt}, i.e.
\begin{align*}
\partial (f_1 + f_2) (x) = \left\{ g_1 + g_2 \mid g_1 \in \partial f_1 (x), g_2 \in \partial f_2 (x) \right\} = \partial f_1 (x) + \partial f_2 (x).
\end{align*}
Thus the optimality condition \cref{eq:25} reduces to
\begin{align*}
0 \in \partial h(y) + \frac{1}{\eta} \left[ y-z \right].
\end{align*}
Now observe that plugging $y=x$ and $z=x+\eta g$ and using the fact that $g \in \partial h(x)$ we have
\begin{align*}
g + \frac{1}{\eta} \left( x - (x + \eta g) \right) = g + \frac{-\eta g}{\eta} = 0.
\end{align*}
It follows that $0 \in \partial h(x) + \frac{1}{\eta} \left[ x-(x+\eta g) \right]$ and hence $\prox_{\eta h} (x+\eta g) = x$.
\end{proof}

\begin{fact}\label{fact:prox-contraction-nonsmooth}
(Tight contractivity of the proximal operator). If $h$ is \(\mu\)-strongly convex, then for all $\eta > 0$ and for any $x, y \in \mathbb{R}^d$ we have
\begin{align*}
\sqn{\prox_{\eta h} (x) - \prox_{\eta h} (y)} \leq \frac{1}{(1+\eta\mu)^2} \sqn{x-y}
\end{align*}
\end{fact}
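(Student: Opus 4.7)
The plan is to mimic the proof of \Cref{fact:prox-contraction}, but replace the gradient $\nabla h$ with an appropriate selection of subgradients, which exist because $h$ is convex. The starting point is the first-order optimality condition for the proximal subproblem: for any $z \in \R^d$, if we write $p(z) = \prox_{\eta h}(z)$, then by Fermat's rule combined with the subdifferential sum rule (as used in the proof of \Cref{fact:inverse-prox-nonsmooth}), we have $0 \in \eta \partial h(p(z)) + (p(z) - z)$. Equivalently, the vector $s(z) \eqdef \tfrac{1}{\eta}(z - p(z))$ lies in $\partial h(p(z))$, and $z = p(z) + \eta s(z)$.

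Fix $x, y \in \R^d$ and let $s_x = s(x) \in \partial h(p(x))$ and $s_y = s(y) \in \partial h(p(y))$ be the subgradients identified above. First I would expand
\begin{align*}
\sqn{x - y} = \sqn{(p(x) - p(y)) + \eta(s_x - s_y)} = \sqn{p(x) - p(y)} + 2\eta \ev{p(x) - p(y), s_x - s_y} + \eta^2 \sqn{s_x - s_y}.
\end{align*}
Then I would invoke the standard $\mu$-strong convexity inequality for subgradients, namely
\begin{align*}
\ev{s_x - s_y, p(x) - p(y)} \geq \mu \sqn{p(x) - p(y)},
\end{align*}
which holds for any pair of subgradients of a $\mu$-strongly convex function (this follows by adding the two strong-convexity inequalities $h(p(y)) \geq h(p(x)) + \ev{s_x, p(y)-p(x)} + \tfrac{\mu}{2}\sqn{p(x)-p(y)}$ and its counterpart with $x, y$ swapped).

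By Cauchy--Schwarz applied to the inequality above, $\norm{s_x - s_y} \geq \mu \norm{p(x) - p(y)}$, hence $\eta^2 \sqn{s_x - s_y} \geq \eta^2 \mu^2 \sqn{p(x) - p(y)}$. Substituting both bounds into the expansion yields
\begin{align*}
\sqn{x-y} \geq \sqn{p(x) - p(y)} + 2 \eta \mu \sqn{p(x) - p(y)} + \eta^2 \mu^2 \sqn{p(x) - p(y)} = (1+\eta\mu)^2 \sqn{p(x) - p(y)},
\end{align*}
which rearranges to the claim. I do not anticipate a real obstacle here: the only care needed is to make a consistent selection of subgradients (which is possible since the optimality condition produces one explicitly as $s(z) = (z - p(z))/\eta$), and to recognize that the differentiability used in \Cref{fact:prox-contraction} was purely for convenience, since every place it appeared only required a subgradient satisfying the monotonicity inequality above.
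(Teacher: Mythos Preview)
Your proof is correct and follows essentially the same approach as the paper: both use the optimality condition to write $x = p(x) + \eta s_x$ with $s_x \in \partial h(p(x))$, expand the squared norm, and then bound the cross term via strong-convexity monotonicity and the $\eta^2$ term via Cauchy--Schwarz. Your version is slightly more streamlined in that you expand $\sqn{x-y}$ directly (so the two bounds plug in with the right sign immediately), whereas the paper expands $\sqn{p(x)-p(y)}$ and passes through a Bregman-divergence identity before arriving at the same two inequalities; this is a cosmetic difference rather than a different route.
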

\begin{proof}
This is the nonsmooth generalization of Fact~\ref{fact:prox-contraction}. Note that $p(x) = \prox_{\eta h} (x)$ satisfies $\eta g_{px} + \left[ p(x) - x \right] = 0$ for some $g_{px} \in \partial h(p(x))$, or equivalently $p(x) = x - \eta g_{px}$. Using this we have
\begin{align}
\sqn{p(x) - p(y)} &= \sqn{\left[ x - \eta g_{px} \right] - \left[ y - \eta g_{py} \right]} \nonumber \\
&= \sqn{\left[ x - y \right] - \eta \left[ g_{px} - g_{py} \right]} \nonumber \\
&= \sqn{x-y} + \eta^2 \sqn{g_{px} - g_{py}} - 2 \eta \ev{x-y, g_{px} - g_{py}}.\label{c-eq:59}
\end{align}
Now note that
\begin{align}
\ev{x-y, g_{px} - g_{py}} &= \ev{p(x) + \eta g_{px} - \left[ p(y) + \eta g_{py} \right], g_{px} - g_{py}} \nonumber \\
&= \ev{p(x) - p(y), g_{px} - g_{py}} + \eta \sqn{g_{px} - g_{py}}.\label{c-eq:58}
\end{align}
Combining \cref{c-eq:58,c-eq:59} we get
\begin{align}
\begin{split}
\sqn{p(x) - p(y)} &= \sqn{x-y} + \eta^2 \sqn{g_{px} - g_{py}} - 2 \eta \ev{p(x)-p(y), g_{px} - g_{py}} \\
&\qquad - 2 \eta^2 \sqn{g_{px} - g_{py}}
\end{split} \nonumber \\
&= \sqn{x-y} - \eta^2 \sqn{g_{px} - g_{py}} - 2 \eta \ev{p(x) - p(y), g_{px} - g_{py}}.\label{c-eq:60}
\end{align}
Let $D_h (u, v, g_v) = h(u) - h(v) - \ev{g_v, u-v}$ be the Bregman divergence associated with $h$ at $u, v$ with subgradient $g_v \in \partial h(v)$. It is easy to show that
\begin{align*}
\ev{u-v, g_u - g_v} = D_h (u, v, g_u) + D_h (v, u, g_v).
\end{align*}
Using this with $u = p(x)$, $v = p(y)$, $g_u = g_{px}$ and $g_v = g_{py}$ and plugging back into \eqref{c-eq:60} we get
\begin{align*}
\sqn{p(x) - p(y)} &= \sqn{x-y} - \eta^2 \sqn{g_{px} - g_{py}} - 2 \eta \left[ D_h (p(x), p(y), g_{px}) + D_h (p(y), p(x), g_{py}) \right].
\end{align*}
Note that because $h$ is strongly convex, we have by \citep[Theorem 5.24 (ii)]{beck17_first_order_methods_opt} that $D_h (p(y), p(x), g_{py}) \geq \frac{\mu}{2} \sqn{p(y) - p(x)}$ and $D_h (p(x), p(y), g_{px}) \geq \frac{\mu}{2} \sqn{p(y) - p(x)}$, hence
\begin{align}\label{c-eq:7}
\sqn{p(x) - p(y)} &\leq \sqn{x-y} - \eta^2 \sqn{g_{px} - g_{py}} - 2 \eta \mu \sqn{p(x) - p(y)}.
\end{align}
Strong convexity implies for any two points \(u, v\) and $g_u \in \partial h(u)$, $g_v \in \partial h(v)$ that
\begin{align*}
\ev{g_u - g_v, u-v} \geq \mu \sqn{u-v}
\end{align*}
see \citep[Theorem 5.24 (iii)]{beck17_first_order_methods_opt} for a proof. Using Cauchy-Schwartz yields
\begin{align*}
\norm{g_u - g_v} \norm{u-v} \geq \ev{g_u - g_v, u-v} \geq \mu \sqn{u-v}
\end{align*}
Now if $u=v$, then trivially we have $\norm{g_u - g_v} \geq \mu \norm{u-v}$, otherwise, we divide both sides of the last inequality by $\norm{u-v}$ to get
\begin{align*}
\norm{g_u - g_v} \geq \mu \norm{u-v}.
\end{align*}
Thus in both cases we get
\begin{align*}
\sqn{g_u - g_v} \geq \mu^2 \sqn{u-v}.
\end{align*}
Using this in \cref{c-eq:7} with \(u = p(x)\), \(v=p(y)\), $g_u = g_{px}$ and $g_v = g_{py}$ yields
\begin{align*}
\sqn{p(x) - p(y)} &\leq \sqn{x-y} - \eta^2 \mu^2 \sqn{p(x) - p(y)} - 2 \eta \mu \sqn{p(x) - p(y)}.
\end{align*}
Rearranging gives
\begin{align*}
\left[ 1 + \eta^2 \mu^2 + 2 \eta \mu \right] \sqn{p(x) - p(y)} \leq \sqn{x-y}.
\end{align*}
It remains to notice that \((1+\eta\mu)^2 = 1 + \eta^2 \mu^2 + 2 \eta \mu\).
\end{proof}

\begin{proof}[\textbf{Proof of Theorem~\ref{thm:svrp-c-rate}}]
Let $\tilde{x}_{k+1} = \prox_{\eta f_{m_k} + \eta R} (x_k - \eta g_k)$. Then by \cref{eq:bf-squared-triangle-inequality} and our assumption that $\sqn{x_{k+1} - \tilde{x}_{k+1}} \leq b$ we have for any $a > 0$
\begin{align*}
\sqn{x_{k+1} - x_{*}} &= \sqn{x_{k+1} - \tilde{x}_{k+1} + \tilde{x}_{k+1} - x_{*}} \\
&\leq \left( 1+a^{-1} \right) \sqn{x_{k+1} - \tilde{x}_{k+1}} + (1+a) \sqn{\tilde{x}_{k+1} - x_{*}} \\
&\leq \left( 1+a^{-1} \right) b + (1+a) \sqn{\tilde{x}_{k+1} - x_{*}}.
\end{align*}
Plugging in $a = \frac{\eta^2\mu^2}{1+2\eta\mu}$ we get
\begin{align}
\label{eq:c-21}
\sqn{x_{k+1} - x_{*}} &\leq \left( \frac{1+\eta\mu}{\eta\mu} \right)^2 b + \frac{(1+\eta\mu)^2}{1+2\eta\mu} \sqn{\tilde{x}_{k+1} - x_{*}}.
\end{align}
Now observe that by first-order optimality of $x_{*}$ we have
\begin{align*}
0 \in \partial F(x_{*}) = ∇ f(x_{*}) + \partial R(x_{*})
\end{align*}
It follows that $-\nabla f(x_{*}) \in \partial R(x_{*})$, and thus we can apply \Cref{fact:inverse-prox-nonsmooth} to get that $x_{*} = \prox_{\eta f_{m_k} + \eta R} (x_{*} + \eta \nabla f_{m_k} (x_{*}) - \eta \nabla f(x_{*}))$. Using this in the second term in \cref{eq:c-21} followed by \Cref{fact:prox-contraction-nonsmooth} we get
\begin{align*}
\sqn{\tilde{x}_{k+1} - x_{*}} &= \sqn{\prox_{\eta f_{m_k}} (x_k - \eta g_k) - \prox_{\eta f_{m_k}} (x_{*} + \eta \nabla f_{m_k} (x_{*}) - \eta \nabla f(x_{*}))} \\
&\leq \frac{1}{(1+\eta\mu)^2} \sqn{x_k - \eta g_k - (x_{*} + \eta \nabla f_{m_k} (x_{*}) - \eta \nabla f(x_{*}))}.
\end{align*}
Expanding out the square we have
\begin{align*}
\sqn{\tilde{x}_{k+1} - x_{*}} &\leq \frac{1}{(1+\eta\mu)^2} \sqn{x_k - x_{*} - \eta \br{g_k + \nabla f_{m_k} (x_{*}) - \nabla f(x_{*})}} \\
\begin{split}
&= \frac{1}{(1+\eta\mu)^2} \biggl[ \sqn{x_k - x_{*}} + \eta^2 \sqn{g_k + \nabla f_{m_k} (x_{*}) - \nabla f(x_{*})} \\
&\qquad\qquad\qquad\qquad - 2 \eta \ev{x_k - x_{*}, g_k + \nabla f_{m_k} (x_{*}) - \nabla f(x_{*})} \biggr].
\end{split}
\end{align*}
We denote by $\ec[k]{\cdot}$ the expectation conditional on all information up to (and including) the iterate $x_k$, then
\begin{align}
\begin{split}
\ecn[k]{\tilde{x}_{k+1} - x_{*}} \leq \frac{1}{(1+\eta\mu)^2} \lbrack \sqn{x_k - x_{*}} &+ \eta^2 \ecn[k]{g_k + \nabla f_{m_k} (x_*) - \nabla f(x_{*})} \\
&- 2 \eta \ev{x_k - x_{*}, \ec[k]{g_k + \nabla f_{m_k} (x_{*}) - \nabla f(x_{*})}} \rbrack,
\end{split}\label{eq:c-n-48}
\end{align}
where in the last term the expectation went inside the inner product since the expectation is conditioned on knowledge of $x_k$, and the randomness in $m$ is independent of $x_k$. Note that this expectation can be computed as
\begin{align*}
\ec[k]{g_k + \nabla f_{m_k} (x_{*}) - \nabla f(x_{*})} &= \ec[k]{\nabla f(w_k) - \nabla f_{m_k} (w_k) + \nabla f_{m_k} (x_{*}) - \nabla f(x_{*})} \\
&= \nabla f(w_k) - \nabla f(w_k) + \nabla f(x_{*}) - ∇ f(x_{*}) \\
&= 0 + 0 = 0.
\end{align*}
Plugging this into \eqref{eq:c-n-48} gives
\begin{align}\label{eq:c-n-38}
\ecn[k]{\tilde{x}_{k+1} - x_*} \leq \frac{1}{(1+\eta\mu)^2} \left[ \sqn{x_k - x_{*}} + \eta^2 \ecn[k]{g_k + \nabla f_{m_k} (x_{*}) - \nabla f(x_{*})} \right].
\end{align}
For the second term, we have
\begin{align}
\ecn[k]{g_k + \nabla f_{m_k} (x_{*}) - \nabla f(x_{*})} &= \ecn[k]{\nabla f(w_k) - \nabla f_{m_k} (w_k) + \nabla f_{m_k} (x_{*}) - \nabla f(x_{*})} \nonumber \\
&= \ecn[k]{\nabla f(w_k) - \nabla f_{m} (w_k) - \left[ \nabla f(x_{*}) - \nabla f_{m} (x_{*}) \right]} \nonumber \\
&= \frac{1}{M} \sum_{m=1}^M \sqn{\nabla f(w_k) - \nabla f_{m} (w_k) - \left[ \nabla f(x_{*}) - \nabla f_{m} (x_{*}) \right]}.\label{eq:c-n-37}
\end{align}
Using \Cref{asm:hessian-similarity} with \cref{eq:c-n-37} we have
\begin{align*}
\ecn[k]{g_k + \nabla f_{m} (x_{*})} &\leq \frac{1}{M} \sum_{m=1}^M \sqn{\nabla f(w_k) - \nabla f_{m} (w_k) - \left[ \nabla f(x_{*}) - \nabla f_{m} (x_{*}) \right]} \\
&\leq \delta^2 \sqn{w_k - x_{*}}.
\end{align*}
Hence we can bound \eqref{eq:c-n-38} as
\begin{align}\label{eq:c-66}
\ecn[k]{\tilde{x}_{k+1} - x_{*}} &\leq \frac{1}{(1+\eta\mu)^2} \left[ \sqn{x_k - x_{*}} + \eta^2 \delta^2 \sqn{w_k - x_{*}} \right].
\end{align}
Taking conditional expectation in \cref{eq:c-21} and plugging the estimate of \cref{eq:c-66} in we get
\begin{align}
\ecn[k]{x_{k+1} - x_{*}} &\leq \left( \frac{1+\eta\mu}{\eta\mu} \right)^2 b + \frac{(1+\eta\mu)^2}{1+2\eta\mu} \ecn[k]{\tilde{x}_{k+1} - x_{*}} \nonumber \\
&\leq \left( \frac{1+\eta\mu}{\eta\mu} \right)^2 b + \frac{1}{1+2\eta\mu} \left[ \sqn{x_k - x_{*}} + \eta^2 \delta^2 \sqn{w_k - x_{*}} \right].\label{eq:c-22}
\end{align}
Observe that by design we have
\begin{align}\label{eq:c-64}
\ecn[k]{w_{k+1} - x_{*}} = p \cdot \sqn{x_{k+1} - x_{*}} + (1-p) \cdot \sqn{w_k - x_{*}}.
\end{align}
Let \(\alpha = \frac{\eta\mu}{p}\), then using \cref{eq:c-22,eq:c-64} we have
\begin{align}
&\ecn[k]{x_{k+1} - x_{*}} + \alpha \ecn[k]{w_{k+1} - x_{*}} = (1+ \alpha p) \ecn[k]{x_{k+1} - x_{*}} + \alpha (1-p) \cdot \sqn{w_k - x_{*}} \nonumber \\
&\qquad \leq \frac{1+\alpha p}{1+2 \eta \mu} \left[ \sqn{x_k - x_{*}} + \eta^2 \delta^2 \sqn{w_k - x_{*}} \right] + \alpha (1-p) \cdot \sqn{w_k - x_{*}} + (1+\alpha p) \left( \frac{1+\eta\mu}{\eta\mu} \right)^2 b \nonumber \\
&\qquad = \frac{1+\alpha p}{1+2 \eta \mu} \left[ \sqn{x_k - x_{*}} + \eta^2 \delta^2 \sqn{w_k - x_{*}} \right] + \alpha (1-p) \cdot \sqn{w_k - x_{*}} + \frac{(1+\eta\mu)^3}{(\eta\mu)^2} b \nonumber \\
  &\qquad = \frac{1+\alpha p}{1+2 \eta \mu} \sqn{x_k - x_{*}} + \alpha \left( 1-p + \frac{\eta^{2}\delta^2 (1+\alpha p)}{\alpha (1+2\eta\mu)} \right) \sqn{w_k - x_{*}} + \frac{(1+\eta\mu)^3}{(\eta\mu)^2} b \nonumber \\
&\qquad = \frac{1+\eta \mu}{1+2\eta\mu} \sqn{x_k - x_{*}} + \alpha \left( 1 - p + \frac{p \eta \delta^2}{\mu} \frac{1+\eta \mu}{1 + 2 \eta \mu} \right) \sqn{w_k - x_{*}} + \frac{(1+\eta\mu)^3}{(\eta\mu)^2} b. \label{eq:c-65}
\end{align}
Note that by condition on the stepsize we have \(\eta \delta^2/\mu \leq \frac{1}{2}\), hence
\begin{align*}
\frac{\eta \delta^2}{\mu} \cdot \frac{1+\eta\mu}{1+2\eta\mu} \leq \frac{1}{2} \frac{1+\eta\mu}{1+2\eta\mu} \leq \frac{1}{2} \cdot 1 = \frac{1}{2}.
\end{align*}
Using this in the second term of \cref{eq:c-65} gives
\begin{align*}
&\ecn[k]{x_{k+1} - x_{*}} + \alpha \ecn[k]{w_{k+1} - x_{*}} \\
&\qquad \leq \frac{1+\eta \mu}{1+2\eta\mu} \sqn{x_k - x_{*}} + \alpha \left( 1 - p + \frac{p}{2} \right) \sqn{w_k - x_{*}} + \frac{(1+\eta\mu)^3}{(\eta\mu)^2} b \\
&\qquad = \frac{1+\eta \mu}{1+2\eta\mu} \sqn{x_k - x_{*}} + \alpha \left( 1 - \frac{p}{2} \right) \sqn{w_k - x_{*}} + \frac{(1+\eta\mu)^3}{(\eta\mu)^2} b \\
&\qquad \leq \max \left\{ \frac{1+\eta\mu}{1+2\eta\mu}, 1- \frac{p}{2} \right\} \left[ \sqn{x_k - x_{*}} + \alpha \sqn{w_k - x_{*}} \right] + \frac{(1+\eta\mu)^3}{(\eta\mu)^2} b.
\end{align*}
Define the Lyapunov function \(V_k = \sqn{x_k - x_{*}} + \frac{\eta\mu}{p} \sqn{w_k - x_{*}}\). Then the last equation can simply be written as
\begin{align*}
\ec[k]{V_{k+1}} \leq \max \left\{ \frac{1+\eta\mu}{1+2\eta\mu}, 1- \frac{p}{2} \right\} \cdot V_k + \frac{(1+\eta\mu)^3}{(\eta\mu)^2} b.
\end{align*}
Taking unconditional expectation gives
\begin{align*}
\ec{V_{k+1}} \leq \max \left\{ \frac{1+\eta\mu}{1+2\eta\mu}, 1- \frac{p}{2} \right\} \ec{V_k} + \frac{(1+\eta\mu)^3}{(\eta\mu)^2} b.
\end{align*}
Let \(\tau = \min \{ \frac{\eta\mu}{1+2\eta\mu}, \frac{p}{2} \}\), then \(\max \left\{ \frac{1+\eta\mu}{1+2\eta\mu}, 1- \frac{p}{2} \right\} = 1 - \tau\), and we get the simple recursion
\begin{align*}
\ec{V_{k+1}} \leq (1-\tau) \ec{V_k} + \frac{(1+\eta\mu)^3}{(\eta\mu)^2} b.
\end{align*}
Iterating this for \(k\) steps and using the formula for the sum of the geometric series gives for any \(k \leq K\),
\begin{align}
\ec{V_k} &\leq (1-\tau)^k \ec{V_0} + \frac{(1+\eta\mu)^3}{(\eta\mu)^2} b \sum_{t=0}^{k-1} (1-\tau)^t \nonumber \\
&\leq (1-\tau)^k \ec{V_0} + \frac{(1+\eta\mu)^3}{(\eta\mu)^2} b \sum_{t=0}^{\infty} (1-\tau)^t \nonumber \\
&= (1-\tau)^k \ec{V_0} + \frac{(1+\eta\mu)^3}{(\eta\mu)^2 \tau} b.\label{eq:c-67}
\end{align}
Now note that
\begin{align}\label{eq:c-68}
\ecn{x_k - x_{*}} \leq \ec{V_k}.
\end{align}
And by initialization we have \(w_0 = x_0\), hence
\begin{align}\label{eq:c-69}
\ec{V_0} = \sqn{x_0 - x_{*}} + \frac{\eta\mu}{p} \sqn{w_0 - x_{*}} = \left( 1 + \frac{\eta\mu}{p} \right) \sqn{x_0 - x_{*}}.
\end{align}
Plugging \cref{eq:c-68,eq:c-69} into \cref{eq:c-67} gives for any \(k \leq K\),
\begin{align}\label{eq:c-23}
\ecn{x_k - x_{*}} \leq \left( 1 + \frac{\eta\mu}{p} \right) (1-\tau)^k \sqn{x_0 - x_{*}} + \frac{(1+\eta\mu)^3}{(\eta\mu)^2 \tau} b.
\end{align}
For the second statement of the theorem, observe that by assumption on $b$ we can bound the right hand side of \cref{eq:c-23} as
\begin{align*}
\ecn{x_k - x_{*}} \leq \left( 1 + \frac{\eta\mu}{p} \right) (1-\tau)^k \sqn{x_0 - x_{*}} + \frac{\epsilon}{2}.
\end{align*}
Next, we use that for all $x \geq 0$ we have $1-x \leq \exp(-x)$, hence we have after $K$ iterations of SVRP that
\begin{align*}
\ecn{x_K - x_{*}} \leq \left( 1 + \frac{\eta \mu}{p} \right) \exp(-\tau K) \sqn{x_0 - x_{*}} + \frac{\epsilon}{2}.
\end{align*}
Thus if we run for the following number of iterations
\begin{align}\label{eq:c-8}
K \geq \frac{1}{\tau} \log \left( \frac{2 \sqn{x_0 - x_{*}} \left( 1 + \frac{\eta \mu}{p} \right)}{\epsilon} \right)
\end{align}
We get that $\ecn{x_K - x_{*}} \leq \frac{\epsilon}{2} + \frac{\epsilon}{2} = \epsilon$. Choose $\eta = \frac{\mu}{2 \delta^2}$ and $p = \frac{1}{M}$, then \cref{eq:c-8} reduces to
\begin{align*}
K \geq 2 \max \left\{ \frac{\delta^2}{\mu^2} + 1, M \right\} \log \left( \frac{2 \sqn{x_0 - x_{*}} \left( 1 + \frac{\mu^2 M}{2 \delta^2} \right)}{\epsilon} \right).
\end{align*}
This gives the second part of the theorem.
\end{proof}

\clearpage

\section{Client-Server formulations of the algorithms}\label{sec:client-serv-form}

The algorithms we gave (Algorithm~\ref{alg:spp} and Algorithm~\ref{alg:svrp}) are written in notation that does not make clear the exact role of server and client in the process. Below, we rewrite the algorithms to make clear the role of the clients and the server, in Algorithm~\ref{alg:spp-client-server} and Algorithm~\ref{alg:svrp}. Note that both formulations are exactly equivalent, and this is just for clarity. Both algorithms rely on evaluating the proximal operator approximately using local data: we give an example solution method using gradient descent in Algorithm~\ref{alg:client-method}. By standard convergence results for gradient descent, Algorithm~\ref{alg:client-method} halts in at most $\mathcal{O} \left( \frac{L+\frac{1}{\eta} }{\mu + \frac{1}{\eta} } \log \frac{1}{\epsilon}  \right)$ iterations. As discussed in the main text, we can also use accelerated gradient descent or other algorithms, this example is just for illustration.

\begin{algorithm}
\caption{Stochastic Proximal Point Method (SPPM) (Client-Server formulation)}\label{alg:spp-client-server}
\KwData{Stepsize \( \eta \), initialization $x_0$, number of steps \(K\), proximal solution accuracy $b$.}

\textbf{Server} communicates stepsize $\eta$ and desired proximal solution accuracy $b$ to all clients.

\For{$k=0, 1, 2, \ldots, K-1$}{
\textbf{Server} samples client $m \in [M]$.

\textbf{Server} sends model $x_k$ to client $m$.

\textbf{Client} $m$ evaluates the local proximal operator up to accuracy $b$ on its local data (e.g. using Algorithm~\ref{alg:client-method}):
\[ x_{k+1} \simeq \prox_{\eta f_{m_k}} (x_k). \]

\textbf{Client} $m$ sends the updated model $x_{k+1}$ to the server.
}
\end{algorithm}

\begin{algorithm}
\caption{Stochastic Variance-Reduced Proximal Point (SVRP) Method (Client-Server formulation)}\label{alg:svrp-client-server}
\KwData{Stepsize \( \eta \), initialization $x_0$, number of steps \(K\), communication probability \(p\), local solution accuracy $b$.}

Initialize \(w_0 = x_0\).

\textbf{Server} communicates stepsize $\eta$ and desired proximal solution accuracy $b$ to all clients.

\textbf{Server} communicates iterate $w_0$ to all clients.

\textbf{Every client} $m$ computes its local gradient $\nabla f_m (w_0)$ and sends it back to the server.

\textbf{Server} receives the local gradients and averages them: $\nabla f(w_0) = \frac{1}{M} \sum_{m=1}^M \nabla f_m (w_0)$.

\textbf{Server} sends $\nabla f(w_0)$ to all clients.

\For{$k=0, 1, 2, \ldots, K-1$}{
\textbf{Server} samples client $m_k$ uniformly at random from $[M]$. \\

\textbf{Server} sends model $x_k$ to client $m_k$.

\textbf{Client} $m_k$ computes its local gradient $\nabla f_{m_k} (w_k)$ with the (cached) model $w_k$ and cached full gradient $\nabla f(w_k)$ and sets
\begin{align*}
g_k = \nabla f(w_k) - \nabla f_{m_k} (w_k).
\end{align*} \\

\textbf{Client} $m_k$ computes a $b$-approximation of the proximal point operator on its local data (e.g. using Algorithm~\ref{alg:client-method}):
\begin{equation*}
x_{k+1} \simeq \prox_{\eta f_{m_k}} \left( x_k - \eta g_k \right).
\end{equation*} \\

\textbf{Client} $m_k$ sends $x_{k+1}$ to the server.

\textbf{Server} samples $c_k \sim \mathrm{Bernoulli}(p)$.

\If{$c_k = 1$}{
\textbf{Server} notifies clients of update, sets $w_{k+1} = x_{k+1}$ and sends it to all clients.

\textbf{Every client} $m$ caches the new model $w_{k+1}$ instead of the old model $w_k$, then computes its local gradient $\nabla f_m (w_{k+1})$ and sends it back to the server.

\textbf{Server} receives the local gradients and averages them: $\nabla f(w_{k+1}) = \frac{1}{M} \sum_{m=1}^M \nabla f_m (w_{k+1})$.

\textbf{Server} sends $\nabla f(w_{k+1})$ to all clients to cache.
}\Else{
  \textbf{Clients} keep their cached vector $w_k$ as it is, setting $w_{k+1} = w_k$ automatically.
}
}
\end{algorithm}

\begin{algorithm}
\caption{Client $m$ approximate proximal point evaluation using gradient descent.}\label{alg:client-method}

\KwData{Proximal operator argument $z$, proximal solution accuracy $b$, smoothnes constant $L$, strong convexity constant $\mu$.}

(\emph{This algorithm evaluates $\prox_{\eta f_m} (z)$ up to accuracy $b$ using gradient descent.})

Set the local stepsize $\beta = \frac{1}{L+ \frac{1}{\eta}}$.

Set the model $y_0 = 0$.

\For{$t=0, 1, 2, \ldots$}{

Compute the gradient

\begin{align*}
\Delta_t = \nabla f_m (y_t) + \frac{1}{\eta} (y_t - z).
\end{align*}

Update the model

\begin{align*}
y_{t+1} = y_t - \beta \Delta_t.
\end{align*}

\If{ the gradient norm is small $\sqn{\Delta_t} \leq b \left[ \mu + \frac{1}{\eta}  \right]^2$}{
  \textbf{Exit} and return $y_t$ as it is a $b$-approximate solution, since by strong convexity we have
\begin{align*}
\sqn{y_t-\prox_{\eta f_m} (z)} \leq \frac{\sqn{\Delta_t}}{(\mu + \frac{1}{\eta})^2} \leq b.
\end{align*}
}
}
\end{algorithm}

\end{document}